\documentclass{article}
\PassOptionsToPackage{numbers, compress}{natbib}

 \usepackage[preprint]{neurips_2026}

\usepackage[utf8]{inputenc} 
\usepackage[T1]{fontenc}    
\usepackage{hyperref}       
\usepackage{url}            
\usepackage{booktabs}       
\usepackage{amsfonts}       
\usepackage{nicefrac}       
\usepackage{microtype}      
\usepackage{xcolor}         
\usepackage{amsmath}
\usepackage{amssymb}
\usepackage{enumerate}
\usepackage{enumitem}
\usepackage{wrapfig}
\usepackage{booktabs}
\usepackage{array}
\usepackage{makecell}
\usepackage{caption}
\usepackage{subcaption}
\usepackage{graphicx}
\usepackage{adjustbox}
\usepackage{amsthm}

\usepackage{algorithm}
\usepackage{algpseudocode}
\usepackage{multirow}

\title{Isotonic Survival Regression: Calibrated Survival Distributions from Deep Cox Models}

\makeatletter
\renewcommand{\@fnsymbol}[1]{%
  \ifcase#1%
  \or *
  \or 1
  \or \dagger%
  \or \ddagger%
  \or \mathsection%
  \or \mathparagraph%
  \else \@arabic{#1}%
  \fi
}
\makeatother

\author{%
    Anchit Jain\thanks{Equal contribution. Correspondence to \texttt{\{ajain625,\,kzhang02\}@mit.edu}.} \textsuperscript{ }\thanks{EECS, MIT, Cambridge MA, USA.}
    \qquad Kevin Zhang\footnotemark[1] \textsuperscript{ }\footnotemark[2] \qquad Stephen Bates\footnotemark[2]
}

\DeclareMathOperator{\argmin}{\arg\min}
\algnewcommand{\LComment}[1]{\Statex \textbf{$\triangleright$ #1}}
\newtheorem{theorem}{Theorem}
\newcommand{\Sor}{S_{0,r}(t\mid r)}
\newtheorem{lemma}{Lemma}

\begin{document}

\maketitle

\vspace{-6pt}
\begin{abstract}

  Time-to-event data is widespread across the life sciences and engineering, but it is typically encountered together with censoring, which complicates the application of standard machine learning methods. Deep Cox models have emerged as a popular method for analyzing time-to-event data because they gracefully handle censoring and can be used with unstructured data such as clinical text reports, genomic sequences, and pathology images. However, their predicted survival probabilities are often poorly calibrated, thus limiting their practical utility. In this paper, we propose a novel post hoc calibration method for Deep Cox models that uses isotonic regression to refine predicted survival probabilities without affecting discriminative power. We establish favorable theoretical guarantees, including a double-robustness property and asymptotic calibration. Experiments on synthetic and real-world clinical data demonstrate the empirical effectiveness of our method.
\end{abstract}

\section{Introduction}
Survival analysis uses statistical methods to study the time until an event of interest occurs. One common objective is to predict the survival function, which gives the probability that the event time exceeds a specified time~\citep{Klein1997SurvivalAnalysis, Kalbfleisch2002FailureTimeData}. Accurate prediction of survival probabilities has many important applications in medicine~\citep{Knaus1993Sepsis, Fleming2000ClinicalTrials, Bachelot2000PrognosticFactors, Masciocchi2022FederatedCox}, epidemiology~\citep{Noone2017CancerIncidence, Spooner2020DementiaPrediction, Assael2002CysticFibrosis, Liu2022EpithelioidHemangio}, and engineering~\citep{Ruppert2021SetupTimes, Papathanasiou2023MachineFailure, Ahmed2025LeveragingSurvival}. 
For example, physicians may wish to predict the probability that a cancer patient remains alive and free of disease progression over time, also known as progression-free survival ~\citep{Luo2021Camrelizumab}.

The unique challenge in survival analysis is that event times are often only partially observed due to censoring~\citep{Leung1997CensoringIssues, Collett2014Modelling}. In this paper, we focus on \emph{right-censoring}, where for some individuals, we only observe that the event time exceeds the censoring time~\citep{Lagakos1979RightCensoring}. This occurs frequently in practice, for example when a patient is lost to follow-up or when a study ends before the event is observed~\citep{Flynn2012SurvivalAnalysis}. 

A wide range of methods have been developed for survival analysis, including parametric methods which assume specific distributional forms for the survival function~\citep{Beck1979CompetingRisks}, and non-parametric methods, such as the Kaplan-Meier estimator~\citep{KaplanMeier1958Incomplete}. 
The Cox proportional hazards model provides a popular semiparametric approach that combines a nonparametric baseline hazard with a covariate-dependent risk score~\citep{Cox1972Regression, Breslow1975ProportionalHazards}. Because of its interpretability and practicality, the Cox model remains one of the most widely used methods in survival analysis~\citep{Schober2018TortoiseHare,VanNess2023HeartFailure,Ahmed2007ColonCancer}. Other semiparametric methods, such as the accelerated failure time and proportional odds models, have also been proposed~\citep{Wei1992AFT,Murphy1997ProportionalOdds}.

Many modern applications of survival analysis now involve large amounts of high-dimensional and unstructured data, such as medical images~\citep{Wang2024PathologyFoundation,Lu2024ComputationalPathology}, genomic measurements~\citep{Hao2024SingleCell}, and electronic health records~\citep{Rasmy2021MedBERT} in clinical settings. This has motivated the use of deep learning methods, which can better capture complex relationships in these large and heterogeneous datasets. 
One prominent example is the \emph{Deep Cox model}~\citep{Katzman2018DeepSurv}, which extends the classical Cox model by replacing the linear risk predictor with a neural network to improve discriminative performance. Deep Cox models have since become a dominant method in contemporary survival analysis applications involving complex biomedical data~\cite{Kang2025BulkFormer, lu2021wsi_foundation, zhang2025standardizing, vaidya2025molecular, ding2025multimodal, chen2024uni, lu2024avisionlanguage, Lu_2023_CVPR, wang2024head_and_neck_survival, zhao2021bertsurv}.

For informed decision-making in safety-critical settings, good discrimination alone is not sufficient. Another desirable property of survival function estimators is \emph{calibration}, meaning that predicted probabilities reflect the true event frequencies~\cite{Guo2017Calibration}. 
Recent work has established that deep learning methods are often poorly calibrated in practice~\citep{Guo2017Calibration,Minderer2021RevisitingCalibration,Nixon2019MeasuringCalibration,Thulasidasan2019MixupCalibration}, a concern which extends to Deep Cox models~\citep{Kvamme2019TimeToEvent}. 
Isotonic regression offers a popular approach for post hoc calibration of deep learning models, because it learns the most flexible mapping from predicted to calibrated probabilities while preserving performance~\citep{Zadrozny2002IRCalibration}. However, extending such methods to survival analysis is challenging, because it requires targeting an entire distributional forecast while accounting for censoring.


\paragraph{Our contributions.} In this work, we study the problem of calibrating survival predictions from classical Cox and Deep Cox models. We propose a lightweight post hoc calibration procedure based on isotonic regression that uses estimated risk scores together with an initial survival and censoring model.
Our main contributions are as follows: (i) we introduce a novel post hoc method for calibrating survival probability predictions under right-censoring while preserving discriminative performance, (ii) we establish theoretical guarantees for the calibration and consistency of our algorithm, including a double-robustness result and (iii) we demonstrate empirically on synthetic and real-world datasets that our method improves calibration for Cox and Deep Cox models. 

\section{Preliminaries}
\label{sec:prelims}
We introduce the notation and preliminary background for survival analysis and calibration.

\paragraph{Survival Analysis.} Let $X \in \mathcal{X}$ denote the covariates for an individual, $T$ denote the time until the event of interest occurs, and $C$ denote the censoring time. 
Due to right-censoring, we only observe time $Y = \min(T,C)$, together with the event indicator $\delta = \mathbf{1}[T \leq C]$.
The goal of survival prediction is to estimate the conditional survival function $S(t \mid X) = \mathbb{P}(T > t \mid X)$, which gives the probability that the event time exceeds $t$ given $X$. The survival function can also be written using the hazard function $\lambda_{T \mid X}(t \mid X) = -\frac{d}{dt}\log S(t \mid X)$, which describes the instantaneous event rate at time $t$ among individuals who have survived up to that time. 
Similarly, we define the censoring function $G(t \mid X) = \mathbb{P}(C > t \mid X)$ as the probability that the censoring time exceeds $t$ given $X$. 

\paragraph{Cox and Deep Cox models.} The Cox model is a popular semi-parametric approach for learning the survival function. The model assumes that the hazard function can be represented in the form 
\[ \lambda(t \mid X) = \lambda_0(t) \exp(r_\theta(X)) \] where $\lambda_0(t)$ is the baseline hazard and $r_\theta(X)=\theta^\top X$ is a covariate-dependent risk score. Deep Cox models~\citep{Katzman2018DeepSurv} extend the classical Cox model by relaxing the linear form of the risk with a neural network $r_\theta: \mathcal{X} \to \mathbb{R}$. Intuitively, the model scales a shared baseline hazard, so higher-risk individuals experience the event at a faster rate. Typically, the parameters $\theta$ are estimated by maximizing the partial likelihood while the baseline hazard is estimated using Breslow's estimator~\citep{Breslow1972BreslowEstimator}.

A common way to evaluate Cox models is through discrimination, or the ability to correctly rank individuals by risk. This is typically quantified using Harrell's concordance index (C-index)~\citep{Harrell1996Cindex}, which estimates the frequency that among comparable pairs of samples, the one that experiences the event earlier is assigned a higher predicted risk.

\paragraph{Calibration.} 

We desire survival models that are well-calibrated, meaning that predicted probabilities are statistically reliable. To evaluate this, we focus on two related notions of calibration. The first is \emph{threshold calibration}~\citep{Henzi2021IsotonicDistributional}, which requires for any predicted survival probability $p$, we have $\smash{\mathbb{P}(T > t \mid \widehat{S}(t \mid X) = p) = p}$ almost surely. Another weaker notion is \emph{PIT calibration}, based on the probability integral transform (PIT). We say a survival function $\smash{\widehat{S}(t \mid X)}$ is PIT calibrated if the corresponding PIT values $\widehat{F}(T \mid X) = 1 - \widehat{S}(T \mid X)$ are uniformly distributed on $[0,1]$. 


\paragraph{Isotonic regression.} Given data $\{(x_i,y_i)\}_{i=1}^n$, one-dimensional isotonic regression estimates a monotonic function by solving $f_\text{IR} = \argmin_{f \in \mathcal{F}} \sum_{i=1}^n |f(x_i) - y_i|^2$, where $\mathcal{F}$ is the set of non-increasing functions. 
For classification tasks, isotonic regression is widely-used for post hoc calibration of machine learning probability estimates~\citep{Zadrozny2002IRCalibration}, as it preserves accuracy while learning a flexible recalibration mapping.
Algorithmically, this problem can be implemented efficiently using the pool adjacent violators algorithm (PAVA)~\citep{barlow1972IRBook}, which runs in linear time after sorting the observations.


\begin{figure}
    \centering
    \includegraphics[width=0.95\linewidth]{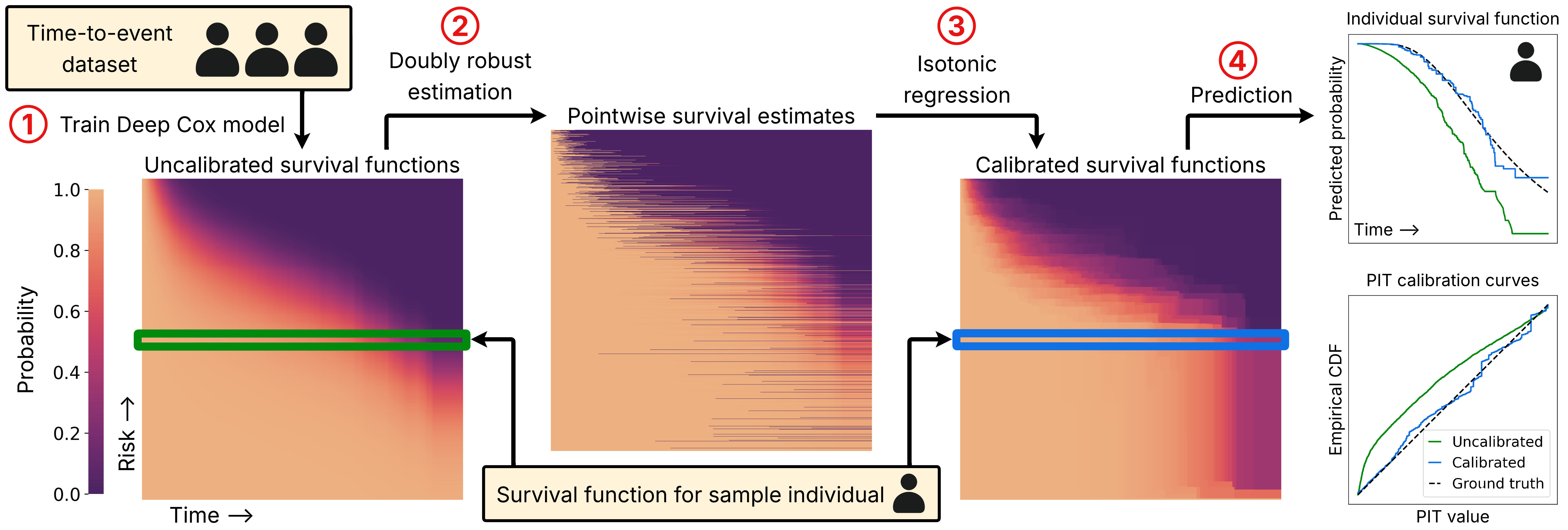}
    \caption{Our DR-ISR calibration method first uses uncalibrated estimates from an initial Deep Cox model to construct doubly robust pointwise survival estimates. We then use isotonic regression to project onto the set of valid survival functions that are monotonic in both time and risk. We demonstrate our approach improves the distributional accuracy and calibration of survival forecasts.}
    \label{fig:teaser}
\end{figure}

\section{Distributional regression for survival analysis}
\label{sec:estimator_construction}

We now introduce two methods for estimating the conditional survival function under right censoring using isotonic regression. We begin with a simpler reweighting estimator and then proceed to a more sophisticated doubly robust estimator. 

Suppose we have fit a Deep Cox model with learned risk score $r_\theta$.
Since the predicted risks determine the discriminative power of the model, our goal is to calibrate the survival function without altering the ordering induced by the learned risks. We therefore restrict our attention to estimators $S(t \mid X)$ that are non-increasing in time $t$ and in the predicted risk $r_\theta(X)$, so that individuals assigned higher risk have uniformly lower survival probabilities. We call this class of survival curves \emph{valid} and refer to our method as Isotonic Survival Regression (ISR).

Let $(X_i, Y_i, \delta_i)_{i=1}^n$ denote the observed data, and let $S_0$ and $G_0$ denote the true survival and censoring functions, respectively. Following the standard post hoc calibration paradigm, we split the data into a training set $\mathcal{D}_{\mathrm{train}}$ and a calibration set $\mathcal{D}_{\mathrm{cal}}$. The training set is used to fit an initial survival and censoring model, while the calibration set is reserved for constructing the final calibrated estimator.

\subsection{Reweighted isotonic survival regression (RW-ISR)}

We now turn to a relatively simple reweighting estimator. To this end, consider first predicting the binary outcome $\mathbf{1}[T > t]$.
In the absence of censoring, predicting the survival function can be done with isotonic regression. However, censoring causes some of the indicators to be unobserved. One popular approach to handling missing data is Inverse Probability of Censoring Weighting (IPCW), which assigns higher importance to samples that are more likely to be censored. To apply this approach, we fit a censoring model $\smash{\widehat{G}}$ on the training data, for example using a Cox model with Breslow's estimate for the baseline hazard function. With this in hand, we define the RW-ISR estimator as the solution to the following weighted isotonic regression problem:
\begin{equation} \label{eqn:hajek_optimization}
    \widehat{S}_\text{ISR}^\text{RW}(t \mid x) = f_t(r_\theta(x)), \quad \text{where } f_t = \underset{f:\text{non-increasing}}{\arg\min} \:\: \sum_{X_i \in \mathcal{D}_\text{cal}} \frac{\delta_i (f(r_\theta(X_i)) - \mathbf{1}[Y_i > t])^2}{\widehat{G}(Y_i \mid X_i)}.
\end{equation}


To extend this idea to estimate the entire survival curve, we can solve the isotonic regression for each time $t$. Since the target labels $\mathbf{1}[Y_i > t]$ are non-increasing across $t$, the resulting estimator is non-increasing in the risk as well. Thus, the estimator yields a valid survival curve. 


\paragraph{Shortcomings.} RW-ISR has two main weaknesses: (i) the estimator depends only on uncensored individuals, thereby discarding information from censored cases, and (ii) its calibration depends on the quality of the estimated censoring probabilities. In missing-data problems, parameter estimation under potentially misspecified weights is commonly addressed using \emph{doubly robust} estimators, which are more robust to imperfection in the censoring model and remain consistent if either the outcome model or the weight model is correctly specified. We turn to such an approach next.


\subsection{Doubly robust isotonic survival regression (DR-ISR)}

While RW-ISR provides a simple censoring-adjusted calibration procedure, we can improve upon it using a doubly robust alternative. To do so, we first construct pointwise estimates of the survival probabilities, which we refer to as \emph{pseudo-outcomes}, over the grid $(X,t) \in \mathcal{D}_\mathrm{cal} \times \mathcal{T}$, 
where $\mathcal{T}$ is a time grid, either user-specified or induced by the natural discretization of observed events (e.g., days). 
We then project these estimates onto the set of valid survival functions using isotonic regression, as shown in Figure~\ref{fig:teaser}. DR-ISR both uses more information from censored cases and is less sensitive to misspecification of the censoring model compared to RW-ISR (see Section~\ref{sec:theory} for formal results). 

\paragraph{Pseudo-outcome construction.} 
We construct pseudo-outcomes by estimating pointwise survival probabilities using the augmented IPCW estimator motivated by standard semiparametric theory~\citep{Robins1992DependentCensoring,Tsiatis2006Semiparametric,Farina2026DoublyRobustEfficient}.
The estimator starts with an initial estimate of the survival function $\smash{\widehat S}$ and censoring probability $\smash{\widehat G}$, then adds an augmentation term based on the observed calibration data. The correction is carefully constructed by measuring the error in the models $\smash{\widehat S}$ and $\smash{\widehat G}$ across time. This can be done in a way that is doubly robust, and thus is less sensitive to model misspecification.

To formulate the estimator, define the event counting process $\smash{dN_T(u) = \mathbf{1}[Y \in du, \delta = 1]}$, which indicates whether an uncensored event is observed at time $u$, and the estimated cumulative hazard function $\smash{\widehat{\Lambda}_{T \mid X} (u \mid X) = -\log \widehat{S} (u \mid X)}$. We then define the following event difference process $\smash{d\widehat{M}_{T \mid X} (u \mid X) = dN_T(u) - \mathbf{1}[Y \geq u] \, d\widehat{\Lambda}_{T \mid X} (u \mid X)}$, whose expectation tracks the difference between the frequency of observed events and the model-based probabilities. Note that $\smash{d\widehat{M}_{T \mid X}}$ is a random variable, while $\smash{\widehat{\Lambda}_{T \mid X}}$ is fixed. We propose the following doubly robust pointwise survival estimate as the pseudo-outcome:
\begin{equation}
\label{eq:dr_pseudo_label}
     \tilde{S}^\text{DR} \big(t \mid X; \widehat{S}, \widehat{G} \big) = \widehat{S}(t \mid X) - \int_0^t \frac{\widehat{S}(t \mid X)}{\widehat{S}(u \mid X) \widehat{G}(u \mid X)} \,d\widehat{M}_{T \mid X} (u \mid X).
 \end{equation}
The integral is a residual correction accumulated over times $u \leq t$ that adjusts the initial model  when the observed event process differs from the model predictions. Similar to RW-ISR, we estimate $\smash{\widehat S}$ and $\smash{\widehat G}$ on the training data, for example using a Cox model with Breslow's estimator of the baseline hazard function.
%
Note that the difference process is a discrete measure for piecewise constant survival estimators, including Cox models. Thus, the integral reduces to a discrete sum over times in $\mathcal{T}$.

\paragraph{From pseudo-outcomes to distributional forecasts.} First, the pseudo-outcomes $\tilde{S}^\text{DR} \big(t \mid X; \widehat{S}, \widehat{G} \big)$ are separately constructed for each $X \in \mathcal{D}_\text{cal}$ and $t \in \mathcal{T}$. Although the conditional expectation of the pseudo-outcome does target $S_0(t \mid X)$ (see Theorem~\ref{thm:dr}), they do not directly yield satisfactory conditional survival probabilities because each is constructed with only a single observation.
As such, the pseudo-outcomes can fluctuate rapidly across nearby covariate values---see Figure~\ref{fig:teaser}. 

To obtain distributional forecasts, we next regress these pseudo-outcomes on the risk, thereby sharing information across similar covariates. This smoothing step improves statistical efficiency and stabilizes the otherwise noisy pointwise estimates. Moreover, the pseudo-outcomes do not necessarily define a valid family of survival functions and may have worse discriminative performance than the initial survival model. By projecting the pseudo-outcomes onto the class of survival functions that are monotonic in both time and risk, we address this issue as well. 





Let $\mathcal{F}$ denote the class of functions $f:\mathbb{R}_{\geq 0}\times\mathbb{R}\to[0,1]$, where the two arguments correspond to time and risk. Let $\mathcal{F}_\leq \subset \mathcal{F}$ denote the subset of functions that are non-increasing in both coordinates. We define the calibrated survival estimate as $\smash{\widehat{S}_{\mathrm{ISR}}^\mathrm{DR}(t \mid x) = f(t,r_\theta(x))}$, where $f$ solves
\begin{equation} \label{eqn:DR_estimation}
     f = \underset{f \in \mathcal{F}_\leq}{\arg\min} \:\: \sum_{t \in \mathcal{T}} \: \sum_{X_i \in \mathcal{D}_\text{cal}} \left[ \left( f(t, r_\theta(X_i)) - \tilde{S}^\text{DR} \big(t \mid X_i; \widehat{S}, \widehat{G} \big) \right)^2 \right].
 \end{equation}
While the two-dimensional isotonic regression optimization in Equation~\ref{eqn:DR_estimation} is convex, solving it with a generic convex optimization solver requires $\mathcal{O}(n|\mathcal{T}|$) constraints, which can be computationally burdensome. Instead, we apply Dykstra's projection algorithm~\citep{Dykstra1982alternatingprojectionsPAVA}, which exploits the geometry of $\mathcal{F}_\leq$. The latter can be written as the intersection of two convex sets: non-increasing functions in time and non-increasing functions in risk. We then solve the optimization by alternating between projections onto the two sets. Each projection reduces to a collection of one-dimensional isotonic regression problems along time or risk, each of which can be computed efficiently using PAVA~\citep{barlow1972IRBook}. 

Putting both steps together, our calibration procedure is outlined in Algorithm~\ref{alg:dr}. The time complexity is $\mathcal{O}(T_\mathrm{Dyk} \cdot n|\mathcal{T}|)$, where $T_\mathrm{Dyk}$ is the number of Dykstra's iterations. In practice, PAVA is extremely fast which makes DR-ISR a lightweight post hoc calibration procedure that is practical to deploy on top of modern Deep Cox models. We also highlight that the calibration procedure does not require tuning any hyperparameters, making it especially straightforward to deploy. 



\begin{algorithm}[t]
\caption{DR-ISR calibration algorithm} \label{alg:dr}
\begin{algorithmic}[1]
\Require Survival $\widehat{S}$, censoring $\widehat{G}$, and risk $r_\theta$ models from an initial Deep Cox model,
\Statex \hspace{2.2em} $\mathcal{D}_{\text{cal}} = \{(X_i, Y_i, \delta_i)\}_{i=1}^{n_{\text{cal}}}$ sorted by increasing risk, time grid $\mathcal{T} = \{t_1, \ldots, t_K\}$

\For{$i \in [K]$ and $j \in [n_\text{cal}]$} \Comment{Doubly robust point estimation}
    \State $d\widehat{\Lambda}_{T \mid X} (t_i \mid X_j) \gets 1 - \widehat{S}(t_i \mid X_j) / \widehat{S}(t_{i-1} \mid X_j)$ \Comment{Define $\widehat{S}(t_0 \mid X_j) = 1$}
    \State $d\widehat{M}_{T \mid X} (t_i \mid X_j) \gets \mathbf{1}[Y_j = t_i, \delta_j = 1] - \mathbf{1}[Y_j > t_i] \, d\widehat{\Lambda}_{T \mid X} (t_i \mid X_j)$

    \State $\tilde{S}^\mathrm{DR}[i,j] \gets \widehat{S}(t_i \mid X_j) - \sum_{k=1}^i \frac{\widehat{S}(t_i \mid X_j)}{\widehat{S}(t_k \mid X_j) \widehat{G}(t_k \mid X_j)} d\widehat{M}_{T \mid X} (t_k \mid X_j)$ \Comment{Equation~\ref{eq:dr_pseudo_label}}
\EndFor


\State Initialize $S \gets \tilde{S}_{\text{DR}}$ and residuals $\epsilon_{\text{time}} \gets \mathbf{0}^{n_{\text{cal}} \times |\mathcal{T}|}$, $\epsilon_{\text{risk}} \gets \mathbf{0}^{n_{\text{cal}} \times |\mathcal{T}|}$ \Comment{Dykstra's algorithm}

\While{not converged}
    \State $U \gets S + \epsilon_{\text{risk}}, \quad S[:, j] \gets \texttt{PAVA}(U[:, j]) \:\:\: \forall j, \quad \epsilon_{\text{risk}} \gets U - S$ \Comment{Monotonicity in risk}
    \State $V \gets S + \epsilon_{\text{time}}, \quad S[i, :] \gets \texttt{PAVA}(V[i, :]) \:\:\: \forall i, \quad \epsilon_{\text{time}} \gets V - S$ \Comment{Monotonicity in time}
    
\EndWhile
 \State Interpolate $S$ to form 2D function $\hat{f}$, then make predictions $\widehat{S}_{\mathrm{ISR}}^\mathrm{DR}(t \mid X_{\text{test}}) \gets \hat{f}(t, r_\theta(X_{\text{test}}))$
 \end{algorithmic}
\end{algorithm}

\section{Theoretical Guarantees}
\label{sec:theory}
We next describe the theoretical properties of DR-ISR. First, we formalize the double-robustness property of the pseudo-outcomes (Theorem~\ref{thm:dr}). We then show that this property is preserved by the isotonic regression projection step, so that our end-to-end procedure remains consistent if either the event model or the censoring model converges locally uniformly (Theorem~\ref{thm:consistency}). Lastly, we show that even when the survival curves are not isotonic in the risk score, the procedure still yields calibrated survival curves (Theorem~\ref{thm:calibration}). We will make the following key assumptions.

 \begin{enumerate}[label=A\arabic*., ref=A\arabic*]
     \item \label{ass:independent_censoring} The censoring and event times are conditionally independent, i.e. $T \perp\!\!\!\perp C \mid X$.
     \item \label{ass:risk_correctness} The risk scores from the uncalibrated model $r_\theta$ induce a monotone ordering over survival functions, i.e.
     \[ r(x) \geq r(x') \implies \mathbb{P}(T > t \mid X = x) \leq \mathbb{P}(T > t \mid X = x'), \quad \text{for all } t \geq 0. \]
 \end{enumerate}

Assumption~\ref{ass:independent_censoring} is a standard identifiability condition in survival analysis, 
while assumption~\ref{ass:risk_correctness} imposes the additional structure that survival is monotone in this score. 

We first make the double-robustness property of the point estimates from Equation~\ref{eq:dr_pseudo_label} precise. Let $\Lambda_{C \mid X} (u \mid X) = -\log G(u \mid X)$ be the censoring analog of the cumulative hazard function.

\begin{theorem}[Pseudo-outcome Double-robustness]
\label{thm:dr}
    Suppose that assumption~\ref{ass:independent_censoring} holds and we also have the mild regularity condition~\ref{ass:positivity} from Appendix~\ref{app:sec:proofs}. Define the bias of the pseudo-outcomes as $B \big(t, x; \widehat{S}, \widehat{G} \big) = |\mathbb{E}[ \tilde{S}^\mathrm{DR} \big(t \mid X; \widehat{S}, \widehat{G} \big) \mid X = x ] - S_0(t\mid x)|$. Then, we have that $B \big(t, x; \widehat{S}, \widehat{G} \big) \leq \min\{B_1 \big( t,x;\widehat{S}, \widehat{G} \big), B_2 \big( t,x;\widehat{S}, \widehat{G} \big)\}$, where the bounds $B_1$ and $B_2$ admit the following mixed-product form,
\begin{equation} \label{eqn:dr_bias}
     B_1 \big(t, x; \widehat{S}, \widehat{G} \big) = c_1(t) \int_0^t \left|\frac{1}{\widehat{G}(u\mid x)} - \frac{1}{G_0(u\mid x)\vphantom{\widehat{G}}}\right|\left|d\Lambda_{0,T\mid X}(u\mid x) - d\widehat{\Lambda}_{T\mid X}(u\mid x)\right|,
\end{equation}
\begin{equation} \label{eqn:dr_bias2}
     B_2 \big(t, x; \widehat{S}, \widehat{G} \big) = c_2(t) \int_0^t \left|\frac{\widehat{S}(t\mid x)}{\widehat{S}(u\mid x)} - \frac{S_0(t\mid x)}{S_0(u\mid x)\vphantom{\widehat{S}}}\right|\left|d\Lambda_{0,C\mid X}(u\mid x) - d\widehat{\Lambda}_{C\mid X}(u\mid x)\right|.
\end{equation}
Here, $c_1(t)$ and $c_2(t)$ are finite positive constants depending on $t$ as defined in~\ref{ass:positivity}. In particular, if either $\widehat{S}=S_0$ or $\widehat{G}=G_0$, then the DR-ISR pseudo-outcome is conditionally unbiased for $S_0(t \mid x)$. 
\end{theorem}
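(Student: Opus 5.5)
We compute the conditional expectation of the pseudo-outcome given $X=x$ directly, using assumption~\ref{ass:independent_censoring}. Throughout we treat $\widehat S,\widehat G$ as fixed (they are fit on $\mathcal D_{\mathrm{train}}$, independent of the calibration point). Under \ref{ass:independent_censoring}, the at-risk probability factorizes as $\mathbb{P}(Y\ge u\mid X=x)=S_0(u^-\mid x)G_0(u^-\mid x)$. The compensator of the observed event process is $\mathbb{E}[dN_T(u)\mid X=x]=S_0(u^-\mid x)G_0(u^-\mid x)\,d\Lambda_{0,T\mid X}(u\mid x)$. Hence
\[
\mathbb{E}\big[d\widehat M_{T\mid X}(u\mid X)\mid X=x\big]=S_0(u^-\mid x)G_0(u^-\mid x)\big(d\Lambda_{0,T\mid X}(u\mid x)-d\widehat\Lambda_{T\mid X}(u\mid x)\big).
\]
Since the integrand in Equation~\ref{eq:dr_pseudo_label} is deterministic given $x$, this gives the exact bias identity
\[
\mathbb{E}[\tilde S^{\mathrm{DR}}\mid X=x]-S_0(t\mid x)=\widehat S(t\mid x)-S_0(t\mid x)-\int_0^t \frac{\widehat S(t\mid x)\,S_0(u^-\mid x)G_0(u^-\mid x)}{\widehat S(u\mid x)\widehat G(u\mid x)}\big(d\Lambda_0-d\widehat\Lambda\big)(u\mid x).
\]
I would handle left limits versus values at $u$ carefully (or work on the discrete grid $\mathcal T$, as in Algorithm~\ref{alg:dr}), since the exact Duhamel identity below requires consistent conventions.

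The key tool is the Duhamel (product-integral) equation:
\[
\widehat S(t\mid x)-S_0(t\mid x)=\int_0^t \frac{\widehat S(t\mid x)S_0(u^-\mid x)}{\widehat S(u\mid x)}\big(d\Lambda_{0,T\mid X}-d\widehat\Lambda_{T\mid X}\big)(u\mid x).
\]
Substituting it collapses the bias to
\[
\int_0^t \frac{\widehat S(t\mid x)S_0(u^-\mid x)}{\widehat S(u\mid x)}\,G_0(u^-\mid x)\Big(\frac1{G_0(u^-\mid x)}-\frac1{\widehat G(u\mid x)}\Big)(d\Lambda_0-d\widehat\Lambda)(u\mid x).
\]
Bounding the prefactor $\widehat S(t)S_0(u^-)G_0(u^-)/\widehat S(u)$ uniformly on $[0,t]$ via the positivity/regularity condition~\ref{ass:positivity} gives $c_1(t)$. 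Pulling absolute values inside then yields $B_1$. This bound vanishes when $\widehat G=G_0$, and also when $\widehat\Lambda_T=\Lambda_{0,T}$, i.e.\ $\widehat S=S_0$.

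For $B_2$ the roles are swapped: we want a representation in terms of the censoring hazard error. I would rewrite the bias identity by splitting $d\Lambda_0-d\widehat\Lambda$ differently. Define $h(u)=\widehat S(t)/\widehat S(u)-S_0(t)/S_0(u)$ and note that
\[
\int_0^t \frac{S_0(t)}{S_0(u)}\,S_0(u^-)\,d\Lambda_0(u)\quad\text{and}\quad \int_0^t \frac{S_0(t)}{S_0(u)}\,\frac{S_0(u^-)}{\widehat G(u)}\,G_0(u^-)\,(d\Lambda_0-d\widehat\Lambda)(u)
\]
can be rearranged by integration by parts in $u$. The function $u\mapsto \widehat S(t)/\widehat S(u)$ satisfies $d_u[\widehat S(t)/\widehat S(u)]=(\widehat S(t)/\widehat S(u))\,d\widehat\Lambda(u)$. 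Similarly, $d_u[G_0(u)/\widehat G(u)]$ produces $(d\widehat\Lambda_C-d\Lambda_{0,C})$ times $G_0/\widehat G$. Applying this to $\int_0^t \frac{\widehat S(t)}{\widehat S(u)}\frac{G_0(u^-)}{\widehat G(u)}S_0(u^-)(d\Lambda_0-d\widehat\Lambda)$, where $S_0(u^-)(d\Lambda_0-d\widehat\Lambda)$ is expressed as a differential, the boundary terms cancel the $\widehat S(t)-S_0(t)$ term. The remainder is
\[
\int_0^t h(u)\,S_0(u^-)\frac{G_0(u^-)}{\widehat G(u)}\,(d\Lambda_{0,C}-d\widehat\Lambda_C)(u\mid x).
\]
Bounding $S_0G_0/\widehat G$ by $c_2(t)$ via~\ref{ass:positivity} gives $B_2$. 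This bound vanishes when $\widehat S=S_0$ (since then $h\equiv0$) or when $\widehat G=G_0$. Taking the minimum of the two valid bounds proves the theorem, and the unbiasedness claim follows immediately.

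The main obstacle is the second representation. Getting the integration by parts right, with correct left limits and jump terms for discrete (Breslow-type) $\widehat S,\widehat G$, so that the boundary terms exactly cancel, is delicate. If it becomes messy, I would first verify it on the discrete grid $\mathcal T$ via a telescoping-sum (summation by parts) argument, then pass to general measures.
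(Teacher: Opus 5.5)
Your proposal is correct. For $B_1$ it is exactly the paper's argument: compute $\mathbb{E}[d\widehat M_{T\mid X}(u\mid X)\mid X=x]=S_0G_0\,(d\Lambda_{0,T\mid X}-d\widehat\Lambda_{T\mid X})$, substitute, and absorb $\widehat S(t\mid x)-S_0(t\mid x)$ using the Duhamel identity (the paper's Lemma~\ref{lemma:ratio_to_integral}).

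For $B_2$ you take a genuinely different route. The paper first proves a pathwise lemma identifying the event-martingale pseudo-outcome with the Tsiatis AIPCW statistic, written in terms of the censoring martingale $d\widehat M_{C\mid X}$. That lemma uses the product rule for $H(u)\mathbf{1}[Y\ge u]$ and the tail identity for $\int_t^\infty d\widehat M_{C\mid X}/\widehat G$. The paper then takes conditional expectations of that form, which needs the censoring compensator. This gives $S_0(t)\{G_0(t)/\widehat G(t)-1\}+\int_0^t \frac{\widehat S(t)S_0G_0}{\widehat S\,\widehat G}(d\Lambda_{0,C\mid X}-d\widehat\Lambda_{C\mid X})$, and the paper finishes with Duhamel for $G$.

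You never leave the deterministic bias functional obtained from the event form, and your integration by parts is valid. The cleanest way to write it is with $F(u)=h(u)S_0(u\mid x)G_0(u\mid x)/\widehat G(u\mid x)$. Then $F(t)=0$, $F(0)=\widehat S(t\mid x)-S_0(t\mid x)$, and
\[
dF=-\frac{\widehat S(t)S_0G_0}{\widehat S\,\widehat G}\,(d\Lambda_{0,T\mid X}-d\widehat\Lambda_{T\mid X})-h\,S_0\frac{G_0}{\widehat G}\,(d\Lambda_{0,C\mid X}-d\widehat\Lambda_{C\mid X}).
\]
Integrating over $[0,t]$ equates the event-form bias with $\int_0^t h\,S_0\frac{G_0}{\widehat G}(d\Lambda_{0,C\mid X}-d\widehat\Lambda_{C\mid X})$, which is the paper's final expression.

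Your sketch suggests integrating by parts only against $G_0/\widehat G$. In that case the boundary term cancels $\widehat S(t)$ but not $S_0(t)$: it leaves $S_0(t)\{G_0(t)/\widehat G(t)-1\}$, the paper's intermediate expression. You would still need to fold that in via Duhamel for $G$; the choice of $F$ above does both steps at once.

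What each route buys: yours is shorter, needs only the event compensator, and shows that $B_1$ and $B_2$ are two rewritings of one deterministic function. The paper's route additionally identifies the pseudo-outcome pathwise with the efficient AIPCW estimator, which is what supports the semiparametric-efficiency interpretation.

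On left limits: the paper simply assumes $S_0,G_0,\widehat S,\widehat G$ are continuous, and under that convention all of your $u^-$ bookkeeping disappears. If you do want a discrete $\widehat G$, note a real problem with the weight $\widehat G(u\mid x)$ used in Equation~\ref{eq:dr_pseudo_label}. Your own $B_1$ integrand $G_0(u^-)\{1/G_0(u^-)-1/\widehat G(u)\}$ does not vanish at jumps of $G_0$ when $\widehat G=G_0$. Exact double robustness there requires the left-limit weight $\widehat G(u^-\mid x)$, not merely careful accounting.
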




The double-robustness property similarly extends to uniform consistency of the survival function estimator under locally uniformly consistent estimation of either $S_0$ or $G_0$.

\begin{theorem}[Survival Function Double-robustness]
\label{thm:consistency}
Suppose that assumptions~\ref{ass:independent_censoring}--\ref{ass:risk_correctness} hold and we also have the mild regularity conditions~\ref{ass:positivity}--~\ref{ass:continous_survival} from Appendix~\ref{app:sec:proofs}. Then, if either $\smash{\widehat{S}} \to S_0$ or $\smash{\widehat{G}} \to G_0$ locally uniformly, i.e. for all times $t \in \mathcal{T}$ and covariates $x$, we have
\[ \sup_{u \in [0,t]} |S(u \mid x) - S_0(u \mid x)| \stackrel{p}{\longrightarrow} 0, \quad \text{or } \quad \sup_{u \in [0,t]} |G(u \mid x) - G_0(u \mid x)| \stackrel{p}{\longrightarrow} 0, \]
then
\[ \sup_{x \in \mathcal{X}'} |\widehat{S}^{\mathrm{DR}}_{\mathrm{ISR}}(t \mid x)-S_0(t\mid x)| \stackrel{p}{\longrightarrow} 0, \quad \text{for all } t \in \mathcal{T}, \]
where $\mathcal{X}' = \{x \in \mathcal{X} : r(x) \in \mathrm{int}(r(\mathcal{X}))\}$.
\end{theorem}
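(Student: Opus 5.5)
Our plan is to reduce Theorem~\ref{thm:consistency} to a consistency statement for isotonic regression on noisy but asymptotically unbiased responses, with Theorem~\ref{thm:dr} supplying the vanishing bias. Fix $t\in\mathcal{T}$ and write $R=r_\theta(X)$. We first isolate the target: by Assumption~\ref{ass:risk_correctness}, $S_0(t\mid x)$ depends on $x$ only through $r(x)$ (two covariates with equal risk must have equal survival, using both directions of the implication). Hence $S_0(t\mid x)=g_t(r(x))$ with $g_t$ non-increasing. Since $S_0$ is also non-increasing in $t$, the map $(t,r)\mapsto g_t(r)$ lies in $\mathcal{F}_\leq$. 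This is the population object the two-dimensional isotonic projection should recover.

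We then decompose each pseudo-outcome as $\tilde S^{\mathrm{DR}}(t\mid X_i;\widehat S,\widehat G)=g_t(R_i)+b_n(t,X_i)+\varepsilon_i(t)$. Here $b_n(t,x)=\mathbb{E}[\tilde S^{\mathrm{DR}}\mid X=x]-S_0(t\mid x)$ is the bias, computed conditionally on the training split, so $\widehat S,\widehat G$ are fixed. The term $\varepsilon_i(t)$ is mean-zero given $X_i$ and the training data, and is independent across calibration points. Under the positivity condition~\ref{ass:positivity}, $\widehat S$ and $\widehat G$ are bounded away from zero on $[0,t]$, so $\varepsilon_i$ is uniformly bounded (sub-Gaussian).

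Theorem~\ref{thm:dr} gives $|b_n|\le\min\{B_1,B_2\}$. If $\widehat S\to S_0$ locally uniformly, we bound $B_1$. The factor $|1/\widehat G-1/G_0|$ is bounded by positivity. The factor $\int_0^t|d\Lambda_0-d\widehat\Lambda|$ must tend to $0$; we would control it via integration by parts and the uniform convergence of $\widehat S$, using the continuity assumption~\ref{ass:continous_survival} and a bounded-variation condition. Symmetrically, if $\widehat G\to G_0$ we bound $B_2$. This gives $\sup_x|b_n(t,x)|\stackrel{p}{\to}0$, where we must upgrade the pointwise ``locally uniform'' convergence to uniform convergence in $x$. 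We expect to argue this through a uniformity clause in the appendix conditions, or by integrating the bias against $P_X$, since only an $L_2$ or average bias is needed below.

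The remaining and main step is a uniform consistency result for the two-dimensional isotonic least squares with perturbed responses. We plan to use the min--max (or max--min) characterization of the isotonic LSE. In the risk coordinate, $\hat f(t,r)$ lies between $\max_{a\le r}\min_{b\ge r}$ of block averages of responses over rectangles in $(t,R)$. Write each block average as the corresponding average of $g$, plus the average bias (which is at most $\sup|b_n|\to0$), plus the average noise. The noise term is controlled uniformly over all blocks containing at least $k_n\to\infty$ points by a maximal inequality (union bound over $O(n^2|\mathcal{T}|^2)$ blocks with Hoeffding). Since $\mathcal{T}$ is finite, the time dimension adds only finitely many blocks. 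For an interior point $r(x)\in\mathrm{int}(r(\mathcal{X}))$, we choose blocks shrinking to $r$ but still containing many points. Monotonicity of $g_t$ and continuity from~\ref{ass:continous_survival} (continuity of $g_t$ in $r$) then squeeze $\hat f(t,r)$ to $g_t(r)$. Uniformity over compacts of the interior, extended by monotonicity to all of $\mathcal{X}'$, follows by a standard Pólya-type argument: pointwise convergence of monotone functions to a continuous monotone limit on a grid implies uniform convergence.

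The hard part is the two-dimensional min--max formula, because projecting onto $\mathcal{F}_\leq$ is not separable. Our fallback is to avoid it entirely. First, bound $\|\hat f-g\|_{L_2(P_n)}$ via the convexity and projection property: $\|\hat f-g\|_n^2\le\langle \hat f-g,\ b+\varepsilon\rangle_n$. Then control the noise by the entropy of bounded bivariate monotone classes, and finally convert $L_2$ consistency into sup-norm consistency on interior points using monotonicity and continuity of $g$.
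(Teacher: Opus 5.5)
\textbf{The bias step fails as written.} You pair the two bounds of Theorem~\ref{thm:dr} the wrong way round. When $\widehat S\to S_0$ you propose to use $B_1$ and to show $\int_0^t|d\Lambda_{0,T\mid X}(u\mid x)-d\widehat\Lambda_{T\mid X}(u\mid x)|\to 0$. That integral is the total-variation distance between two hazard measures, and uniform convergence of $\widehat S$ does not make it small. In the paper's own setting, $\widehat S$ is a Breslow step function and $S_0$ is continuous, so the two measures are mutually singular. The integral then equals $\Lambda_{0,T\mid X}(t\mid x)+\widehat\Lambda_{T\mid X}(t\mid x)$, which does not vanish. Integration by parts cannot help once the absolute value sits inside the integral, and \ref{ass:continous_survival} is continuity in $r$, not in $u$. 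The same objection applies to your use of $B_2$ when $\widehat G\to G_0$.

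The pairing that works is the reverse, as in Lemma~\ref{lemma:bias_convergence}; there the measure factor only needs to be bounded.
\begin{itemize}
\item If $\widehat G\to G_0$, use $B_1$. Here $\sup_{u\le t}|1/\widehat G-1/G_0|\le c_2(t)^2\sup_{u\le t}|\widehat G-G_0|\to 0$, while \ref{ass:positivity} gives $\int_0^t|d\Lambda_{0,T\mid X}-d\widehat\Lambda_{T\mid X}|\le \Lambda_{0,T\mid X}(t\mid x)+\widehat\Lambda_{T\mid X}(t\mid x)\le 2\log c_1(t)$.
\item If $\widehat S\to S_0$, use $B_2$. Here $|\widehat S(t\mid x)/\widehat S(u\mid x)-S_0(t\mid x)/S_0(u\mid x)|\le 2c_1(t)^2\sup_{v\le t}|\widehat S(v\mid x)-S_0(v\mid x)|$, and the censoring-hazard factor is at most $2\log c_2(t)$.
\end{itemize}
Your integration-by-parts idea can be made to work, but only on the signed bias integral before absolute values are taken; its integrand has total variation bounded via \ref{ass:positivity}.

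The hypothesis is also pointwise in $x$, so you get $b_n(t,x)\stackrel{p}{\to}0$ for each $x$, not $\sup_x|b_n|\to 0$. Bounded and dominated convergence then give $\frac1n\sum_i|b_n(t,X_i)|\stackrel{p}{\to}0$, which is exactly what your fallback needs. Your min--max route with blocks shrinking to $r$ would need the unavailable sup bound, so it should use blocks of fixed $P$-mass instead.

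\textbf{Comparison with the paper, and the boundary.} Once repaired, your fallback is essentially the paper's proof. Pseudo-outcomes split into truth, conditional bias and bounded independent noise. The noise is controlled uniformly over bounded monotone functions by bracketing entropy, and a projection inequality gives $L_2$ consistency. Monotonicity together with \ref{ass:continuous_density} and \ref{ass:continous_survival} then upgrades this to pointwise and grid-based uniform convergence. Your version is cleaner in one respect. The basic inequality $\|\hat f-g\|_n^2\le\langle\hat f-g,b+\varepsilon\rangle_n$ applies directly to the joint projection onto the convex set $\mathcal F_\leq$, which contains $g$ under \ref{ass:risk_correctness}. The paper instead treats each $t$ separately. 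It reaches the two-dimensional fit only by showing the population per-time projections are ordered in $t$ and asserting that Dykstra's algorithm then stops after one pass. In your min--max route, note that for the two-dimensional order the relevant sets are staircase upper/lower sets, not rectangles; there are at most $(n+1)^{|\mathcal T|}$ of them, so a union bound still works.

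Your last step, extending uniformity from compacts to all of $\mathcal X'$ ``by monotonicity'', fails, because isotonic fits do not converge up to the edge of the design. For instance, with a single time $t$, the fitted value at the smallest calibration risk satisfies $\hat f(R_{(1)})\ge\min\{1,\tilde S^{\mathrm{DR}}(t\mid X_{(1)})\}$. This is a single non-degenerate pseudo-outcome, yet $R_{(1)}\in\mathrm{int}(r(\mathcal X))$ almost surely. What your argument delivers is uniform convergence over $\{x: r(x)\in[a+\eta,b-\eta]\}$ for each $\eta>0$. The paper's proof delivers no more than this: its P\'olya step uses convergence at the grid endpoints $u_0=a$ and $u_M=b$, which was never established.
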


Theorem~\ref{thm:consistency} provides an asymptotic conditional calibration guarantee since the estimator converges to the true survival function. In practice, assumption ~\ref{ass:risk_correctness} may not hold because the risk scores are estimates from the model we wish to calibrate and may be incorrect. Nonetheless, DR-ISR is still well-calibrated in the following sense.

\begin{theorem}[Calibration under Risk Misspecification]
\label{thm:calibration}
Suppose all assumptions from Theorem~\ref{thm:consistency} hold except ~\ref{ass:risk_correctness}. Define $\smash{\widehat{S}^\infty_\mathrm{DR\text{-}ISR}(t \mid x)  = \lim_{n\rightarrow\infty}\widehat{S}^{\mathrm{DR}}_{\mathrm{ISR}}(t \mid x)}$ as the pointwise limit of the DR-ISR estimator. 
If either $\widehat{S} \rightarrow S_0$ or $\widehat{G} \rightarrow G_0$ locally uniformly, then for all $p$ in the range of $\widehat{S}^\infty_\mathrm{DR\text{-}ISR}$,
\begin{equation}
    \mathbb{P} \big( T>t \ \big| \ \widehat{S}^\infty_\mathrm{DR\text{-}ISR} (t,X) = p \big) = p, \qquad \text{for all } t \in \mathcal{T}.
\end{equation}
\end{theorem}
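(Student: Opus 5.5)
The plan is to identify the population-level limit of the DR-ISR procedure as an isotonic $L^2$ projection, and then read calibration off the first-order optimality conditions of that projection.

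\textbf{Step 1 (population pseudo-outcomes).} By Theorem~\ref{thm:dr}, if either $\widehat{S}\to S_0$ or $\widehat{G}\to G_0$ locally uniformly, the conditional bias of the pseudo-outcome vanishes in the limit. Concretely, one factor in the mixed product $B_1$ or $B_2$ tends to zero uniformly on $[0,t]$, and the other factor stays bounded by \ref{ass:positivity}. Hence
\[
\mathbb{E}\bigl[\tilde{S}^{\mathrm{DR}}(t\mid X;\widehat S,\widehat G)\mid X\bigr]\to S_0(t\mid X)=\mathbb{P}(T>t\mid X).
\]
So in the limit the pseudo-outcomes act as a surrogate response whose regression on $X$ is the event indicator's regression, even under censoring. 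This step requires no use of \ref{ass:risk_correctness}.

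\textbf{Step 2 (identify the limit).} The estimator $\widehat{S}^{\mathrm{DR}}_{\mathrm{ISR}}$ is the empirical least-squares projection of the pseudo-outcomes onto $\mathcal{F}_\leq$, evaluated at $(t,r_\theta(X))$. I will argue, reusing the uniform-convergence machinery from the proof of Theorem~\ref{thm:consistency} (Glivenko–Cantelli for bounded monotone classes together with the bias control of Step 1), that its pointwise limit $\widehat{S}^\infty_{\mathrm{DR\text{-}ISR}}$ equals the population projection
\[
f^\ast=\arg\min_{f\in\mathcal{F}_\leq}\sum_{t\in\mathcal{T}}\mathbb{E}\bigl[(f(t,r(X))-S_0(t\mid X))^2\bigr].
\]
Here the noise is eliminated because the cross term between $f$ and the mean-zero pseudo-outcome residual vanishes under conditioning on $X$.

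\textbf{Step 3 (calibration from the projection).} This is the main obstacle, because the joint time–risk monotonicity couples the level sets across $t$. First consider, for fixed $t$, the one-dimensional isotonic projection in risk. A standard characterization of isotonic $L^2$ projections says the solution is constant on blocks (level sets). On each block $\{f^\ast(t,r(X))=p\}$ with positive mass, the optimality condition obtained by perturbing the block's value by $\pm\epsilon$ (allowed since blocks are maximal) gives
\[
\mathbb{E}\bigl[S_0(t\mid X)\mid f^\ast(t,r(X))=p\bigr]=p.
\]
By the tower property this equals $\mathbb{P}(T>t\mid \widehat{S}^\infty=p)$.

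For the two-dimensional problem, I will apply the same perturbation argument to the level set $L_p=\{(t',r):f^\ast(t',r)=p\}$. Shifting $f^\ast$ by $\epsilon$ on the whole of $L_p$ preserves monotonicity in both coordinates for small $|\epsilon|$, provided $p$ is an isolated value or $L_p$ is a maximal block. This yields a condition averaged over the time slices of $L_p$. To localize it to each fixed $t\in\mathcal{T}$, I expect to need either the per-$t$ decomposition or a finer perturbation restricted to the $t$-slice of $L_p$. Such a restricted perturbation is feasible when the time constraint is non-binding at that slice. When the time constraint is binding, I would argue that $S_0(\cdot\mid x)$ is already non-increasing in time, so the risk-only projections at each $t$ are automatically ordered in $t$: averaging a pointwise-ordered family over a common partition preserves order. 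The joint projection would then coincide with the collection of per-$t$ risk projections, which gives the slice-wise identity.

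I expect verifying this ordering across different per-$t$ partitions to be the delicate point. If a direct argument fails, I would fall back on the min–max formula for isotonic regression, $f_t(r)=\min_{a\ge r}\max_{b\le r}$ of block averages. Monotonicity of $S_0$ in $t$ propagates through this formula, which shows $f_t\ge f_{t'}$ for $t\le t'$.
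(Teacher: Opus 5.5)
Your architecture matches the paper's. At each $t$ you identify the limit as the isotonic $L_2(P)$ projection $f^*(t,\cdot)$ (the paper's $S^*_t$) of $r\mapsto S_{0,r}(t\mid r)$. You then show these projections are ordered in $t$, so the joint time--risk projection decouples, and finally show that a one-dimensional isotonic projection is calibrated. Your min--max route to the time ordering is a legitimate alternative to the paper's $g_{\min}/g_{\max}$ exchange argument, and it is more direct. One fix is needed there: the form $\min_{a\ge r}\max_{b\le r}$ you wrote is the formula for a \emph{non-decreasing} fit. For a non-increasing fit use $\max_{a\ge r}\min_{b\le r}\mathrm{Av}_t[b,a]$. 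Monotonicity of interval averages in $t$ still propagates through it.

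The genuine gap is in the calibration step itself. First, the shift $f^*\pm\epsilon\mathbf{1}_{L_p}$ is admissible only if $f^*$ jumps at both ends of the block, and maximality does not give you that. Under S2--S3 the cumulative sum diagram is $C^1$, so $f^*(t,\cdot)$ is continuous. Every interior block is therefore entered and left continuously: shifting up violates monotonicity at the left end, and shifting down violates it at the right end. The perturbation you propose thus fails in exactly the setting of the theorem.

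Second, you only treat levels with $\mathbb{P}(f^*=p)>0$. Wherever the projection is strictly decreasing, for instance wherever $S_{0,r}(t\mid\cdot)$ is itself decreasing, the range consists of null levels. For those, the claim ``for all $p$ in the range'' requires showing $f^*(t,r_0)=S_{0,r}(t\mid r_0)$. This is the paper's Case II, and it is where S3 enters; your proposal never uses S3.

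A repair runs as follows. Perturb by $f^*+\epsilon\,\phi(f^*)$ with $\phi$ bounded Lipschitz. Since $u\mapsto u+\epsilon\phi(u)$ is non-decreasing whenever $|\epsilon|\,\mathrm{Lip}(\phi)\le 1$, the perturbed function stays non-increasing in $r$ for both signs of $\epsilon$. This yields $\mathbb{E}[(S_{0,r}(t\mid R)-f^*(R))\,\phi(f^*(R))]=0$ with $R=r(X)$. Taking tent functions $\phi_k\downarrow\mathbf{1}_{\{p\}}$ then gives $\mathbb{E}[S_0(t\mid X)\mid f^*(R)=p]=p$ on positive-mass blocks.

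For null levels, argue as the paper does. A point $r_0$ not interior to a flat stretch is a touching point of the convex minorant (or concave majorant) of the diagram. There the one-sided slopes sandwich $f^*(r_0)$ between $S_{0,r}(t\mid r_0^-)$ and $S_{0,r}(t\mid r_0^+)$, and these coincide by S3.
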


Intuitively, Theorem~\ref{thm:calibration} guarantees that the average of true survival probabilities over all individuals that are assigned a probability $p$ by the DR-ISR estimator will also be $p$, even if the risk scores are misspecified. This is a form of threshold calibration (Sec.~\ref{sec:prelims}).

\section{Related Work}

\paragraph{Conformal methods for survival analysis.} Much of the recent work on statistically valid survival prediction has focused on \emph{conformal prediction}~\citep{Candes2023Conformalized,Gui2024AdaptiveCutoffs,Sesia2025SurvivalBands,Qin2025Resampling,Yi2025RandomCensoring,Si2025TrainingSetConditional,Davidov2025GeneralRightCensored,Holmes2024TwoSided,Sesia2025SurvivalBands,Farina2026DoublyRobustEfficient}. Rather than calibration, the typical goal in this setting is to construct a valid \emph{lower prediction bound} (LPB) for the event time, i.e., a time beyond which an individual is expected to survive with probability at least $1-\alpha$ for a user-specified miscoverage level $\alpha$. These methods build on the conformal inference framework, which provides finite-sample coverage guarantees under minimal distributional assumptions.
Many of these works estimate the censoring mechanism and use IPCW to construct conformal prediction sets~\citep{Qin2025Resampling, Yi2025RandomCensoring, Si2025TrainingSetConditional, Farina2026DoublyRobustEfficient}. Closest to our work, \citet{Farina2026DoublyRobustEfficient} use doubly robust estimators to estimate marginal quantiles of the event time distribution. Although their estimator resembles our pseudo-label point estimate, our approach instead targets survival probabilities directly with isotonic regression.

\begin{wraptable}{r}{0.51\textwidth}
\vspace{-0.7em}
\centering
\captionsetup{width=\linewidth}
\caption{Comparison of relevant calibration method properties. Calibration guarantees are asymptotic.}
\label{tab:method_comparison}
\renewcommand{\arraystretch}{1.35}
    {\small \begin{tabular}{ccccc}
    \toprule
    Method & \makecell{Calibration\\guarantee} & Monotonic & \makecell{C-index\\preserving}\\
    \midrule
    CSD & $\times$ & $\times$ & $\checkmark$ \\
    CiPOT & $\checkmark$ & $\checkmark$ & $\times$ \\
    \midrule
    \textbf{Ours} & \textcolor{blue}{$\checkmark$} & \textcolor{blue}{$\checkmark$} & \textcolor{blue}{$\checkmark$} \\
    \bottomrule
    \end{tabular}}
\vspace{-1em}
\end{wraptable}

\paragraph{Post hoc calibration of survival distributions.}
Turning to the calibration of probability estimates, the methods most closely related to ours are CSD~\citep{Qi2024ConformalizedDistributions} and CiPOT~\citep{Qi2024ConditionalDistributionCalibration}. CSD calibrates survival curves by discretizing them at multiple percentile levels and applying conformal quantile regression at each level. 
CiPOT extends this line of work by directly mapping the empirical distribution of PIT values to the uniform scale. This guarantees asymptotic marginal and conditional calibration, but does not in general preserve the C-index. 
CSD handles censoring by sampling missing event times from an estimated survival model, while CiPOT directly samples on the PIT scale. Finally, several modifications to the training procedure~\citep{avati2020CountdownRegression,Goldstein2020Xcal,chapfuwa2023calibration} to improve the calibration of deep learning models for survival analysis have been proposed, but these often come at the cost of worse discriminative performance~\citep{Qi2024ConformalizedDistributions}. 


\vspace{-6pt}
\paragraph{Isotonic Distributional Regression.}
Our methodology also builds on Isotonic Distributional Regression ~\citep{Henzi2021IsotonicDistributional} and related work~\citep{jimenez2003estimation, el2005inferences, davidov2012estimating, mosching2020IDR}. 
Given a partial ordering over the input space, Isotonic Distributional Regression is a nonparametric approach for estimating the conditional CDF of the response variable subject to this ordering. 
An important extension is the Distributional Single-Index Model~\citep{Henzi2021SingleIndex} which first uses an index model to order the covariate space, and then applies Isotonic Distributional Regression with respect to the learned index. We likewise learn an ordering, which is induced by the Deep Cox risk score, and then adapt the Distributional Single-Index Model framework to right-censored survival data with two key extensions. First, we account for censoring using IPCW or doubly robust pseudo-outcomes. Second, because the calibrated survival function must be monotone in both risk and time, we use a two-dimensional isotonic regression procedure to enforce both constraints which we implement efficiently using Dykstra's algorithm with PAVA~\citep{Dykstra1982alternatingprojectionsPAVA}.

\section{Experiments} \label{sec:experiments}

We evaluate ISR for calibrating survival functions on synthetic data and on two real-world settings using unstructured data from The Cancer Genome Atlas. For all Cox models, we estimate the baseline hazard using Breslow's estimator. Hyperparameter details are provided in Appendix~\ref{app:sec:experiment_details}.

\subsection{Metrics}
\label{sec:metrics}
We evaluate the estimated survival distributions along three metrics: PIT calibration, quantile estimation, and the Integrated Brier Score (IBS). On the real data, we use IPCW to correct for censoring since we do not have access to the true event times.
For all three metrics, lower values indicate better performance. We compare ISR with the two existing baselines for post hoc calibration of survival curves: CSD~\citep{Qi2024ConformalizedDistributions} and CiPOT~\citep{Qi2024ConditionalDistributionCalibration}. 

\paragraph{PIT Calibration (AUPIT).} Recall PIT calibration requires PIT values be distributed uniformly on the interval $(0,1)$. We quantify miscalibration as the unsigned area between the empirical cumulative distribution function (CDF) of the predicted PIT values and the uniform CDF:
\[ \mathrm{AUPIT} = \int_0^1 \left| \widehat{F}_{\mathrm{PIT}}(u) - u \right| \, du, \]
where $\widehat{F}_{\mathrm{PIT}}$ is the IPCW-adjusted empirical CDF of the PIT values. Notably, the AUPIT is equivalent to the Wasserstein-1 distance between the predicted PIT distribution and the ideal uniform distribution.

\paragraph{Quantile Estimation (IPCW Pinball Loss).} 
We also evaluate the quality of the survival curves by assessing the accuracy of predicted quantiles using quantile regression loss~\citep{Koenker2005QuantileRegression}. The IPCW estimate reweights the observable indicators in the pinball loss as the following:
\[ \mathrm{QS}(\tau) = \frac{1}{|\mathcal{I}_{\mathrm{test}}|} \sum_{i \in \mathcal{I}_{\mathrm{test}}} \bigg[ (1-\tau) \int_0^{\hat q_\tau(X_i)} \frac{\delta_i \mathbf{1}[Y_i \leq t]}{\widehat G(Y_i \mid X_i)} \, dt + \tau \int_{\hat q_\tau(X_i)}^{t_{\max}} \frac{\mathbf{1}[Y_i > t]}{\widehat G(t \mid X_i)} \, dt \bigg], \]
where $t_{\max}$ is the maximum time up till which predictions are made (e.g., the largest observed time in the calibration set) and $\hat{q}_\tau(X_i)$ is the time at which $\widehat{S}(t|X_i)$ reaches $1-\tau$. The left and right integrals compute the weighted penalty for overestimation and underestimation respectively, analogous to the standard quantile loss (see Appendix~\ref{app:sec:metrics}). Note that if the survival curve does not reach the $1-\tau$ level before $t_{\max}$, then $\mathrm{QS}(\tau)$ is ill-defined; we discuss ways to handle this in Appendix~\ref{app:sec:metrics}. 

\paragraph{Overall Distributional Accuracy (IBS).}
While the quantile score evaluates predictive accuracy at a specific level of the event-time distribution, it does not capture the calibration of the \textit{entire} predicted survival curve. To assess the full forecast, we evaluate overall distributional accuracy using IBS~\citep{graf1999IBS}. The Brier score measures the mean squared error of a probabilistic prediction at a single time point $t$. By integrating this penalty across all times up to $t_{\max}$, the IBS measures the calibration and sharpness of the entire predicted survival curve:
\[ \mathrm{IBS} = \frac{1}{|\mathcal{I}_{\mathrm{test}}|} \sum_{i \in \mathcal{I}_{\mathrm{test}}} \frac{1}{t_{\max}}\int_0^{t_{\max}} \bigg[ \frac{\delta_i \mathbf{1}[Y_i \leq t] \hat S(t \mid X_i)^2}{G(Y_i \mid X_i)} + \frac{\mathbf{1}[Y_i > t] (1 - \hat S(t \mid X_i))^2}{G(t \mid X_i)} \bigg] \, dt . \]
We show in Appendix~\ref{app:ibs} that IBS is also equivalent to integrating the quantile score across all levels $\tau \in (0,1)$, up to scaling factors.

\begin{figure}[t]
    \centering

    \newsavebox{\leftfigbox}
    \newlength{\leftfigheight}

    \sbox{\leftfigbox}{%
        \includegraphics[width=0.64\textwidth]{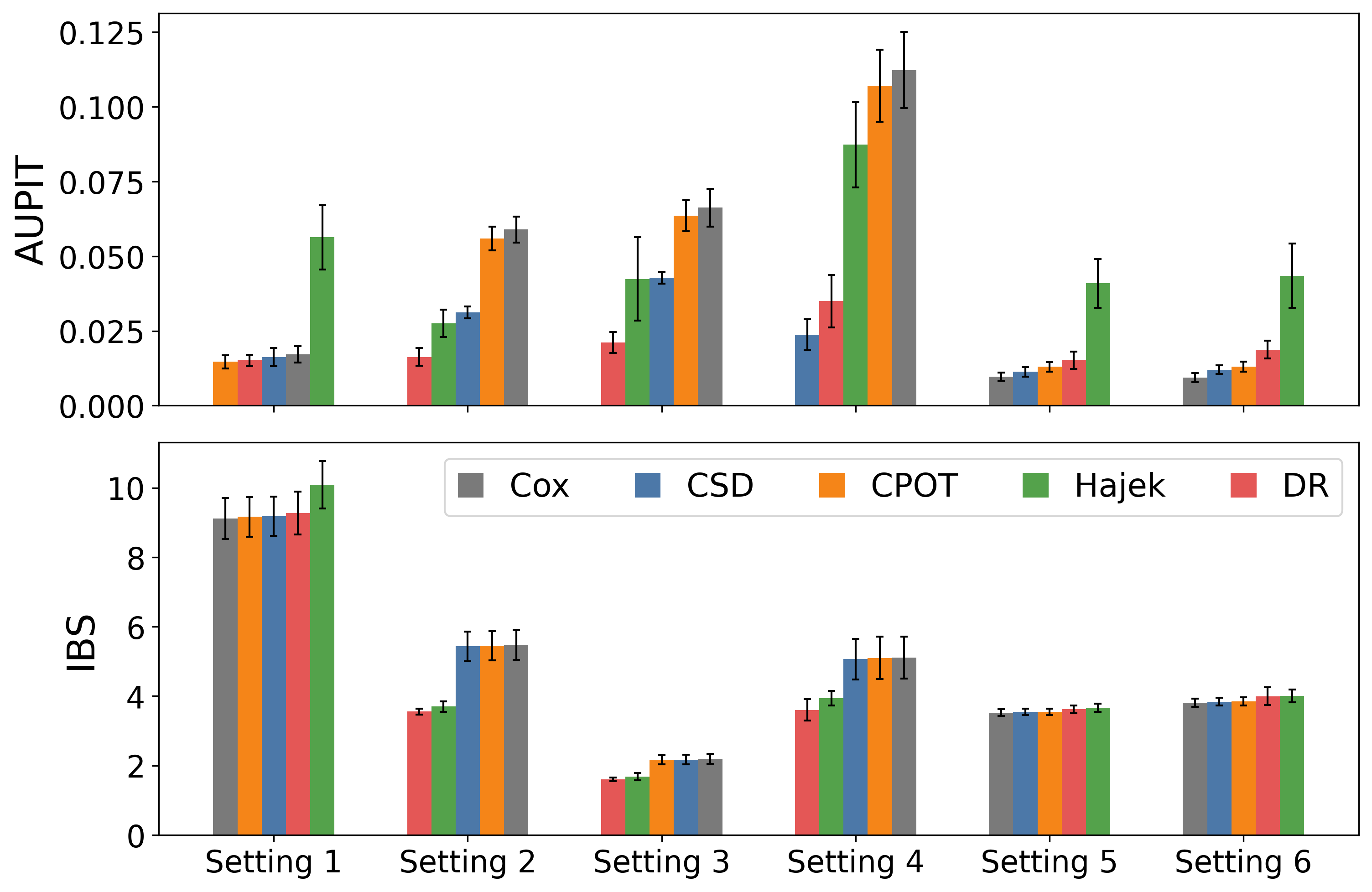}
    }

    \setlength{\leftfigheight}{\dimexpr\ht\leftfigbox+\dp\leftfigbox\relax}

    \subcaptionbox{\label{fig:simulation_aupit_ibs_combined}}{%
        \includegraphics[width=0.62\textwidth]{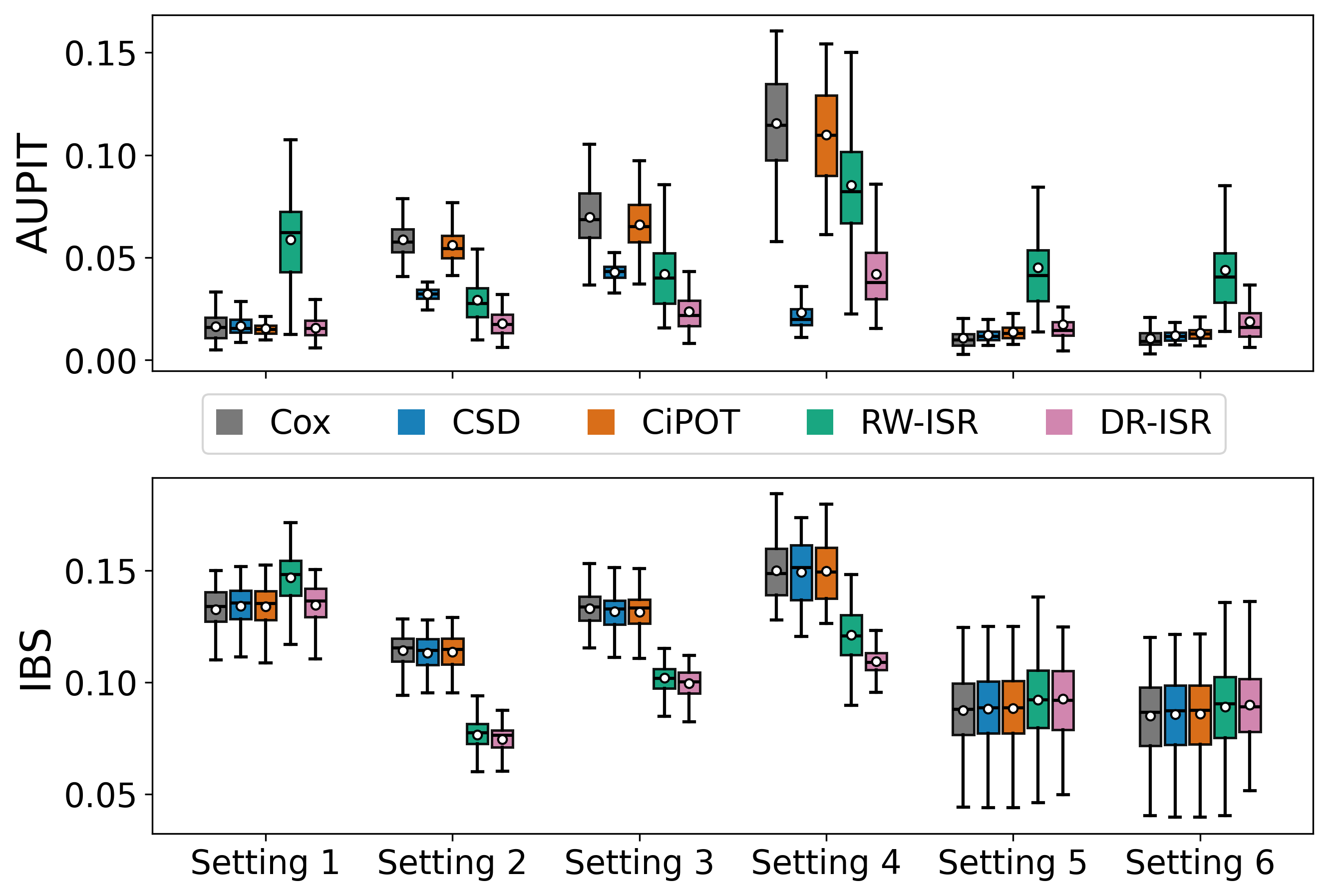}
    }
    \hfill
    \subcaptionbox{\label{fig:simulation_quantile}}{%
        \includegraphics[height=\leftfigheight]{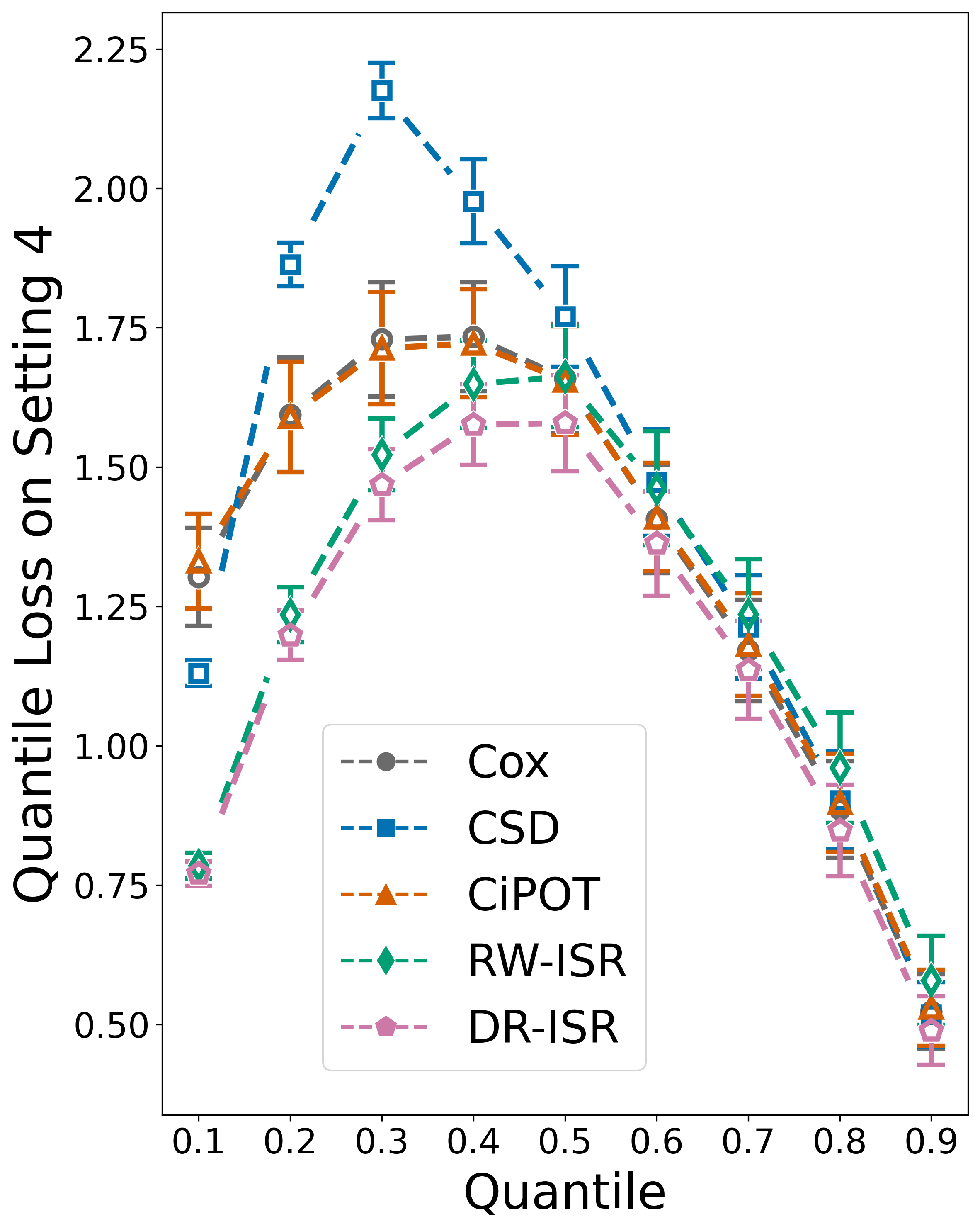}
    }

    \caption{AUPIT, IBS, and quantile loss across the six experimental settings. Boxplots in (\ref{fig:simulation_aupit_ibs_combined}) summarize the variation over seeds, where the center line indicates the median and the white dot indicates the mean. In (\ref{fig:simulation_quantile}), we show the quantile loss across nine levels for Setting 4. All simulations were repeated with 100 different random seeds, and error bars in (\ref{fig:simulation_quantile}) denote $\vphantom{}\pm 2$ standard errors.}
    \label{fig:simulation_combined}
\end{figure}

\subsection{Simulations}
\label{sec:simulations}

We first evaluate our method in synthetic settings where the true event times are known, including for individuals who are censored in the data. This allows us to compute oracle values of all metrics. 


We use the six settings from~\citep{Gui2024AdaptiveCutoffs}, where $\log T \mid X \sim \mathcal{N}(\mu(X), \sigma^2(X))$ and $C \mid X \sim P_{C \mid X}$. The first four simulations use a univariate covariate $X$, while Settings 5--6 use a multivariate covariate with dimension $10$. In each setting, we sample a dataset of size $5{,}000$ and split it evenly into training and calibration sets, as recommended in~\citep{Candes2023Conformalized}. We evaluate all metrics on an independent test sample of size $5{,}000$. Further details on the data-generating mechanisms are provided in Appendix~\ref{app:sec:simulations} reproduced from~\citep{Gui2024AdaptiveCutoffs}.

Because these simulation settings are relatively low-dimensional, we use a canonical linear Cox model as the initial survival estimator. We then apply each calibration method to the fitted survival curves and compare performance across the six settings.

Figure~\ref{fig:simulation_aupit_ibs_combined} summarizes the PIT calibration and distributional accuracy results. The DR-ISR estimator performs the best on Settings 2--3 and slightly worse than CSD in Setting 4. We also observe that DR-ISR is more stable and accurate compared to RW-ISR. On the IBS metric, both ISR methods yield substantial improvement in Settings 2--4. All estimators perform similarly in other settings, with the baseline methods providing little to no improvement over the uncalibrated Cox model on IBS.

We study threshold calibration more directly in Figure~\ref{fig:simulation_quantile} by evaluating the quantile score across nine levels in Setting 4, where the variation in IBS is the greatest. Both RW-ISR and DR-ISR achieve lower quantile loss compared to other methods across all levels, with DR-ISR performing better at medium quantiles. 
We include results for the quantile score on remaining settings in the Appendix.


\subsection{Real-data results}
\label{sec:real_data}

In recent years, powerful foundation models for unstructured medical data (clinical reports, histopathology slides, RNA sequencing etc.) have become available. A common approach is to then combine the features extracted from these foundation models with Deep Cox networks for downstream survival analysis tasks~\citep{hoptimus1, hemker2024healnet, Kang2025BulkFormer, lu2021wsi_foundation, zhang2025standardizing, vaidya2025molecular, ding2025multimodal, chen2024uni, lu2024avisionlanguage, Lu_2023_CVPR, wang2024head_and_neck_survival, zhao2021bertsurv, zhu2017wsisa}. We explore exactly this common paradigm in our real-data experiments.

We use The Cancer Genome Atlas dataset for our experiments which consists of about 10,000 cancer patients of which we use $60\%$ for model fitting, $20\%$ for calibration and $20\%$ for testing. We aim to predict the progression-free survival (PFS) which is the length of time during and after the treatment of a disease that a patient lives with the disease but it does not get worse. 

For our first experiment, we use the text-based pathology reports from~\citep{Kefelli2024TCGAreports}. To train our model, we first extract the final-layer embeddings from the BioClinical BERT foundation model~\citep{alsentzer2019BioBERT} as covariate features. We then train a three-layer MLP head as our Deep Cox model to predict risk scores for the PFS (as compiled by~\citep{Liu2024TCGAOutcomes}) by maximizing the Cox partial likelihood~\citep{Katzman2018DeepSurv}. 
A separate three-layer MLP head is trained to obtain the censoring model. The PFS and censoring models have C-indices of $0.643\pm0.002$ and $0.672\pm0.002$ respectively (mean over 100 seeds $\pm$ 2 standard error). In the second experiment, we use bulk RNA-seq data from~\citep{Vivian2017Toil} and the BulkFormer foundation model~\citep{Kang2025BulkFormer} to extract features, but otherwise keep the exact same training procedure. The PFS and censoring models have C-indices of $0.686\pm0.002$ and $0.565\pm0.002$ respectively.

We report performance using the IPCW-estimated metrics in Table~\ref{tab:real_data_results}, where the censoring weights are obtained from an uncalibrated Deep Cox model.
DR-ISR achieves the best performance on AUPIT, even improving over the baselines, which directly target PIT calibration. We believe this is because DR-ISR explicitly models the dependence of the censoring distribution on the covariates, whereas the CSD and CiPOT do not directly account for this structure. On IBS, we find that DR-ISR again achieves the lowest value among all methods. Decomposing IBS by quantile level shows that the gain is primarily at higher quantiles, while performance at lower quantiles is more mixed.

The real-data results suggest that our approach is robust in practice. Even when our assumptions are imperfectly satisfied, ISR still offers an advantage over existing post hoc calibration methods.


\begin{table}[t]
  \caption{IPCW estimates of AUPIT, IBS, and quantile loss at the $10\%,50\%$ and $90\%$ levels over the test set for the pathology and RNA-seq model. Values are mean over 100 seeds $\vphantom{}\pm2$ standard error.}
  \vspace{2pt}
  \label{tab:real_data_results}
  \centering
  \renewcommand{\arraystretch}{0.90}
  { \footnotesize
  \begin{tabular}{lcccccc}
    \toprule
    & Method    &  AUPIT      & IBS & $10\%$ Quantile & $50\%$ Quantile & $90\%$ Quantile \\
    \midrule
    \multirow{5}{*}{\rotatebox[origin=c]{90}{Pathology}} 
& Deep Cox & $0.0665 \pm 0.0033$ & $0.184 \pm 0.0068$ & $266 \pm 7.3$ & $936 \pm 31$ & $580 \pm 61$ \\
& CSD & $0.0720 \pm 0.0056$ & $0.185 \pm 0.0069$ & $266 \pm 7.3$ & $944 \pm 32$ & $516 \pm 55$ \\
& CiPOT & $0.0627 \pm 0.0034$ & $0.184 \pm 0.0069$ & $263 \pm 7.1$ & $\textbf{932} \pm 31$ & $562 \pm 59$ \\
& RW-ISR (ours) & $0.0761 \pm 0.0054$ & $0.189 \pm 0.0082$ & $\textbf{262} \pm 7.0$ & $986 \pm 36$ & $529 \pm 58$ \\
& DR-ISR (ours) & $\textbf{0.0538} \pm 0.0035$ & $\textbf{0.182} \pm 0.0070$ & $264 \pm 7.1$ & $950 \pm 35$ & $\textbf{504} \pm 50$ \\
    \midrule
    \multirow{5}{*}{\rotatebox[origin=c]{90}{RNA-seq}} 
& Deep Cox & $0.0371 \pm 0.0020$ & $0.154 \pm 0.0053$ & $\textbf{252} \pm 5.3$ & $971 \pm 27$ & $739 \pm 56$ \\
& CSD & $0.0478 \pm 0.0038$ & $0.155 \pm 0.0056$ & $\textbf{252} \pm 5.3$ & $971 \pm 28$ & $766 \pm 58$ \\
& CiPOT & $0.0358 \pm 0.0020$ & $0.154 \pm 0.0054$ & $\textbf{252} \pm 5.3$ & $966 \pm 27$ & $733 \pm 56$ \\
& RW-ISR (ours) & $0.0698 \pm 0.0055$ & $0.156 \pm 0.0071$ & $253 \pm 5.3$ & $974 \pm 29$ & $797 \pm 63$ \\
& DR-ISR (ours) & $\textbf{0.0345} \pm 0.0024$ & $\textbf{0.152} \pm 0.0055$ & $253 \pm 5.3$ & $\textbf{958} \pm 28$ & $\textbf{696} \pm 56$ \\
    \bottomrule
  \end{tabular}}
  \vspace{-5pt}
\end{table}
\section{Discussion}
\label{sec:discussion}

In this work, we propose a new algorithm for calibrating survival functions under right censoring using Isotonic Survival Regression (ISR). Our method extends principled pointwise survival probability estimates to valid, monotone survival functions while preserving discriminative performance. Theoretically, we establish calibration and consistency guarantees for the proposed procedure. 
Across simulation and real-data experiments, we find that DR-ISR gives the strongest performance when calibrating both linear Cox and Deep Cox models.
We believe our calibration algorithm has utility across a wide range of biomedical applications, where Cox models are broadly relevant.

While this work primarily focuses on calibrating the survival function, our procedure works analogously for the censoring function. 
One could consider iterative schemes that alternate between calibrating the survival and censoring models. 
More generally, how best to calibrate both mechanisms simultaneously remains an open question. We leave these extensions for future work.


\paragraph{Limitations.} As with many survival analysis methods, our algorithm relies on assumptions that may be difficult to verify in practice. Nonetheless, our real-data experiments suggest that the proposed method is moderately robust to violations of these assumptions. Like some survival analysis methods, our approach relies on a partial ordering of individuals by way of a univariate risk score. While simplistic, we believe this modeling choice is useful in many applications, and it is supported by the prevalence and practical success of both linear and Deep Cox models.

\paragraph{Broader Impact.} Our contributions provide a practical algorithm for producing better calibrated survival functions, which can support more reliable risk assessment and downstream decision-making. We do not foresee any direct negative impacts of this work.


\newpage

\section*{Acknowledgments}
S.B.\ thanks Jing Lei for his insightful comments and discussion. The authors thank Aziz Ayed, Serena Khoo, Gabrielle Cohn and Regina Barzilay for their guidance regarding the TCGA data. The authors were supported by the ARPA-H ADAPT program.

\bibliography{references}
\bibliographystyle{plainnat}


\clearpage

\vbox{%
  \hsize\textwidth
  \linewidth\hsize
  \vskip 0.1in

  \hrule height 4pt
  \vskip 0.25in
  \vskip -\parskip

  \centering
  {\LARGE\bfseries Supplementary Material for \\ Isotonic Survival Regression: Calibrated Survival Distributions from Deep Cox Models \par}

  \vskip 0.29in
  \vskip -\parskip
  \hrule height 1pt
  \vskip 0.09in
}

\appendix

\section{Proofs}
\label{app:sec:proofs}

We first establish a key result used in our arguments: under the risk-ordering assumption, the survival function can be expressed solely in terms of the risk score produced by the uncalibrated model.

\begin{lemma}[Risk Score Sufficiency] \label{lemma:risk_score_sufficiency}
    Under assumption~\ref{ass:risk_correctness}, the event time depends on the covariates only through the risk score $r(X)$. Equivalently,
    \[ \mathbb{P}(T > t \mid X) = \mathbb{P}(T > t \mid r(X)), \quad \text{for all } t \geq 0. \]
\end{lemma}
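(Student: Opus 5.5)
The plan is to show that under assumption~\ref{ass:risk_correctness} the conditional survival function $x \mapsto \mathbb{P}(T > t \mid X = x)$ is a function of $r(x)$ alone. The tower property then gives the claim. Fix $t \geq 0$ and write $s(x) = \mathbb{P}(T>t \mid X=x)$ for a version of the conditional survival probability.

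\textbf{Step 1: equal risks force equal survival.} If $r(x) = r(x')$, then both $r(x)\ge r(x')$ and $r(x')\ge r(x)$ hold. Applying assumption~\ref{ass:risk_correctness} in each direction gives $s(x)\le s(x')$ and $s(x')\le s(x)$, so $s(x)=s(x')$. Hence $s$ is constant on the level sets of $r$. We may therefore write $s(x)=g(r(x))$, where $g(\rho)$ is defined as the common value of $s$ on $r^{-1}(\rho)$ for $\rho \in r(\mathcal{X})$, and arbitrarily elsewhere. Moreover, assumption~\ref{ass:risk_correctness} shows that $g$ is non-increasing on $r(\mathcal{X})$.

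\textbf{Step 2: measurability.} A non-increasing function on a subset of $\mathbb{R}$ extends to a non-increasing function on all of $\mathbb{R}$, for instance $\bar g(\rho) = \inf\{g(\rho') : \rho'\le \rho,\ \rho' \in r(\mathcal{X})\}$, with the value $1$ if no such $\rho'$ exists. This extension agrees with $g$ on $r(\mathcal{X})$ by monotonicity. Monotone functions are Borel measurable, so $\bar g(r(X))$ is $\sigma(r(X))$-measurable.

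\textbf{Step 3: tower property.} Since $\sigma(r(X))\subseteq\sigma(X)$, we have
\[
\mathbb{P}(T>t\mid r(X)) = \mathbb{E}\big[\mathbb{P}(T>t\mid X)\,\big|\, r(X)\big] = \mathbb{E}\big[\bar g(r(X))\mid r(X)\big] = \bar g(r(X)) = \mathbb{P}(T>t\mid X)
\]
almost surely, which is the claim for each $t$.

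\textbf{Main obstacle.} The only delicate point is the ``almost surely'' issue. Conditional probabilities are defined only up to null sets, while assumption~\ref{ass:risk_correctness} is stated pointwise. I would interpret the assumption as holding for a fixed (regular) version of the conditional distribution, and run Steps 1--2 on that version. If one wants the statement simultaneously for all $t$, one proves it first for rational $t$, where a countable union of null sets is still null, and then extends using right-continuity of survival functions in $t$.
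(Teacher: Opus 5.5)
Your proposal is correct and follows the same route as the paper: apply assumption~\ref{ass:risk_correctness} in both directions to get $S(t\mid x)=S(t\mid x')$ whenever $r(x)=r(x')$, then conclude with the tower property. Your Step~2 adds one detail the paper leaves implicit in its final equality $\mathbb{E}[S(t\mid X)\mid r(X)] = S(t\mid X)$: by extending $g$ monotonically, you show that the induced function of $r$ is Borel measurable.
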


\begin{proof}
    Let $x, x' \in \mathcal{X}$ such that $r(x) = r(x')$. Then, we have $r(x)\geq r(x')$ and $r(x')\geq r(x)$. Applying assumption~\ref{ass:risk_correctness} twice gives $S(t \mid x) = S(t \mid x')$. By the tower property,
    \begin{align*}
        \mathbb{P}(T > t \mid r(X)) &= \mathbb{E}[\mathbf{1}[T > t] \mid r(X)] \\
        &= \mathbb{E}[\mathbb{E}[\mathbf{1}[T > t] \mid X] \mid r(X)] \\
        &= \mathbb{E}[S(t \mid X) \mid r(X)] \\
        &= S(t \mid X).
    \end{align*}
\end{proof}

Lemma~\ref{lemma:risk_score_sufficiency} states that the survival function can be expressed in terms of the risk. Thus, for ease of exposition, we write $\Sor = \mathbb{P}(T>t\mid r(X)=r)$. We also introduce notation for the nuisance parameters $\eta = (S, G)$, with true value $\eta_0 = (S_0, G_0)$ and estimated value $\widehat{\eta} = (\widehat S, \widehat G)$. Our proofs will make use of the following additional regularity assumptions as specified in the corresponding theorem statements.

\begin{enumerate}[label=S\arabic*., ref=S\arabic*]
    \item \label{ass:positivity} We assume that there exists constants $0 < c_1(t) < \infty$ and $0 < c_2(t) < \infty$ depending on $t$ such that $S_0(t \mid x) \:\wedge\: \widehat{S}(t \mid x) > 1 / c_1(t)$ and $G_0(t \mid x) \:\wedge\: \widehat{G}(t \mid x) > 1 / c_2(t)$ for all $x$.
    \item \label{ass:continuous_density} The covariates $X\stackrel{\text{i.i.d.}}{\sim}P_X$ induce a distribution for the risks $r(X)\stackrel{\text{i.i.d.}}{\sim}P$. We assume $P$ has bounded support $[a,b]$ and a Lebesgue density $p$ which is strictly positive over the support.
    \item \label{ass:continous_survival} We assume the survival function in terms of risk $\Sor$ is continuous in $r$ at all times $t$.
\end{enumerate}

Assumption~\ref{ass:positivity} is a pointwise-in-time positivity condition. At each fixed time $t$, the survival and censoring probabilities are uniformly bounded away from zero over $x$, but the lower bound may vanish as $t \to \infty$.

\subsection{Proof of Theorem 1}
We first discuss the augmented IPCW (AIPCW) estimating equation for the conditional survival probability using standard semiparametric analysis~\citep{Robins1992DependentCensoring,Tsiatis2006Semiparametric}. In particular, we will demonstrate that the AIPCW estimator has two equivalent representations using either the censoring or event martingale difference. Next, we will show that the two bias bounds hold separately.
\[ B(t, x; \widehat{\eta}) = |\mathbb{E}[\tilde{S}^\mathrm{DR}(t \mid x; \widehat{\eta}) \mid X = x] - S_0(t\mid x)| \leq B_i(t, x; \widehat{\eta}). \]
We use the event martingale representation to get the bound with $B_1$ and the censoring martingale representation to get the bound with $B_2$. The full result then follows by taking the minimum bound between $B_1$ and $B_2$. 

For simplicity and notational convenience, we will assume all of the survival and censoring functions ($S_0$, $G_0$, $\smash{\widehat{S}}$, and $\smash{\widehat{G}}$) are continuous; the general case follows by using the appropriate left-limits. We will first prove two useful identities regarding the survival and censoring probabilities.

\begin{lemma} \label{lemma:dS_to_dLambda}
    For any time $t > 0$ and continuous survival function $S$, 
    \[ d \left( \frac{1}{S(t\mid X)} \right) = \frac{1}{S(t\mid X)}\,d\Lambda_{T\mid X}(t\mid X), \]
    where $\Lambda_{T\mid X}(t\mid X) = -\log S(t \mid X)$ is the cumulative hazard function. The same result holds when replacing $S$ with the censoring model equivalent $G$.
\end{lemma}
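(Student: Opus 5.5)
The plan is to reduce the claim to the chain rule for $\phi(s)=1/s$ composed with the continuous, non-increasing function $t\mapsto S(t\mid X)$, and then to rewrite $dS$ in terms of $d\Lambda_{T\mid X}$ using the definition $\Lambda_{T\mid X}=-\log S$.

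The first step is to record that $S=e^{-\Lambda_{T\mid X}}$. Hence
\[
\frac{1}{S(t\mid X)} = \exp\bigl(\Lambda_{T\mid X}(t\mid X)\bigr).
\]
Since $S$ is continuous and positive on the relevant range (by \ref{ass:positivity} at each fixed $t$), $\Lambda_{T\mid X}$ is continuous, non-decreasing, and finite. In particular it is of bounded variation on $[0,t]$.

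The second step is the change-of-variables (Lebesgue--Stieltjes chain rule) formula for a $C^1$ function of a continuous bounded-variation function. With $\psi=\exp$ and $A=\Lambda_{T\mid X}(\cdot\mid X)$, it gives $d\,\psi(A(t)) = \psi'(A(t))\,dA(t)$. Because $\psi'=\psi$, this reads
\[
d\bigl(1/S(t\mid X)\bigr) = \exp\bigl(\Lambda_{T\mid X}(t\mid X)\bigr)\,d\Lambda_{T\mid X}(t\mid X) = \frac{1}{S(t\mid X)}\,d\Lambda_{T\mid X}(t\mid X).
\]
Alternatively, one can go through $dS$: applying the same formula to $S=e^{-\Lambda}$ gives $dS=-S\,d\Lambda$. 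Applying it again with $\phi(s)=1/s$ gives $d(1/S)=-S^{-2}\,dS=S^{-1}\,d\Lambda$.

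The only point needing care is the justification of the chain rule. In the absolutely continuous case it is the ordinary derivative computation. For a general continuous bounded-variation integrator there are no atoms, so no jump terms of the form $\sum\bigl[\psi(A(s))-\psi(A(s-))-\psi'(A(s-))\Delta A(s)\bigr]$ appear. This is consistent with the paper's stated convention that left limits handle the discontinuous case. The censoring analogue follows verbatim, replacing $S,\Lambda_{T\mid X}$ with $G,\Lambda_{C\mid X}$ and using $G_0\wedge\widehat G>1/c_2(t)$.
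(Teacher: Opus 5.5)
Your proposal is correct and follows the paper's proof: the paper likewise writes $1/S(u\mid X)=\exp\{\Lambda_{T\mid X}(u\mid X)\}$ and differentiates the exponential with respect to $\Lambda_{T\mid X}$ to get $\exp\{\Lambda_{T\mid X}\}\,d\Lambda_{T\mid X}=S^{-1}\,d\Lambda_{T\mid X}$, and it treats the censoring case identically. Your appeal to the Lebesgue--Stieltjes chain rule for a continuous bounded-variation integrator, where no jump terms arise, and your note that positivity keeps $\Lambda_{T\mid X}$ finite, make explicit what the paper takes for granted.
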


\begin{proof}
    We differentiate $\frac{1}{S(u \mid X)} = \exp \{ \Lambda_{T \mid X} (u \mid X) \}$ with respect to $\Lambda_{T \mid X}$. Then,
    \[ d \left( \frac{1}{S(u \mid X)} \right) = \exp \{ \Lambda_{T \mid X} (u \mid X) \} d\Lambda_{T \mid X} (u \mid X) = \frac{1}{S(u \mid X)} d\Lambda_{T \mid X} (u \mid X). \]
    The proof is identical for any censoring model $G$ using the cumulative hazard function $\Lambda_{C \mid X}$.
\end{proof}

\begin{lemma} \label{lemma:ratio_to_integral}
    For any time $t > 0$ and pair of continuous survival functions $S$ and $S'$, 
    \[ S(t\mid X) - S'(t\mid X) = S'(t \mid X) \int_0^t \frac{S(u\mid X)}{S'(u\mid X)}\left(d\Lambda'_{T\mid X}(u\mid X) - d\Lambda_{T\mid X}(u\mid X)\right), \]
    where $\Lambda_{T \mid X} (u \mid X) = -\log S(u \mid X)$ and $\Lambda'_{T \mid X} (u \mid X) = -\log S'(u \mid X)$ are the respective cumulative hazard functions. The same result holds when replacing $S$ and $S'$ with their censoring model equivalents $G$ and $G'$, respectively.
\end{lemma}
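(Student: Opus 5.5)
The plan is to reduce the difference to a single ordinary differential identity for a product of two functions. I will fix $X$ and suppress it, writing $\Lambda = -\log S$ and $\Lambda' = -\log S'$. The key object is the ratio
\[ R(u) = \frac{S(u)}{S'(u)}, \qquad R(0)=1, \qquad R(t) = \frac{S(t)}{S'(t)}. \]
It suffices to show
\[ \frac{S(t)}{S'(t)} - 1 = \int_0^t \frac{S(u)}{S'(u)}\bigl(d\Lambda'(u) - d\Lambda(u)\bigr). \]
Multiplying both sides by $S'(t)$ then gives exactly the claimed identity $S(t)-S'(t) = S'(t)\int_0^t \frac{S(u)}{S'(u)}(d\Lambda'-d\Lambda)$.

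To prove the reduced identity, I will compute $dR$ by the product rule applied to $R = S \cdot (1/S')$. Using Lemma~\ref{lemma:dS_to_dLambda}, $d(1/S') = (1/S')\,d\Lambda'$. Since $\Lambda = -\log S$, we also have $dS = -S\,d\Lambda$. Therefore
\[ dR = \frac{1}{S'}\,dS + S\,d\!\left(\frac{1}{S'}\right) = -\frac{S}{S'}\,d\Lambda + \frac{S}{S'}\,d\Lambda' = R\,(d\Lambda' - d\Lambda). \]
Integrating from $0$ to $t$ and using $R(0)=1$ (since $S(0)=S'(0)=1$) yields the reduced identity. Alternatively, one can write $R = \exp\{\Lambda' - \Lambda\}$ and differentiate the exponential, exactly as in the proof of Lemma~\ref{lemma:dS_to_dLambda}.

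The only delicate point is justifying the product rule and chain rule in Stieltjes form. Under the standing continuity convention, $S$ and $S'$ are continuous, monotone, and positive on $[0,t]$, the latter by positivity assumption~\ref{ass:positivity} in the applications. Hence $\Lambda$ and $\Lambda'$ are continuous and of bounded variation, and the integration-by-parts formula for continuous BV functions has no jump terms. In the general case one would replace $S(u)$ by left limits, which is why the paper defers to ``appropriate left-limits''. The censoring version follows verbatim with $G$, $G'$, $\Lambda_C$, $\Lambda'_C$.
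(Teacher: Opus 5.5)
Your proof is correct and follows essentially the same route as the paper. The paper writes the ratio as $\exp\{\Lambda'_{T\mid X}-\Lambda_{T\mid X}\}$, differentiates to get $d(S/S') = (S/S')(d\Lambda'_{T\mid X}-d\Lambda_{T\mid X})$, integrates from $0$ to $t$ using $S(0\mid X)=S'(0\mid X)=1$, and multiplies by $S'(t\mid X)$; your product-rule computation via Lemma~\ref{lemma:dS_to_dLambda} is just an equivalent way of reaching that same differential identity, and you note the exponential route yourself.
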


\begin{proof}
    We express the ratio of survival probabilities in terms of the cumulative hazard.
    \[ \frac{S(u \mid X)}{S'(u \mid X)} = \exp \left\{ \Lambda'_{T \mid X} (u \mid X) - \Lambda_{T \mid X} (u \mid X) \right\}. \]
    Taking the derivative, we have
    \[ d \left( \frac{S(u \mid X)}{S'(u \mid X)} \right) = \frac{S(u \mid X)}{S'(u \mid X)} \left( d\Lambda'_{T \mid X} (u \mid X) - d\Lambda_{T \mid X} (u \mid X) \right). \] 
    We integrate and use $S(0 \mid X) = S'(0 \mid X) = 1$ to get
    \[ \frac{S(t \mid X)}{S'(t \mid X)} - 1 = \int_0^t \frac{S(u\mid X)}{S'(u\mid X)} \left(d\Lambda'_{T\mid X}(u\mid X) - d\Lambda_{T\mid X}(u\mid X)\right). \]
    Multiplying both sides by $S'(t \mid X)$ gives the desired result. The proof is identical for any pair of censoring models $G$ and $G'$ using the cumulzative hazard functions $\Lambda_{C \mid X}$ and $\Lambda'_{C \mid X}$.
\end{proof}

\subsubsection{AIPCW estimators of the survival probability}

We now analyze AIPCW estimator for the conditional survival probability. For a fixed $t$, the target can be written as the conditional mean of the complete-data outcome $\mathbf{1}[T > t]$. Let
\[ \beta(x) = \mathbb{P}(T > t \mid X = x) = \mathbb{E}[\mathbf{1}[T > t] \mid X = x]. \]
This perspective allows us to use standard semiparametric results for AIPCW estimators, including their efficiency and robustness properties. At a high level, right censoring is a monotone coarsening mechanism, and the AIPCW estimator can be expressed using the martingale difference process associated with the censoring mechanism. We refer the reader to~\citet{Robins1992DependentCensoring} and~\citet{Tsiatis2006Semiparametric} for additional details on semiparametric theory and AIPCW estimation.

Define the counting process $dN_C(u) = \mathbf{1}[Y \in du, \delta = 0]$ and censoring martingale increment
\[ dM_{C \mid X} (u \mid X) = dN_C(u) - \mathbf{1}[Y \geq u] \, d\Lambda_{C \mid X} (u \mid X), \] 
where $\Lambda_{C \mid X} (u \mid X) = -\log G (u \mid X)$ is the censoring analog of the cumulative hazard function. Following the construction in Section 9.3 of~\citet{Tsiatis2006Semiparametric}, the AIPCW estimation equation for $\beta$ is
\begin{equation} \label{eqn:aipcw_general}
    \mathbb{E} \bigg[ \underbrace{\frac{\delta (\mathbf{1}[T > t] - \beta(X))}{G(Y \mid X)}}_\text{IPCW estimate} + \underbrace{\int_0^\infty \frac{dM_{C \mid X} (u \mid X)}{G (u \mid X)} h(u, X)}_\text{augmentation} \ \ \Big| \ \ X = x \bigg] = 0.
\end{equation}
The first term is the IPCW estimate of $\beta(X)$, while the second augmentation term lies in the censoring augmentation space $\mathcal{A}$ for any square integrable function $h \in L^2$. The choice of $h$ determines the efficiency of the resulting estimator. By Theorem 10.4 of~\citet{Tsiatis2006Semiparametric}, the efficient choice is such that the augmentation is the projection of the IPCW estimating function onto $\mathcal{A}$, which for survival probability estimation is
\[ h(u, X) = \mathbb{E}[\mathbf{1}[T > t] - \beta(X) \mid X, \; T \geq u]. \]
We simplify \(h(u,X)\) in terms of the conditional survival function. Since $\beta(X)=S(t\mid X)$,
\begin{align*}
    h(u,X)
    &= \mathbb{E}\left[\mathbf{1}(T > t)-\beta(X)\mid X,\;T \geq u\right] \\
    &= \mathbb{P}(T>t\mid X,\;T \geq u)-\beta(X) \\
    &= \frac{\mathbb{P}(T>t,\;T \geq u\mid X)}{\mathbb{P}(T \geq u\mid X)}-\beta(X) \\
    &= \frac{\mathbb{P}(T>\max\{u,t\}\mid X)}{\mathbb{P}(T \geq u\mid X)}-\beta(X) \\
    &= \frac{S(\max\{u,t\}\mid X)}{S(u\mid X)}-\beta(X).
\end{align*}
Substituting into Equation~\ref{eqn:aipcw_general}, we obtain the observed-data estimating equation
\[ \beta(x) = \mathbb{E} \left[ \frac{\delta \mathbf{1}[Y > t]}{G(Y \mid X)} + \int_0^\infty \frac{S(\max\{u,t\}\mid X)}{S(u \mid X) G(u \mid X)}dM_{C \mid X} (u \mid X) \ \ \bigg| \ \ X = x \right] \Big/ W(x), \]
where 
\[ W(x) = \mathbb{E} \left[\frac{\delta}{G(Y\mid X)} + \int_0^\infty \frac{dM_{C\mid X}(u\mid X)}{G(u\mid X)} \ \  \bigg| \ \ X = x \right]. \]
We now show that $W(x) = 1$. Since $dN_C(u)$ increments at $Y$ if and only if $\delta = 0$,
\begin{align*}
    W(x) &= \mathbb{E}\left[\frac{\delta}{G(Y\mid X)}+\int_0^\infty \frac{1}{G(u\mid X)}\,dN_C(u)-\int_0^\infty \frac{\mathbf{1}[Y\ge u]}{G(u\mid X)}\,d\Lambda_{C\mid X}(u\mid X)\;\bigg|\;X=x\right] \\
    &= \mathbb{E}\left[\frac{\delta}{G(Y\mid X)}+\frac{1-\delta}{G(Y\mid X)}-\int_0^Y \frac{d\Lambda_{C\mid X}(u\mid X)}{G(u\mid X)} \;\bigg|\;X=x\right] \\
    &= \mathbb{E}\left[\frac{\delta}{G(Y\mid X)}+\frac{1-\delta}{G(Y\mid X)}-\int_0^Y d \left( \frac{1}{G(u\mid X)} \right) \;\bigg|\;X=x\right] \hspace{24pt} \tag*{(Lemma~\ref{lemma:dS_to_dLambda})} \\
    &= \mathbb{E}\left[\frac{1}{G(Y\mid X)}-\left\{\frac{1}{G(Y\mid X)}-1\right\}\;\bigg|\;X=x\right] = 1.
\end{align*}
Thus, we have the AIPCW estimating equation
\[ \beta(x) = \mathbb{E} \left[ \frac{\delta \mathbf{1}[Y > t]}{G(Y \mid X)} + \int_0^\infty \frac{S(\max\{u,t\}\mid X)}{S(u \mid X) G(u \mid X)}dM_{C \mid X} (u \mid X) \ \bigg| \ X = x \right]. \]

\subsubsection{Censoring and event martingale representations for AIPCW estimators}
The construction in Tsiatis for the efficient AIPCW estimator uses the censoring martingale difference $dM_{C \mid X}$. However, for our doubly robust pseudo-outcomes in Equation~\ref{eq:dr_pseudo_label}, we use the event martingale difference $dM_{T \mid X}$. We will now show that these representations are equivalent.

\begin{lemma}
    \normalsize{Suppose assumption~\ref{ass:independent_censoring} holds. For any time $t > 0$ and continuous functions $S$ and $G$, the censoring and event martingale representations of the AIPCW estimator are equivalent, i.e.}
    \begin{align*}
        \frac{\mathbf{1}[Y > t]}{{G(Y \mid X)}} + \int_0^\infty \frac{S(\max \{ u, t \} \mid X)}{S(u \mid X) G(u \mid X)} \,dM_{C \mid X} &(u \mid X) = \phantom{.} \\
        &S(t \mid X) - \int_0^t \frac{S(t \mid X)}{S(u \mid X) G(u \mid X)} \,dM_{T \mid X} (u \mid X).
    \end{align*}
\end{lemma}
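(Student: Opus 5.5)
The plan is to prove the identity \emph{pathwise}: for every fixed realization of $(X,Y,\delta)$ the two sides agree, so no expectation is needed. Assumption~\ref{ass:independent_censoring} only matters for interpreting $dM_{C\mid X}$ and $dM_{T\mid X}$ as martingales, not for the algebra.

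I read the first term on the left as the IPCW term $\delta\,\mathbf{1}[Y>t]/G(Y\mid X)$, consistent with the AIPCW estimating equation derived just above. The proof should state this reading explicitly.

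Throughout I suppress the conditioning on $X$. The key tool is a product version of Lemma~\ref{lemma:dS_to_dLambda}: for continuous $S$ and $G$,
\[ d\!\left(\frac{1}{S(u)G(u)}\right) = \frac{1}{S(u)G(u)}\bigl(d\Lambda_{T\mid X}(u)+d\Lambda_{C\mid X}(u)\bigr). \]
This follows by writing $1/(SG)=\exp\{\Lambda_{T\mid X}+\Lambda_{C\mid X}\}$. Integrating from $0$ to $s$ gives
\[ \int_0^s \frac{d\Lambda_{T\mid X}+d\Lambda_{C\mid X}}{SG} = \frac{1}{S(s)G(s)}-1. \]

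\textbf{Step 1: expand both sides.} Split each martingale increment into its counting-process part and its compensator part.
\begin{itemize}
\item The counting parts collapse to point evaluations at $Y$: $\int_0^t \frac{S(t)}{SG}\,dN_T = \delta\,\mathbf{1}[Y\le t]\,\frac{S(t)}{S(Y)G(Y)}$, and $\int_0^\infty \frac{S(\max\{u,t\})}{SG}\,dN_C = (1-\delta)\,\frac{S(\max\{Y,t\})}{S(Y)G(Y)}$.
\item The compensator parts become integrals over $[0,\min\{Y,t\}]$ against $d\Lambda_{T\mid X}$ on the right, and over $[0,Y]$ against $d\Lambda_{C\mid X}$ on the left.
\end{itemize}

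\textbf{Step 2, case $Y\le t$.} Here $\max\{u,t\}=t$ on the whole range of integration, so the left side becomes
\[ (1-\delta)\frac{S(t)}{S(Y)G(Y)} - S(t)\int_0^Y\frac{d\Lambda_{C\mid X}}{SG}, \]
and the right side becomes
\[ S(t)-\delta\frac{S(t)}{S(Y)G(Y)}+S(t)\int_0^Y\frac{d\Lambda_{T\mid X}}{SG}. \]
Their difference is
\[ S(t)\Bigl[\frac{1}{S(Y)G(Y)}-1-\int_0^Y\frac{d\Lambda_{T\mid X}+d\Lambda_{C\mid X}}{SG}\Bigr], \]
which vanishes by the integrated product identity with $s=Y$.

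\textbf{Step 3, case $Y>t$.} Split the censoring compensator integral at $t$. On $[t,Y]$ the integrand is $\frac{S(u)}{S(u)G(u)}\,d\Lambda_{C\mid X}=\frac{d\Lambda_{C\mid X}}{G}$, which integrates to $\frac{1}{G(Y)}-\frac{1}{G(t)}$ by Lemma~\ref{lemma:dS_to_dLambda} applied to $G$. Combining with the point terms $\delta/G(Y)+(1-\delta)/G(Y)=1/G(Y)$, the left side reduces to
\[ \frac{1}{G(t)}-S(t)\int_0^t\frac{d\Lambda_{C\mid X}}{SG}. \]
The right side is
\[ S(t)-S(t)\int_0^t\frac{d\Lambda_{T\mid X}}{SG}. \]
Their difference equals $\frac{1}{G(t)}-S(t)-S(t)\bigl(\frac{1}{S(t)G(t)}-1\bigr)=0$, using the integrated product identity with $s=t$.

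\textbf{Main obstacle.} The algebra is routine. The delicate points are bookkeeping:
\begin{itemize}
\item Using $\mathbf{1}[Y\ge u]$ correctly so that compensators stop at $\min\{Y,t\}$.
\item The switch of the integrand at $u=t$ in the censoring term.
\item The general non-continuous case (Breslow-type step functions). There the product rule for $1/(SG)$ picks up a cross term $\Delta\Lambda_{T\mid X}\Delta\Lambda_{C\mid X}$, and left limits must be used as the paper indicates.
\end{itemize}
I would handle discontinuities by replacing the exponential formula with product integrals and verifying the same two-case cancellation at each jump. Alternatively, as the paper does, I would restrict to the continuous case and remark that the discrete case follows analogously.
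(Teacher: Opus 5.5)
Your proof is correct. Your reading of the first term as $\delta\,\mathbf{1}[Y>t]/G(Y\mid X)$ is exactly what the paper's argument establishes. The paper's auxiliary identity $\frac{\mathbf{1}[Y>t]}{G(t\mid X)}=\frac{\delta\mathbf{1}[Y>t]}{G(Y\mid X)}+\int_t^\infty \frac{dM_{C\mid X}(u\mid X)}{G(u\mid X)}$ is what introduces the factor $\delta$. Without that factor the printed statement is false: for $Y>t$, $\delta=0$ the left side exceeds the right by $1/G(Y\mid X)$.

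There is one slip to fix. In Step~3 the right side should be $S(t)+S(t)\int_0^t \frac{d\Lambda_{T\mid X}}{SG}$, with a plus sign, because $dM_{T\mid X}=-d\Lambda_{T\mid X}$ on $[0,t]$ when $Y>t$. You had this sign correct in Step~2. Your final cancellation $\frac{1}{G(t)}-S(t)-S(t)\bigl(\frac{1}{S(t)G(t)}-1\bigr)=0$ is the one the correct sign produces, so the conclusion stands.

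Your route differs from the paper's in organization, not in the key fact. The paper avoids any case split on $Y$. With $H(u)=S(t)/(S(u)G(u))$ and $R(u)=\mathbf{1}[Y\ge u]$, it uses the product rule $d[HR]=-H\,(dM_{T\mid X}+dM_{C\mid X})$ on $[0,t]$ and $d[R/G]=-(dM_{C\mid X}+dN_T)/G$ on $[t,\infty)$, integrates each, and adds. You instead expand every martingale integral into a point evaluation at $Y$ plus a compensator integral, then check $Y\le t$ and $Y>t$ separately. Both rest on the same exponential identity $d\bigl(1/(SG)\bigr)=\bigl(d\Lambda_{T\mid X}+d\Lambda_{C\mid X}\bigr)/(SG)$, which is the paper's $dH=H\,(d\Lambda_{T\mid X}+d\Lambda_{C\mid X})$.

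Your version is more elementary. It makes visible that the identity is purely pathwise: assumption~\ref{ass:independent_censoring} is not used in either proof. It also shows why $\delta$ is needed, through $\delta/G(Y)+(1-\delta)/G(Y)=1/G(Y)$. The paper's integration-by-parts form is shorter and case-free. It also isolates the identity for $\int_t^\infty dM_{C\mid X}/G$ as a standalone equation, which the paper reuses in the censoring-martingale bias bound.

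Both arguments tacitly need $S$ and $G$ to be positive on $[0,Y]$ so that every term is finite. The paper invokes~\ref{ass:positivity} for this, and you should state it explicitly too.
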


\begin{proof} 
Define $H(u) = \frac{S(t \mid X)}{S(u \mid X) G(u \mid X)}$ for $0 \leq u \leq t$. Then,
\[ dH(u) = H(u) \left( d\Lambda_{T \mid X} (u \mid X) + d\Lambda_{C \mid X} (u \mid X) \right). \]
Define the at-risk indicator $R(u) = \mathbf{1}[Y \geq u]$, with $dR(u) = -dN_T(u) - dN_C(u)$. Then,
\begin{align*}
    d[H(u) R(u)] &= H(u) R(u) \left( d\Lambda_{T \mid X} (u \mid X) + d\Lambda_{C \mid X} (u \mid X) \right) - H(u) \left( dN_T(u) + dN_C(u) \right)\\
    &= H(u) \left( -dM_{T \mid X} (u \mid X) - dM_{C \mid X} (u \mid X) \right).
\end{align*}
Integrating and using $S(0 \mid X) = G(0 \mid X) = R(0) = 1$ gives 
{\small \[ -\int_0^t H(u) dM_{T \mid X} (u \mid X) - \int_0^t H(u) dM_{C \mid X} (u \mid X) = H(t) R(t) - H(0) R(0) = \frac{\mathbf{1}[Y > t]}{G(t \mid X)} - S(t \mid X). \]
}Rearranging the result and expressing in terms of $S$ and $G$ gives,
{\small  \begin{equation} \label{eqn:eif_equiv_term1}
    \frac{\mathbf{1}[Y > t]}{G(t \mid X)} - S(t \mid X) + \int_0^t \frac{S(t \mid X)}{S(u \mid X) G(u \mid X)} \,dM_{C \mid X} (u \mid X) = - \int_0^t \frac{S(t \mid X)}{S(u \mid X) G(u \mid X)} dM_{T \mid X} (u \mid X).
\end{equation}
}We now analyze the component of the censoring martingale integral from $t$ to $\infty$. Using Lemma~\ref{lemma:dS_to_dLambda},
\begin{align*}
    d \left( \frac{R(u)}{G(u \mid X)} \right) &= -\frac{dN_T(u) + dN_C(u)}{G(u \mid X)} + \frac{R(u)}{G(u \mid X)} d\Lambda_{C \mid X} (u \mid X). \\
    &= -\frac{dM_{C \mid X} (u \mid X)}{G(u \mid X)} - \frac{dN_T(u)}{G(u \mid X)}.
\end{align*}
By Assumption~\ref{ass:positivity}, we have $G(Y \mid X) > 0$, so
\[ -\int_t^\infty \frac{dM_{C \mid X} (u \mid X)}{G(u \mid X)} - \int_t^\infty \frac{dN_T(u)}{G(u \mid X)} = \frac{R(u)}{G(u \mid X)} \,\bigg|_t^\infty = -\frac{\mathbf{1}[Y > t]}{G(t \mid X)}. \]
Since $dN_T(u)$ increments at $u = Y$ only when $Y > t$ and $\delta = 1$,
\[ \int_t^\infty \frac{dN_T(u)}{G(u \mid X)} = \frac{\delta \mathbf{1}[Y > t]}{G(Y \mid X)}. \]
Thus, we have that
\begin{equation} \label{eqn:eif_equiv_term2}
    \frac{\mathbf{1}[Y > t]}{G(t \mid X)} = \frac{\delta \mathbf{1}[Y > t]}{G(Y \mid X)} + \int_t^\infty \frac{dM_{C \mid X} (u \mid X)}{G(u \mid X)}.
\end{equation}
Since,
\[ { \renewcommand{\arraystretch}{1.5} 
    \frac{S(\max \{u, t\} \mid X)}{S(u \mid X) G(u \mid X)} = 
    \left\{ \begin{array}{c l}
        \frac{S(t \mid X)}{S(u \mid X) G(u \mid X)} & u < t \\
        \frac{1}{G(u \mid X)} & u \geq t,
    \end{array} \right.
} \]
substituting Equation~\ref{eqn:eif_equiv_term2} into Equation~\ref{eqn:eif_equiv_term1} gives the desired equivalence.
\end{proof}

The equivalence between the censoring and event martingale representation holds for the plug-in AIPCW estimator using $\widehat{S}$ and $\widehat{G}$ as well under our continuity simplification. We will use the two different expressions to derive separate bounds for the bias.

\subsubsection{Event martingale bound}
Define the target functional
\[ \psi(t,x;\eta) = \mathbb{E} \big[ \tilde{S}^\text{DR}(t \mid x; \widehat{\eta}) \ \big| \ X = x \big]. \]
The bias of the pseudo-outcome is
\[ B(t,x;\widehat{\eta}) = |\psi(t,x;\widehat{\eta}) - S_0(t \mid x)|. \]
Using the event martingale representation,
\[ B(t,x;\widehat{\eta}) = \left| \mathbb{E} \left[ \widehat{S}(t \mid X) - S_0(t \mid X) - \int_0^t \frac{\widehat{S}(t \mid X)}{\widehat{S}(u \mid X) \widehat{G}(u \mid X)} d\widehat{M}_{T \mid X} (u \mid X) \ \Big| \ X = x \right] \right|. \]
Note that the only randomness in this expression is in the event process difference $d\widehat{M}_{T \mid X}$. Let $\Lambda_{0, T \mid X} (t \mid X) = -\log S_0(t \mid X)$ and $\widehat{\Lambda}_{T \mid X} (t \mid X) = -\log \widehat{S} (t \mid X)$ be the true and estimated cumulative hazard functions, respectively. Under assumption~\ref{ass:independent_censoring},
\begin{align*}
    \mathbb{E}\left[dN_T(u)\mid X = x \right] &= \mathbb{P}(Y \in du,\delta = 1\mid X = x) \\
    &= \mathbb{P}(T \in du, \, C \geq u\mid X = x) \\
    &= \mathbb{P}(C \geq u\mid X = x) \, \mathbb{P}(T \in du\mid X = x) \\
    &= S_0(u\mid x) \, G_0(u\mid x) \, d\Lambda_{0,T\mid X}(u\mid x). \\
    \mathbb{E}\left[\mathbf{1}[Y\ge u]\mid X = x \right] &= \mathbb{P}(Y\ge u\mid X = x) \\
    &= \mathbb{P}(T\ge u,C\ge u\mid X = x) \\
    &= S_0(u\mid x) \, G_0(u\mid x).
\end{align*}
Thus, we have that 
\[ \mathbb{E} \big[ d\widehat{M}_{T\mid X}(u\mid X)\mid X = x \big] = S_0(u\mid x)G_0(u\mid x)\left(d\Lambda_{0,T\mid X} (u \mid x) - d\widehat{\Lambda}_{T\mid X}(u\mid x) \right). \]
Substituting this result into the bias expression gives,
\begin{align*}
    B(t,x;\widehat{\eta}) = \bigg| \widehat{S}(t\mid x) &- S_0(t\mid x) \\
    &- \int_0^t \frac{\widehat{S}(t\mid x) S_0(u\mid x) G_0(u\mid x)}{\widehat{S}(u\mid x) \widehat{G}(u\mid x)} \left(d\Lambda_{0, T\mid X}(u\mid x) - d\widehat{\Lambda}_{T\mid X}(u\mid x) \right) \bigg|.
\end{align*}
Using Lemma~\ref{lemma:ratio_to_integral}, we have
\[ B(t,x;\widehat{\eta}) = \left| \int_0^t \frac{\widehat{S}(t\mid x) S_0(u\mid x)}{\widehat{S}(u\mid x)}\left(1 - \frac{G_0(u\mid x)}{\widehat{G}(u\mid x)}\right)\left(d\Lambda_{0,T\mid X}(u\mid x) - d\widehat{\Lambda}_{T\mid X}(u\mid x) \right) \right|. \]
Under Assumption~\ref{ass:positivity}, there exists some constant $c_1(t)$ dependent on $t$ such that $\widehat{S}(t \mid x) > 1 / c_1(t)$.
Then, we have the bound
\[ 0 \leq \frac{\widehat{S}(t\mid x)S_0(u\mid x)G_0(u\mid x)}{\widehat{S}(u\mid x)} \leq c_1(t), \quad \text{for all } u \in [0,t]. \]
Therefore, we have that $B(t,x;\widehat{\eta}) \leq B_1 (t, x; \widehat{\eta})$, where 
\[ B_1 (t, x; \widehat{\eta}) = c_1(t) \int_0^t \left|\frac{1}{G_0\vphantom{\widehat{G}}(u\mid x)} - \frac{1}{\widehat{G}(u \mid x)}\right|\left|d\Lambda_{0,T\mid X}(u\mid x) - d\widehat{\Lambda}_{T\mid X}(u\mid x)\right|. \]

\subsubsection{Censoring martingale bound}

We now analyze the bias using the censoring martingale representation. Recall the bias of the pseudo-outcome is 
\[ B(t,x;\widehat{\eta}) = |\psi(t,x;\widehat{\eta}) - S_0(t \mid x)|. \]
Using the censoring martingale representation,
\[ B(t,x;\widehat{\eta}) = \left| \mathbb{E} \left[ \frac{\delta \mathbf{1}[Y > t]}{\widehat{G}(Y \mid x)} - S_0(t \mid x) + \int_0^\infty \frac{\widehat{S}(\max\{u,t\}\mid x)}{\widehat{S}(u \mid x) \widehat{G}(u \mid x)}d\widehat{M}_{C \mid X} (u \mid x) \ \bigg| \ X = x \right] \right|, \]
where the randomness is in the IPCW estimation and censoring process difference $d\widehat{M}_{C \mid X} (u \mid X)$. Let $\Lambda_{0, C \mid X} (t \mid X) = -\log G_0(t \mid X)$ and $\widehat{\Lambda}_{C \mid X} (t \mid X) = -\log \widehat{G} (t \mid X)$ be the true and estimated cumulative hazard functions for censoring, respectively. Under assumption~\ref{ass:independent_censoring},
\begin{align*}
    \mathbb{E}\left[dN_C(u)\mid X\right] &= \mathbb{P}(Y \in du,\delta = 0\mid X) \\
    &= \mathbb{P}(C \in du,T \geq u\mid X) \\
    &= S_0(u\mid X)G_0(u\mid X)\,d\Lambda_{0,C\mid X}(u\mid X). \\
    \mathbb{E}\left[\mathbf{1}[Y\ge u]\mid X\right] &= \mathbb{P}(Y\ge u\mid X) \\
    &= \mathbb{P}(T\ge u,C\ge u\mid X) \\
    &= S_0(u\mid X)G_0(u\mid X).
\end{align*}
Thus,
\[ \mathbb{E}\left[d\widehat{M}_{C\mid X}(u\mid X)\mid X\right] = S_0(u\mid X)G_0(u\mid X)\left(d\Lambda_{0,C\mid X}(u\mid X) - d\widehat{\Lambda}_{C\mid X}(u\mid X)\right). \]
We can write the bias as
\begin{align*}
    B(t,X;\widehat{\eta}) &= \Bigg| \mathbb{E}\left[\frac{\delta \mathbf{1}[Y > t]}{\widehat{G}(Y\mid X)} + \int_t^\infty \frac{1}{\widehat{G}(u\mid X)}\,d\widehat{M}_{C\mid X}(u\mid X) \ \bigg| \ X = x \right] - S_0(t\mid X) \\
    &\phantom{=====}+ \int_0^t \frac{\widehat{S}(t\mid x) S_0(u\mid x)G_0(u\mid x)}{\widehat{S}(u\mid x)\widehat{G}(u\mid x)}\left(d\Lambda_{0,C\mid X}(u\mid x) - d\widehat{\Lambda}_{C\mid X}(u\mid x)\right) \Bigg|.
\end{align*}

We first simplify the expectation. Using Equation~\ref{eqn:eif_equiv_term2}, we have
\[ \frac{\mathbf{1}[Y>t]}{\widehat{G}(t\mid X)} = \frac{\delta\mathbf{1}[Y>t]}{\widehat{G}(Y\mid X)}+\int_t^\infty \frac{1}{\widehat{G}(u\mid X)}\,d\widehat{M}_{C\mid X}(u\mid X). \]

Taking conditional expectations yields
\[ \mathbb{E}\left[\frac{\delta \mathbf{1}[Y > t]}{\widehat{G}(Y\mid X)} + \int_t^\infty \frac{1}{\widehat{G}(u\mid X)}\,d\widehat{M}_{C\mid X}(u\mid X) \ \bigg| \  X = x \right] = \frac{S_0(t\mid x)G_0(t\mid x)}{\widehat{G}(t\mid x)}. \]
Therefore,
\begin{align*}
    B(t,x;\widehat{\eta}) &= \Bigg| S_0(t\mid x)\left(\frac{G_0(t\mid x)}{\widehat{G}(t\mid x)} - 1\right) \\
    &\phantom{======}+ \int_0^t \frac{\widehat{S}(t\mid x)S_0(u\mid x)G_0(u\mid X)}{\widehat{S}(u\mid x)\widehat{G}(u\mid x)}\left(d\Lambda_{0,C\mid X}(u\mid x) - d\widehat{\Lambda}_{C\mid X}(u\mid x)\right) \Bigg|.
\end{align*}
Using Lemma~\ref{lemma:ratio_to_integral},
\[ S_0(t\mid x)\left(\frac{G_0(t\mid x)}{\widehat{G}(t\mid x)} - 1\right) = -S_0(t\mid x)\int_0^t \frac{G_0(u\mid x)}{\widehat{G}(u\mid x)}\left(d\Lambda_{0,C\mid X}(u\mid x) - d\widehat{\Lambda}_{C\mid X}(u\mid x)\right). \]
Substituting this identity into the expression for the bias gives
{\small \[ B(t,x;\widehat{\eta}) = \left| \int_0^t \frac{G_0(u\mid x)}{\widehat{G}(u\mid x)}\left(\frac{\widehat{S}(t\mid x)S_0(u\mid x)}{\widehat{S}(u\mid x)} - S_0(t\mid x)\right)\left(d\Lambda_{0,C\mid X}(u\mid x) - d\widehat{\Lambda}_{C\mid X}(u\mid x)\right) \right|. \]
}Under Assumption~\ref{ass:positivity}, there exists some constant $c_2(t)$ depending on $t$ such that $\widehat{G}(t \mid x) > 1 / c_2(t)$. Then, we have the bound
\[ 0 \leq \frac{S_0(u\mid x)G_0(u\mid x)}{\widehat{G}(u\mid x)} \leq c_2(t), \quad \text{for all } u \in [0,t]. \]
Therefore, we have that $B(t,x;\widehat{\eta}) \leq B_2(t,x;\widehat{\eta})$, where
\[ B_2(t,x;\widehat{\eta}) = c_2(t) \int_0^t \left|\frac{\widehat{S}(t\mid x)}{\widehat{S}(u\mid x)} - \frac{S_0(t\mid x)}{S_0(u\mid x)}\right|\left|d\Lambda_{0,C\mid X}(u\mid x) - d\widehat{\Lambda}_{C\mid X}(u\mid x)\right|. \]

\subsubsection{Pointwise convergence of bias}

The mixed-product bounds imply pointwise convergence of the bias under local uniform consistency of either nuisance parameter. We formalize this in the following lemma.

\begin{lemma}
\label{lemma:bias_convergence}
    Suppose the assumptions of Theorem~\ref{thm:dr} hold. Fix a finite time $t$ and a covariate value $x$ and let the bias of the pseudo-outcome be
    \[ B(t, x; \widehat{\eta}) = |\mathbb{E}[ \tilde{S}^\mathrm{DR}(t \mid X; \widehat{\eta}) \mid X = x ] - S_0(t\mid x)|. \]
    If either of $\widehat{S}$ or $\widehat{G}$ converges to the true value uniformly on $[0,t]$,
    \[ \sup_{u \in [0,t]} |\widehat{S}(u \mid x) - S_0 (u \mid x)| \stackrel{p}{\longrightarrow} 0 \quad \text{or} \quad \sup_{u \in [0,t]} |\widehat{G}(u \mid x) - G_0 (u \mid x)| \stackrel{p}{\longrightarrow} 0, \]
    then, $B(t, x; \widehat{\eta}) \stackrel{p}{\longrightarrow} 0$.
\end{lemma}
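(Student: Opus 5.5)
We apply Theorem~\ref{thm:dr}, which gives $B(t,x;\widehat{\eta}) \le \min\{B_1,B_2\}$. It then suffices to show that $B_1 \stackrel{p}{\to} 0$ when $\widehat{S}$ converges uniformly on $[0,t]$, and that $B_2 \stackrel{p}{\to} 0$ when $\widehat{G}$ converges. In each case we bound one factor of the mixed product uniformly in $u$ by something that vanishes. We then bound the total variation of the other factor by a deterministic constant, using Assumption~\ref{ass:positivity}.

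\emph{Case 1: $\widehat{S} \to S_0$ uniformly on $[0,t]$.} In $B_1$, by~\ref{ass:positivity} the first factor satisfies $|1/\widehat{G} - 1/G_0| \le 2c_2(t)$ uniformly for $u \le t$. Hence
\[ B_1 \le 2c_1(t)c_2(t) \int_0^t |d\Lambda_{0,T\mid X}(u\mid x) - d\widehat{\Lambda}_{T\mid X}(u\mid x)|. \]
Convergence of this total-variation integral is the delicate point. Uniform convergence of $\widehat{S}$ gives $\sup_{u\le t}|\widehat{\Lambda}(u) - \Lambda_0(u)| \stackrel{p}{\to} 0$, because $-\log$ is Lipschitz on $[1/c_1(t),1]$. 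However, sup-norm convergence of cumulative hazards does not imply convergence of $\int|d\Lambda_0 - d\widehat{\Lambda}|$ in general.

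To avoid this, I would bound the bias more carefully before taking absolute values inside the integral. Start from the exact expression
\[ \int_0^t \frac{\widehat{S}(t)S_0(u)}{\widehat{S}(u)}\Big(1-\frac{G_0(u)}{\widehat{G}(u)}\Big)\,d(\Lambda_0-\widehat{\Lambda})(u) \]
and integrate by parts, moving the differential onto the integrand. The integrand $g(u)$ is a product of monotone, bounded functions, so it has bounded variation, with variation controlled by $c_1(t)$ and $c_2(t)$. Integration by parts gives
\[ \Big|\int g\, d(\Lambda_0-\widehat{\Lambda})\Big| \le \big(2\sup|g| + \mathrm{TV}(g)\big)\sup_{u\le t}|\Lambda_0-\widehat{\Lambda}|, \]
which tends to $0$ in probability.

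Alternatively, on the finite grid $\mathcal{T}$ used in practice, the integral is a finite sum. In that setting, uniform convergence of $\widehat{S}$ at the grid points directly gives convergence of each hazard increment $d\widehat{\Lambda}(t_k) = 1 - \widehat{S}(t_k)/\widehat{S}(t_{k-1})$, so $B_1 \to 0$ immediately. I would present the grid argument as the main route and the integration-by-parts argument as the continuous analogue.

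\emph{Case 2: $\widehat{G} \to G_0$ uniformly on $[0,t]$.} This is symmetric. In $B_2$, the factor $|\widehat{S}(t)/\widehat{S}(u) - S_0(t)/S_0(u)|$ is bounded by $2$, since each ratio lies in $[0,1]$ for $u \le t$. It remains to show that $d\widehat{\Lambda}_C \to d\Lambda_{0,C}$ in the same weak sense. This follows as in Case 1, because $-\log$ is Lipschitz on $[1/c_2(t),1]$. If the total-variation form is problematic, again integrate by parts against the bounded-variation integrand $\frac{G_0}{\widehat{G}}\big(\widehat{S}(t)S_0(u)/\widehat{S}(u) - S_0(t)\big)$ in the exact bias expression.

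\emph{Main obstacle.} The main obstacle is the step from sup-norm convergence of $\widehat{S}$ or $\widehat{G}$ to convergence of the integral against $|d\Lambda_0 - d\widehat{\Lambda}|$. I plan to handle it through the finite-grid reduction, with integration by parts as the continuous-time fallback. All remaining steps are deterministic bounds from~\ref{ass:positivity}, combined with the continuous mapping theorem.
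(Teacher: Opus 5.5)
\textbf{The gap: the nuisances are paired with the wrong bounds.} You reduce the lemma to showing $B_1\to 0$ when $\widehat{S}\to S_0$, and $B_2\to 0$ when $\widehat{G}\to G_0$. Both claims are false in general. Suppose $\widehat{S}\to S_0$ uniformly while $\widehat{G}$ is inconsistent. Take $S_0$ continuous and $\widehat{S}$ a Breslow step function. Then $d\Lambda_{0,T\mid X}$ and $d\widehat{\Lambda}_{T\mid X}$ are mutually singular, so
$\int_0^t|d\Lambda_{0,T\mid X}-d\widehat{\Lambda}_{T\mid X}|=\Lambda_{0,T\mid X}(t\mid x)+\widehat{\Lambda}_{T\mid X}(t\mid x)\to 2\Lambda_{0,T\mid X}(t\mid x)$,
and $B_1$ need not vanish. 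The same happens with continuous oscillating hazards, e.g. $\widehat{\Lambda}(u)=u+\sin(2nu)/(2n)$ against $\Lambda_0(u)=u$.

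You correctly sense this obstacle, but your designated main route does not remove it. The finite-grid argument gives a vanishing sum only if the true $S_0$ is itself piecewise constant on a fixed grid. That is not the setting of the lemma, and the paper's grid is data-dependent, augmented with all observed times.

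Your integration-by-parts fallback does work. On $[0,t]$, the integrand of the signed exact bias expression is built from products of monotone functions bounded via Assumption~\ref{ass:positivity}. It therefore has deterministically bounded variation, so the integral is at most a constant times $\sup_{u\le t}|\Lambda_{0,T\mid X}-\widehat{\Lambda}_{T\mid X}|$. The same holds for the censoring representation. However, this route bypasses $B_1$ and $B_2$ rather than using Theorem~\ref{thm:dr}.

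\textbf{The missing idea.} In each mixed product, the hazard-difference factor never needs to be small, only bounded, and it always is bounded. Both cumulative hazards are nondecreasing and, by Assumption~\ref{ass:positivity}, at most $\log c_1(t)$ on $[0,t]$. Hence $\int_0^t|d\Lambda_{0,T\mid X}-d\widehat{\Lambda}_{T\mid X}|\le\Lambda_{0,T\mid X}(t\mid x)+\widehat{\Lambda}_{T\mid X}(t\mid x)\le 2\log c_1(t)$, and likewise with $2\log c_2(t)$ for censoring. The paper therefore pairs the other way.

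If $\widehat{G}\to G_0$, then $|1/\widehat{G}-1/G_0|\le c_2(t)^2|\widehat{G}-G_0|$, which gives
$B_1\le 2c_1(t)c_2(t)^2\log c_1(t)\sup_{u\le t}|\widehat{G}(u\mid x)-G_0(u\mid x)|$.

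If $\widehat{S}\to S_0$, then $|\widehat{S}(t\mid x)/\widehat{S}(u\mid x)-S_0(t\mid x)/S_0(u\mid x)|\le 2c_1(t)^2\sup_{v\le t}|\widehat{S}(v\mid x)-S_0(v\mid x)|$, which gives
$B_2\le 4c_1(t)^2c_2(t)\log c_2(t)\sup_{u\le t}|\widehat{S}(u\mid x)-S_0(u\mid x)|$.

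Both right-hand sides tend to zero in probability. No weak convergence of hazard measures, grid argument, or integration by parts is needed.
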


\begin{proof}
    By Theorem~\ref{thm:dr},
    \[ B(t,x;\widehat\eta) \le \min\{B_1(t,x;\widehat\eta),B_2(t,x;\widehat\eta)\}. \]
    Therefore, it suffices to show that either \(B_1(t,x;\widehat\eta)\stackrel{p}{\to}0\) or \(B_2(t,x;\widehat\eta)\stackrel{p}{\to}0\). First suppose
    \[ \sup_{u\in[0,t]} |\widehat G(u\mid x)-G_0(u\mid x)| \stackrel{p}{\longrightarrow} 0. \]
    By assumption~\ref{ass:positivity},
    {\small \[ \sup_{u\in[0,t]} \left|\frac{1}{\widehat G(u\mid x)}-\frac{1}{G_0(u\mid x)}\right| = \sup_{u\in[0,t]} \frac{|\widehat G(u\mid x)-G_0(u\mid x)|}{\widehat G(u\mid x)G_0(u\mid x)} \le c_2^2(t) \sup_{u\in[0,t]} |\widehat G(u\mid x)-G_0(u\mid x)|. \]
    }Moreover, we can bound the total variation distance between hazard difference measures using
    \[ \int_0^t \left|d\Lambda_{0,T\mid X}(u\mid x)-d\widehat\Lambda_{T\mid X}(u\mid x)\right| \le \Lambda_{0,T\mid X}(t\mid x)+\widehat\Lambda_{T\mid X}(t\mid x) \le 2\log c_1(t). \]
    Therefore,
    \begin{align*}
        B_1(t,x;\widehat\eta) &\le c_1(t) \sup_{u\in[0,t]} \left|\frac{1}{\widehat G(u\mid x)}-\frac{1}{G_0(u\mid x)}\right|\int_0^t \left|d\Lambda_{0,T\mid X}(u\mid x)-d\widehat\Lambda_{T\mid X}(u\mid x)\right| \\
        &\leq 2 c_1(t) \log c_1(t) \sup_{u\in[0,t]} \left|\frac{1}{\widehat G(u\mid x)}-\frac{1}{G_0(u\mid x)}\right| \\
        &\leq 2 c_1(t) c_2^2(t) \log c_1(t) \sup_{u\in[0,t]} |\widehat G(u\mid x)-G_0(u\mid x)|. 
    \end{align*}
    Thus, $B_1(t,x;\widehat\eta) \stackrel{p}{\longrightarrow} 0$. Now suppose instead that
    \[ \sup_{u\in[0,t]} |\widehat S(u\mid x)-S_0(u\mid x)| \stackrel{p}{\longrightarrow} 0. \]
    By assumption~\ref{ass:positivity},
    \[ \sup_{u\in[0,t]} \left|\frac{\widehat S(t\mid x)}{\widehat S(u\mid x)}-\frac{S_0(t\mid x)}{S_0(u\mid x)}\right| \le c_1^2(t) \sup_{u\in[0,t]} \left|\widehat S(t\mid x)S_0(u\mid x)-S_0(t\mid x)\widehat S(u\mid x)\right|. \]
    Using the triangle inequality,
    \begin{align*}
        \left|\widehat S(t\mid x)S_0(u\mid x)-S_0(t\mid x)\widehat S(u\mid x)\right| &\le \left| \widehat S(t\mid x)-S_0(t\mid x) \right| S_0(u\mid x)\\
        &\phantom{======}\,\,+ \left| S_0(u\mid x)-\widehat S(u\mid x) \right| S_0(t\mid x) \\
        &\le 2\sup_{v\in[0,t]} |\widehat S(v\mid x)-S_0(v\mid x)|.
    \end{align*}
    Like before, we can bound the total variation of the hazard difference measures using
    \[ \int_0^t \left|d\Lambda_{0,C\mid X}(u\mid x)-d\widehat\Lambda_{C\mid X}(u\mid x)\right| \le \Lambda_{0,C\mid X}(t\mid x)+\widehat\Lambda_{C\mid X}(t\mid x) \le 2\log c_2(t). \]
    Therefore,
    \begin{align*}
        B_2(t,x;\widehat\eta) &\le c_2(t) \sup_{u\in[0,t]} \left|\frac{\widehat S(t\mid x)}{\widehat S(u\mid x)}-\frac{S_0(t\mid x)}{S_0(u\mid x)}\right|\int_0^t \left|d\Lambda_{0,C\mid X}(u\mid x)-d\widehat\Lambda_{C\mid X}(u\mid x)\right| \\
        &\leq 2c_2(t) \log c_2(t) \sup_{u\in[0,t]} \left|\frac{\widehat S(t\mid x)}{\widehat S(u\mid x)}-\frac{S_0(t\mid x)}{S_0(u\mid x)}\right| \\
        &\leq 4c_2(t) c_1^2(t) \log c_2(t) \sup_{u\in[0,t]} |\widehat S(u\mid x)-S_0(u\mid x)|.
    \end{align*}
    Thus, \(B_2(t,x;\widehat\eta)\stackrel{p}{\longrightarrow}0\). Combining the two cases proves that \(B(t,x;\widehat\eta)\stackrel{p}{\longrightarrow}0\).
\end{proof}

\newpage
\subsection{Proof of Theorem 2}
\label{app:sec:proof_consistency}

Our proof proceeds in 4 main steps. 

(i) We first show that the isotonic regression estimate at any specific time converges to the $L_2(P)$ projection of the true survival function. (ii) Next, we prove pointwise consistency at a specific time. (iii) Then, we prove uniform consistency at a specific time. (iv) Finally, we show how these uniformly consistent estimates at each time are also naturally non-increasing in time, leading to Dykstra's projection algorithm terminating after 1 step and forming our DR-ISR estimate.

\subsubsection{Convergence to $L_2(P)$ projection}
\label{app:sec:l2p_convergence_proof}

\paragraph{Additional Notation:}
For this part of the proof, we first fix a time $t\geq0$.

Let us define the $L_2(P)$ distance as usual
\[\|f-g\|_P = \left(\int(f(r)-g(r))^2p(r)dr\right)^\frac{1}{2}. \]

Let $\mathcal{M}$ be the class of non-increasing 1D functions. Define the $L_2(P)$ projection of $S_{0,r}(t\mid r)$ onto this class as
\[S^* = \underset{g \in \mathcal{M}}{\argmin} \int_0^1 (S_{0,r}(t \mid r) - g(r))^2 p(r)dr.\]
If the risk function is well specified such that the true survival probabilities are non increasing in the risk (assumption~\ref{ass:risk_correctness}), then this is exactly equal to $S_{0,r}(t\mid r)$. However, we will prove convergence of the isotonic regression estimator to $S^*$ in probability in the $L_2(P)$ norm without this assumption since this will be useful in our proof of Theorem~\ref{thm:calibration}.

Let $\hat{f}_n$ denote the IR solution when using $n$ calibration datapoints to regress the pseudo-outcomes onto the risks. That is,
\[\hat{f}_n = \underset{g \in \mathcal{M}}{\argmin}\sum_{i=1}^n(\tilde{S}_\text{DR}(t \mid X_i; \widehat{\eta}_n) - g(r(X_i)))^2,\]
where $\widehat{\eta}_n=(\widehat{S}_n,\widehat{G}_n)$ are the initial estimates of the survival and censoring models used to construct the pseudo-outcomes when $n$ calibration datapoints were available.

We define the errors as
\[\epsilon_{n,i} = \tilde{S}_\text{DR}(t \mid X_i; \widehat{\eta}_n) - S_{0,r}(t\mid r(X_i)).\]

From Lemma~\ref{lemma:bias_convergence}, we have that if either 
\[ \sup_{u \in [0,t]} |\widehat{S}(u \mid x) - S_0 (u \mid x)| \stackrel{p}{\longrightarrow} 0 \quad \text{or} \quad \sup_{u \in [0,t]} |\widehat{G}(u \mid x) - G_0 (u \mid x)| \stackrel{p}{\longrightarrow} 0, \]
then $\mathbb{E}[\epsilon_{n,i} \mid X_i] \to 0$ as $n \to \infty$ for all $x$.

Further, note that the errors are equal to the integral term in Eq.~\ref{eq:dr_pseudo_label}. That is,

\begin{equation*}
    \epsilon_{n,i} = \widehat{S}_n(t \mid X_i) - S_0(t \mid X_i) - \int_0^t \frac{\widehat{S}_n(t \mid X_i)}{\widehat{S}_n(u \mid X_i) \widehat{G_n}(u \mid X_i)} \,d\widehat{M}_{T \mid X} (u \mid X_i).
\end{equation*}

Now, $\widehat{S}_n, \widehat{G}_n$ are estimated independently from the calibration data, and the random variable $d\widehat{M}_{T|X}(u|X)$ only depends on $\widehat{S}_n$ and the observed outcome for the $i^{th}$ calibration data point. Hence, the errors across the calibration data are independent and identically distributed at any fixed $n$. 

Lastly, since $t$ is fixed, the pseudo-outcomes are bounded by assumption~\ref{ass:positivity}, and hence the errors are bounded as well. Let the errors be bounded in absolute value by $M$.

\paragraph{Step 1:}
Let $\mu_{n,i} = \mathbb{E}[\epsilon_{n,i} \mid X_i]$ be the conditional bias and $\nu_{n,i} = \epsilon_{n,i} - \mu_{n,i}$ be the zero-mean stochastic noise. 

Note that given $X_i$, $\epsilon_{n,i}$ only depends on the realized outcome for $X_i$. Since the calibration data points are all independent, the conditional biases are also independent.

Since the $\mu_{n,i}$ and $\epsilon_{n,i}$ are independent across the calibration data, the zero-mean stochastic noise $\nu_{n,i}$ is also independent across the calibration data

We know $|\mu_{n,i}| \le M$ and $\mu_{n,i} \xrightarrow{p} 0$. By the bounded convergence theorem, we can interchange the limit and integral, proving that $\mathbb{E}[|\mu_{n,i}|] \to 0$ for all $i$. 

By Markov's inequality, the empirical average of the absolute conditional bias goes to zero in probability 
\begin{equation*}
    P\left( \frac{1}{n}\sum_{i=1}^n |\mu_{n,i}| > \delta \right) \le \frac{\frac{1}{n}\sum_{i=1}^n \mathbb{E}[|\mu_{n,i}|]}{\delta} \leq \frac{\max_{1\leq i\leq n}\mathbb{E}[|\mu_{n,i}|]}{\delta} \to 0.
\end{equation*}
Thus, $\frac{1}{n}\sum_{i=1}^n |\mu_{n,i}| \xrightarrow{p} 0$. 

The remaining stochastic noise $\nu_{n,i}$ forms a triangular array satisfying $\mathbb{E}[\nu_{n,i} \mid X_i] = 0$, is conditionally independent across $i$, and is strictly bounded in $[-2M, 2M]$.

\paragraph{Step 2:} First define the empirical risk over the noisy data, shifted by the errors as
\[ \hat{R}_n(g) := \frac{1}{n} \sum_{i=1}^n \left[ (\tilde{S}_\text{DR}(t \mid X_i; \widehat{\eta}_n) - g(r(X_i)))^2 - \epsilon_{n,i}^2 \right].\]
Also define the empirical risk over the noiseless data as 
\[\tilde{R}_n(g) := \frac{1}{n} \sum_{i=1}^n (S_{0,r}(t\mid r(X_i)) - g(r(X_i)))^2.\]
Finally, we have the population risk
 \[R(g) := \mathbb{E}_{X\sim P_X}[(S_{0,r}(t\mid r(X)) - g(r(X)))^2].\]

Because $S_0$ is bounded by $1$, $S^*$ is also bounded by $1$. Furthermore, since $|\tilde{S}_\text{DR}(t \mid X_i; \widehat{\eta}_n)| \leq 1 + M$, the isotonic regression estimator $\hat{f}_n$ is bounded by $1 + M$. Let $K = 1 + M$. We can restrict our search space to $\mathcal{M}^K = \{g \in \mathcal{M} : \|g\|_\infty \le K\}$.

Then for any $g\in\mathcal{M}^K$,
\begin{align*} 
    \hat{R}_n(g) - \tilde{R}_n(g) &= \frac{1}{n} \sum_{i=1}^n \left( (S_{0,r}(t\mid r(X_i)) + \epsilon_{n,i} - g(r(X_i)))^2 - \epsilon_{n,i}^2 - (S_{0,r}(t\mid r(X_i)) - g(r(X_i)))^2 \right) \\ 
    &= \frac{2}{n} \sum_{i=1}^n \epsilon_{n,i}(S_{0,r}(t\mid r(X_i)) - g(r(X_i))) .
\end{align*}

Decomposing $\epsilon_{n,i} = \mu_{n,i} + \nu_{n,i}$, we can bound the bias portion uniformly over all $g\in \mathcal{M}^K$ as
\begin{equation*}
    \sup_{g \in \mathcal{M}^K} \left| \frac{2}{n} \sum_{i=1}^n \mu_{n,i}(S_{0,r}(t\mid r(X_i)) - g(r(X_i))) \right| \leq 2(1 + K)\frac{1}{n} \sum_{i=1}^n |\mu_{n,i}|
\end{equation*}
From Step 1, this bias term converges to $0$ in probability.

\paragraph{Step 3:}
Note that $\mathcal{M}^K$ is a class of bounded monotonic functions. By Theorem 2.7.5 of van der Vaart and Wellner (1996)~\citep{van1996weak}, 
\[\log N_{[]}(\epsilon, \mathcal{M}^K, L_1(P)) \leq \frac{K\cdot C}{\epsilon},\]
where $N_{[]}$ is the bracketing number and $C$ is some global constant. Therefore, the bracketing number is finite for all $\epsilon>0$.

Define the class of difference functions $\mathcal{H} = \{ h_g(x) = S_{0,r}(t \mid r(x)) - g(r(x)) : g \in \mathcal{M}^K \}$. Since this is simply a bounded shift of $\mathcal{M}^K$, its bracketing number is also finite. 

Since the multipliers $\nu_{n,i}$ form a triangular array of conditionally independent, mean-zero random variables strictly bounded by $2M$, and $\mathcal{H}$ is a Glivenko-Cantelli class, the multiplier empirical process converges uniformly to zero in probability
\begin{equation*}
    \sup_{g \in \mathcal{M}^K} \left| \frac{2}{n} \sum_{i=1}^n \nu_{n,i}(S_{0,r}(t\mid r(X_i)) - g(r(X_i))) \right| = \sup_{h_g \in \mathcal{H}} \left| \frac{2}{n} \sum_{i=1}^n \nu_{n,i}h_g(X_i) \right| \xrightarrow{p} 0.
\end{equation*}
Combined with the uniform bound on the bias from Step 2, we have
\begin{equation*}
    \sup_{g \in \mathcal{M}^K} |\hat{R}_n(g) - \tilde{R}_n(g)| \xrightarrow{p} 0.
\end{equation*}

Now define $ \mathcal{F} = \{ h_g(x) = (S_{0,r}(t \mid r(x)) - g(r(x)))^2 : g \in \mathcal{M}^K \}$. 

For any two $g_1,g_2\in\mathcal{M}^K$,
\begin{align*} 
    |h_{g_1}(x) - h_{g_2}(x)| &= |(S_{0,r} - g_1(x))^2 - (S_{0,r} - g_2(x))^2| \\ 
    &= |2S_{0,r} - g_1(x) - g_2(x)| \cdot |g_2(x) - g_1(x)| \\ 
    &\leq 2(1+K) \cdot |g_2(x) - g_1(x)| .
\end{align*}
Therefore, if $[l,u]$ is an $\epsilon$ bracket for $\mathcal{M}^K$, it is an $\epsilon/(2(1+K))$ bracket for $\mathcal{F}$. Therefore, the bracketing number for $\mathcal{F}$ is also finite for all $\epsilon>0$. 

By Theorem 2.4.1 of van der Vaart and Wellner (1996)~\citep{van1996weak}, $\mathcal{F}$ is Glivenko-Cantelli. That is,
\begin{equation*}
    \sup_{g \in \mathcal{M}^K} |\tilde{R}_n(g) - R(g)| \xrightarrow{a.s.} 0.
\end{equation*}

\paragraph{Step 4:}
From Steps 2 and 3,
\begin{align*} 
    \sup_{g \in \mathcal{M}^K} |\hat{R}_n(g) - R(g)| &\leq \sup_{g \in \mathcal{M}^K} |\hat{R}_n(g) - \tilde{R}_n(g)| + \sup_{g \in \mathcal{M}^K} |\tilde{R}_n(g) - R(g)| \\ 
    &\xrightarrow{p}  0 + 0 .
\end{align*}

Now, $\hat{f}_n$ is the minimizer of $\hat{R}_n(g)$ over $\mathcal{M}^K$. Hence, $\hat{R}_n(\hat{f}_n) \leq \hat{R}_n(S^*)$. Also, $S^*$ is minimizer of $R(g)$ over $\mathcal{M}^K$. Hence, $R(S^*) \leq R(\hat{f}_n)$. Combining, 
\begin{align*} 
    0 &\leq R(\hat{f}_n) - R(S^*) \\ 
    &= (R(\hat{f}_n) - \hat{R}_n(\hat{f}_n)) + (\hat{R}_n(\hat{f}_n) - \hat{R}_n(S^*)) + (\hat{R}_n(S^*) - R(S^*)) \\ 
    &\leq (R(\hat{f}_n) - \hat{R}_n(\hat{f}_n)) + 0 + (\hat{R}_n(S^*) - R(S^*)) \\ 
    &\leq 2 \sup_{g \in \mathcal{M}^K} |\hat{R}_n(g) - R(g)| \\ 
    &\xrightarrow{p} 0 .
\end{align*}
Hence, $R(\hat{f}_n) \xrightarrow{p} R(S^*)$.

\paragraph{Step 5:}
Note that $\mathcal{M}$ is a closed convex cone. Hence, if $S^*$ is the projection of $S_{0,r}$ onto $\mathcal{M}$, then for any $g\in\mathcal{M}$,
\begin{equation*}
    \|S_{0,r}(t \mid r) - g(r)\|_P^2 \geq \|S_{0,r}(t \mid r) - S^*(r)\|_P^2 + \|g(r) - S^*(r)\|_P^2.
\end{equation*}

Note that $R(g) = \|g(r) - S_{0,r}(t \mid r)\|_P^2$ and pick $g=\hat{f}_n$, to get
\begin{equation*}
    \|\hat{f}_n(r) - S^*(r)\|_P^2 \le R(\hat{f}_n) - R(S^*).
\end{equation*}

Combining with Step 4,
\begin{equation*}
    \|\hat{f}_n - S^*\|_P \xrightarrow{p} 0.
\end{equation*}
In particular, under assumption~\ref{ass:risk_correctness}, we have that $\Sor$ is non-increasing itself and hence $S^*=\Sor$. Therefore,
\begin{equation*}
    \|\hat{f}_n(r) - \Sor\|_P \xrightarrow{p} 0.
\end{equation*}

\subsubsection{Pointwise Convergence}
\label{app:sec:pointwise_convergence}
We now show that pointwise convergence also holds. That is,
\[\lim_{n\rightarrow\infty}|\hat{f}_n(r)-S_{0,r}(t\mid r)|\xrightarrow{p}0.\]
We proceed by contradiction. Let pointwise convergence fail at $r_0$.
Since 
\begin{equation*}
    P(|\hat{f}_n(r_0) - S_{0,r}(t\mid r_0)| \geq \epsilon) = P(\hat{f}_n(r_0) - S_{0,r}(t\mid r_0) \geq \epsilon) + P(\hat{f}_n(r_0) - S_{0,r}(t\mid r_0) \leq -\epsilon),
\end{equation*}
for pointwise convergence to fail, there exists $\epsilon>0$ such that at least one of $P(\hat{f}_n(r_0) - S_{0,r}(t\mid r_0) \geq \epsilon) \nrightarrow 0 $ or $P(\hat{f}_n(r_0) - S_{0,r}(t\mid r_0) \leq -\epsilon) \nrightarrow 0$ must be true.

We consider the case that $P(\hat{f}_n(r_0) - S_{0,r}(t\mid r_0) \geq \epsilon) \nrightarrow 0$, the contradiction for the other case will follow symmetrically.

Since $S_{0,r}(t\mid r)$ is continuous at $r_0$, there exists $\delta>0$ such that for all $r\in[r_0-\delta,r_0]$,
\begin{equation*}
    S_{0,r}(t\mid r) - S_{0,r}(t\mid r_0) < \frac{\epsilon}{2}.
\end{equation*}
Since $\hat{f}_n$ in non-increasing, for all $x\in[r_0-\delta,r_0]$, we have $\hat{f}_n(r) \ge \hat{f}_n(r_0)$. If, $\hat{f}_n(r_0) \geq S_{0,r}(t\mid r_0) + \epsilon$ then, for all $r \in [r_0-\delta, r_0]$,
\begin{equation*}
    \hat{f}_n(r) \geq S_{0,r}(t\mid r_0) + \epsilon.
\end{equation*}
Combining the two,
\begin{equation*}
    \hat{f}_n(r) - S_{0,r}(t\mid r) > (S_{0,r}(t\mid r_0) + \epsilon) - \left(S_{0,r}(t\mid r_0) + \frac{\epsilon}{2}\right) = \frac{\epsilon}{2} \quad \text{for all } r \in [r_0-\delta, r_0].
\end{equation*}
But this implies the $L_2(P)$ distance is then at least
\begin{equation*}
    \|\hat{f}_n(r) - S_{0,r}(t\mid r)\|_P^2 \geq \int_{r_0-\delta}^{r_0} (\hat{f}_n(r) - S_{0,r}(t\mid r))^2 p(r) dr \geq \frac{\epsilon^2}{4} \int_{r_0-\delta}^{r_0} p(t) dt > 0,
\end{equation*}
since $p(t)$ is strictly positive by assumption~\ref{ass:continuous_density}. 

Hence,
\begin{align*}
     P(\|\hat{f}_n(r) - S_{0,r}(t\mid r)\|_P^2 > 0) \geq P(\hat{f}_n(r_0) - S_{0,r}(t\mid r_0) \geq \epsilon).
\end{align*}
By our assumption then
\begin{equation*}
    P(\hat{f}_n(r_0) - S_{0,r}(t\mid r_0) \geq \epsilon) \nrightarrow 0 \implies P(\|\hat{f}_n(r) - S_{0,r}(t\mid r)\|_P^2 > \epsilon) \nrightarrow 0,
\end{equation*}
leading to a contradiction. 
\subsubsection{Uniform Convergence}
From the previous part, we have that pointwise convergence holds at all $r\in(a,b)$. 

Consider $0<a<b<1$. Since $S_{0,r}(t\mid r)$ is continuous on $[a,b]$ and hence we can pick $K$ points such that $a=u_0<u_1<...<u_M=b$
such that
\begin{equation*}
    S_{0,r}(t\mid u_k) - S_{0,r}(t\mid u_{k-1}) \leq \epsilon \quad \text{for all } k = 1, \dots, M
\end{equation*}
for some $\epsilon>0$. 

Fix a $\delta>0$. From the definition of pointwise convergence, there exist $N_1,...,N_M$ such that for any $u_m$
\begin{equation*}
    P(|\hat{f}_n(u_m)-S_{0,r}(t\mid u_m)|>\epsilon) \leq \frac{\delta}{M+1} \quad \forall \ n>N_m.
\end{equation*}
Let $N=\max\{N_1,...,N_m\}$. Then for any $n>N$, by the union bound and pointwise convergence,
\begin{equation*}
    P\left( \max_{0 \leq m \leq M} |\hat{f}_n(u_m) - S_{0,r}(t\mid u_m)| > \epsilon \right) \leq \sum_{m=0}^M P\left( |\hat{f}_n(u_m) - S_{0,r}(t\mid u_m)| >\epsilon \right) \leq \delta.
\end{equation*}
Now, consider some $r\in(a,b)$. Let $u_{m-1}\leq x \leq u_m$ be the interval $r$ is in. $\hat{f}_n$ is non-increasing so we have 
\begin{align*}
    |\hat{f}_n(r)-S_{0,r}(t\mid r)| \leq \max\{\hat{f}_n(u_{m-1})-S_{0,r}(t\mid r),-\hat{f}_n(u_{m})+S_{0,r}(t\mid r)\}.
\end{align*}
Note that by construction and non-decreasingness of $S_{0,r}(t\mid r)$
\begin{align*}
    \hat{f}_n(u_{m-1})-S_{0,r}(t\mid r) &= \hat{f}_n(u_{m-1})-S_{0,r}(t\mid u_{m-1})-\left(S_{0,r}(t\mid r)-S_{0,r}(t\mid u_{m-1})\right) \\
    &\leq \hat{f}_n(u_{m-1})-S_{0,r}(t\mid u_{m-1}) +\epsilon,
\end{align*}
and
\begin{align*}
    -\hat{f}_n(u_{m})+S_{0,r}(t\mid r) &= -\hat{f}_n(u_{m})+S_{0,r}(t\mid u_m) + (S_{0,r}(t\mid r)- S_{0,r}(t\mid u_m)) \\
    &\leq -\hat{f}_n(u_{m})+S_{0,r}(t\mid u_m) + \epsilon.
\end{align*}
Combining, 
\begin{align*}
    |\hat{f}_n(r)-S_{0,r}(t\mid r)| &\leq \max\{\hat{f}_n(u_{m-1})-S_{0,r}(t\mid u_{m-1}),-\hat{f}_n(u_{m})+S_{0,r}(t\mid u_m)\} + \epsilon \\
    &\leq \max_{0 \leq m \leq M} |\hat{f}_n(u_m) - S_{0,r}(t\mid u_m)| + \epsilon,
\end{align*}
where the upper bound holds for all $x$. 

So, for any $n>N$,
\begin{align*}
    P\left(\sup_{r\in(a,b)}|\hat{f}_n(r)-S_{0,r}(t\mid r)|>2\epsilon\right) &\leq P\left(\max_{0 \leq m \leq M} |\hat{f}_n(u_m) - S_{0,r}(t\mid u_m)| + \epsilon>2\epsilon\right) \\
    &=P\left(\max_{0 \leq m \leq M} |\hat{f}_n(u_m) - S_{0,r}(t\mid u_m)| >\epsilon\right) \\
    &\leq \delta.
\end{align*}
Since $\epsilon,\delta>0$ were arbitrary, we have the desired uniform convergence
\begin{equation*}
    \sup_{r\in(a,b)}|\hat{f}_n(r)-S_{0,r}(t\mid r)| \xrightarrow{p} 0.
\end{equation*}
\subsubsection{Convergence of Dykstra's}
\label{app:sec:dykstras_convergence}
Lastly, we show how the $\hat{f}_n$ isotonic regression estimators constructed at each fixed $t$ from the previous part are also non-increasing in time. Hence, Dykstra's algorithm to find the solution to the optimization problem from Eq.~\ref{eqn:DR_estimation} converges in one step after ensuring monotonicity along risks itself.

Let $t_j\leq t_i$. Let $\hat{f}_n^i$ and $\hat{f}^j_n$ be the isotonic regression estimate from time $t_i$ and $t_j$ respectively.  From Sec.~\ref{app:sec:l2p_convergence_proof}, we have that $\|\hat{f}^i_n-S^{i*}\|_P\xrightarrow{p}0$ and $\|\hat{f}^j_n-S^{j*}\|_P\xrightarrow{p}0$ where
    \begin{equation*}
        S^{i*} = \underset{g \in \mathcal{M}}{\argmin} \int (S_{0,r}(t_i\mid r) - g(r))^2 dp(r)dr,
    \end{equation*}
    and
    \begin{equation*}
        S^{i*} = \underset{g \in \mathcal{M}}{\argmin} \int (S_{0,r}(t_j\mid r) - g(r))^2 dp(r)dr.
    \end{equation*}

We now proceed by contradiction.

Suppose the set $A = \{x \mid S^{i*}(r) > S^{j*}(r)\}$ has positive measure under $P$. Intuitively, this is the set where a violation of monotonicity along time occurs.
    
Define the functions $g_{\min}(x) = \min(S^{i*}(r), S^{j*}(r))$ and $g_{\max}(r) = \max(S^{i*}(r), S^{j*}(r))$. Because the class of monotonically increasing functions $\mathcal{M}$ is closed under minimum and maximum operations, both $g_{\min}$ and $g_{\max}$ are in $\mathcal{M}$.
    
By the definition of $S^{i*}$ and $S^{j*}$ as the optimal $L_2(P)$ projections onto $\mathcal{M}$, replacing them with $g_{\min}$ and $g^j$ respectively, cannot decrease the objective function. Therefore, we have
\begin{align*}
    \Delta_{\min} &= \int (S_{0,r}(t_i\mid r) - g_{\min}(r))^2 p(r)dr - \int (S_{0,r}(t_i\mid r) - S^{i*}(r))^2 p(r)dr \geq 0, \\
    \Delta_{\max} &= \int (S_{0,r}(t_j\mid r) - g_{\max}(r))^2 p(r)dr - \int (S_{0,r}(t_j\mid r) - S^{j*}(r))^2p(r)dr \geq 0.
\end{align*}

Observe that on the complement $A^c$, $S^{i*}(r) \leq S^{j*}(r)$, which implies $g_{\min} = S^{i*}$ and $g_{\max} = S^{j*}$. Thus, the integrands of $\Delta_{\min}$ and $\Delta_{\max}$ evaluate to exactly zero outside of $A$. 

We can therefore restrict the sum of the differences strictly to the set $A$, where $g_{\min} = S^{j*}$ and $g_{\max} = S^{i*}$,
\begin{align*}
    \Delta_{\min} + \Delta_{\max} &= \int_A \Big[ (S_{0,r}(t_i\mid r) - S^{j*}(r))^2 - (S_{0,r}(t_i\mid r)-S^{i*}(r))^2\Big]p(r)dr \\
    &+ \int_A\Big[ (S_{0,r}(t_j\mid r) - S^{i*}(r))^2 - (S_{0,r}(t_j\mid r) - S^{j*}(r))^2 \Big] p(r)dr.
\end{align*}

Expanding the squares and simplifying the integrand yields
\begin{align*}
    &S_{0,r}(t_i\mid r)^2 - 2S_{0,r}(t_i\mid r) S^{j*}(r) + S^{j*}(r)^2 - S_{0,r}(t_i\mid r)^2 + 2S_{0,r}(t_i\mid r) S^{i*}(r) - S^{i*}(r)^2 \\
    + &S_{0,r}(t_j\mid r)^2 - 2S_{0,r}(t_j\mid r) S^{i*}(r) + S^{i*}(r)^2 - S_{0,r}(t_j\mid r)^2 + 2S_{0,r}(t_j\mid r) S^{j*}(r) - S^{j*}(r)^2 \\
    &= 2S_{0,r}(t_i\mid r) S^{i*}(r) - 2S_{0,r}(t_i\mid r) S^{j*}(r) - 2S_{0,r}(t_j\mid r) S^{i*}(r) + 2S_{0,r}(t_j\mid r) S^{j*}(r) \\
    &= 2(S_{0,r}(t_i\mid r) - S_{0,r}(t_j\mid r))(S^{i*}(r) - S^{j*}(r)).
\end{align*}

Substituting this back into our integral, we get
\begin{equation*}
    \Delta_{\min} + \Delta_{\max} = \int_A 2(S_{0,r}(t_i\mid r) - S_{0,r}(t_j\mid r)(S^{i*}(r) - S^{j*}(r)) p(r) dr.
\end{equation*}

We now analyze the signs of the terms in the integrand on the set $A$.

First, $S_{0,r}(t_i\mid r) \leq S_{0,r}(t_j\mid r)$ since survival curves are non-increasing, so $S_{0,r}(t_i\mid r) - S_{0,r}(t_j\mid r) \leq 0$. Second, $S^{i*}(r) > S^{j*}(r)$ by the definition of $A$, so $S^{i*}(r) - S^{j*}(r) > 0$. Lastly, $p(r) > 0$ strictly everywhere, by assumption.

Multiplying these terms together, the integrand is non-positive everywhere on $A$, which means the entire integral evaluates to $\Delta_{\min} + \Delta_{\max} \leq 0$. However, since $\Delta_{\min} \geq 0$ and $\Delta_{\max} \geq 0$, we must have $\Delta_{\min} + \Delta_{\max} = 0$. 

This requires the integrand to be zero almost everywhere on $A$. Since $S^{i*} - S^{j*} > 0$ and $p(r) > 0$ on $A$, it must hold that $S^i(x) = S^j(x)$ almost everywhere on $A$. But if $S_{0,r}(t_i\mid r) = S_{0,r}(t_j\mid r)$ on $A$, their unique $L_2(P)$ projections onto $\mathcal{M}$ must be identical on $A$, which contradicts the definition of $A$ where $S^{i*} > S^{j*}$.

Therefore, the set $A$ must have a measure of zero under $P$. Because $p(r)$ is strictly positive on its domain, $S^{i*}(x)\leq S^{j*}(x)$ for almost all $x$.

Hence, the solution to the optimization problem defining $\widehat{S}^\mathrm{DR}_\mathrm{ISR}$ (Eq.~\ref{eqn:DR_estimation}), is asymptotically found by simply ensuring monotonicity along risks. Hence, to predict the survival probability at any $t\in\mathcal{T}$, it suffices to find the isotonic regression estimator enforcing monotonicity along risks for the pseudo-outcomes constructed over calibration data at $t$.

By assumption~\ref{ass:risk_correctness} we get $S^*=S_{0,r}(t\mid r)$ as the asymptotic solution enforcing non-increasingness along risks. Combining with lemma~\ref{lemma:risk_score_sufficiency} which ensures $S_{0}(t\mid x)= S_{0,r}(t\mid r(x))$, we have the desired result
\begin{equation}
    \sup_{x\in\mathcal{X}'} |\widehat{S}^{\mathrm{DR}}_{\mathrm{ISR}}(t,x)-S_0(t\mid x)|\xrightarrow{p}0, \qquad \text{for all } t \geq 0,
\end{equation}
where $\mathcal{X}'= \{x \in \mathcal{X} : r(x) \in \mathrm{int}(r(\mathcal{X}))\}$.

\newpage
\subsection{Proof of Theorem 3}

Once again, consider a fixed time $t\geq0$.

Define $G(x) = \int_{0}^x (1-\Sor)p(r)dr$ and let $F(x) = \int_0^xp(r)dr$ be the CDF of $P$. 

Consider the parametric curve $C=\{(G(x),F(x)\:x\in[0,1]\}$. 

Then, let $G^*(x)$ be such that $C^*=\{(G^*(x),F(x)\:x\in[0,1]\}$ is the greatest convex minorant (GCM) of $C$.

As before, let
\[S^* = \underset{g \in \mathcal{M}}{\argmin} \int (S_{0,r}(t \mid r) - g(r))^2 p(r)dr\]
be the $L_2(P)$ projection of $S_{0,r}(t \mid r)$ onto the class of non-increasing functions. 

Further define $f^*=1-S^*$. $f^*$ can then be seen the $L_2(P)$ projection of the survival CDF, $1-S_{0,r}(t \mid r)$ onto the class of non-decreasing functions.

It is well known that $f^*(r)$ is the subdifferential of $C^*$ at $F(r)$~\citep{barlow1972IRBook, groeneboom2014IRBook}. Since $C^*$ is convex, the subdifferential is guaranteed to not be empty over the interior $(0,1)$. 

For ease of exposition, we consider the unique $f^*$ which is defined as the right derivative of $C^*$ at $F(r)$. Notably, implementations of isotonic regression algorithms, including ours, use this convention. We note, however, that the proof below easily extends without this right derivative assumption.

\paragraph{Case I: $f^*(r)=q$ for multiple $r$}

Let $[a,b)$ be the maximal interval where $f^*(r)=q$. Note that $a\neq b$ since  $f^*(r)=q$ for multiple $r$. Also, note that by monotonicity of $f^*$, $f^*(r)<q$ for $r<a$ and $f^*(r)>q$ for $r>b$. 

Since $f^*(r)$ is constant over this interval, the portion of the curve $C^*$ between $F(a)$ and $F(b)$ is a straight line. By the properties of the GCM, this implies that $C^*$ touches $C$ at $F(a)$ and $F(b)$. Hence, $G(a) = G^*(a)$ and $G(b) = G^*(b)$. 

Since the portion of the curve $C^*$ between $F(a)$ and $F(b)$ is a straight line, we can write the vertical change as the slope (i.e $q$) times the horizontal change (i.e. $(F(b)-F(a))$) to get
\begin{align*}
    &G(b)-G(a) = G^*(b)-G^*(a) = q(F(b)-F(a)) \\
    \implies & \int_a^b (1-S_{0,r}(t \mid r))p(r)dr = q\int_a^b p(r)dr \\
    \implies & \frac{\int_a^b (1-S_{0,r}(t \mid r))p(r)dr}{\int_a^b p(r)dr} = q \hspace{60pt} \left(\int_a^b p(r)dr>0 \textrm{ since } p(r)>0 \textrm{ everywhere}\right)\\
    \implies& \mathbb{E}[1-S_{0,r}(t \mid r)|f^*(r)=q] = q,
\end{align*}
where the expectation is taken treating $r$ as a random variable and conditioning on the event $f^*(r)=q$.

\paragraph{Case II: $f^*(r) = q$ for a unique $r$}

Let $r_0$ be the unique point where $f^*(r_0)=q$. Since $f^*(r_0)$ is the (right) derivative of $C^*$ at $F(r_0)$, $C^*$ is strictly increasing at $F(r_0)$. 

By the properties of the GCM, the GCM must touch $C$ at $F(r_0)$, i.e. $G(F(r_0)) = G^*(F(r_0))$. Now consider some $r>r_0$. Since $G^*$ is below $G$, we have
\begin{align*}
    G^*(F(r)) \leq G(F(r)).
\end{align*}
Subtract $G(F(r_0)) = G^*(F(r_0))$ from both sides and divide by $F(r_0)-F(r)$ ($F(r_0)-F(r)>0$ since the density is positive everywhere) to get
\begin{equation*}
    \frac{G^*(F(r)) - G^*(F(r_0))}{F(r)-F(r_0)} \leq \frac{G(F(r)) - G(F(r_0))}{F(r)-F(r_0)}.
\end{equation*}
Taking limits of $r\rightarrow r_0^+$,
\begin{equation*}
    q = f^*(r_0) = \lim_{r\rightarrow r_0^+} \frac{G^*(F(r)) - G^*(F(r_0))}{F(r)-F(r_0)} \leq \lim_{r\rightarrow r_0^+} \frac{G(F(r)) - G(F(r_0))}{F(r)-F(r_0)}.
\end{equation*}
Next, consider some $r<r_0$. Since $G^*$ is below $G$, we have
\begin{align*}
    G^*(F(r)) \leq G(F(r)).
\end{align*}
Subtract $G(F(r_0)) = G^*(F(r_0))$ from both sides and divide by $F(r_0)-F(r)$ ($F(r_0)-F(r)>0$ since the density is positive everywhere and $r<r_0$) to get
\begin{align*}
    &\frac{G^*(F(r)) - G^*(F(r_0))}{F(r_0)-F(r)} \leq \frac{G(F(r)) - G(F(r_0))}{F(r_0)-F(r)} \\
    \implies& \frac{G(F(r)) - G(F(r_0))}{F(r)-F(r_0)} \leq \frac{G^*(F(r)) - G^*(F(r_0))}{F(r)-F(r_0)}.
\end{align*}
Taking limits of $r\rightarrow r_0^-$,
\begin{equation*}
    \lim_{r\rightarrow r_0^-} \frac{G(F(r)) - G(F(r_0))}{F(r)-F(r_0)} \leq \lim_{r\rightarrow r_0^-} \frac{G^*(F(r)) - G^*(F(r_0))}{F(r)-F(r_0)}.
\end{equation*}
Since $C^*$ is a convex curve, its derivative is non decreasing. In particular, the left derivative must be at most the right derivative at any point. That is,
\begin{equation*}
    \lim_{r\rightarrow r_0^-} \frac{G^*(F(r)) - G^*(F(r_0))}{F(r)-F(r_0)} \leq \lim_{r\rightarrow r_0^+} \frac{G^*(F(r)) - G^*(F(r_0))}{F(r)-F(r_0)}.
\end{equation*}
Putting it all together,
\begin{align*}
    \lim_{r\rightarrow r_0^-} \frac{G(F(r)) - G(F(r_0))}{F(r)-F(r_0)} &\leq \lim_{r\rightarrow r_0^-} \frac{G^*(F(r)) - G^*(F(r_0))}{F(r)-F(r_0)} \\
    &\leq \lim_{r\rightarrow r_0^+} \frac{G^*(F(r)) - G^*(F(r_0))}{F(r)-F(r_0)} = f^*(r_0) = q \\
    & \leq \lim_{r\rightarrow r_0^+} \frac{G(F(r)) - G(F(r_0))}{F(r)-F(r_0)}.
\end{align*}
Next, note that 
\begin{align*}
    \lim_{r\rightarrow r_0^-} \frac{G(F(r)) - G(F(r_0))}{F(r)-F(r_0)} &= \lim_{r\rightarrow r_0^-} \frac{(S_{0,r}(t\mid r))p(r)}{p(r)} & \textrm{(L'Hôspital's Rule)} \\
    &= 1-S_{0,r}(t\mid r_0^-).
\end{align*}
Similarly,
\begin{align*}
    \lim_{r\rightarrow r_0^+} \frac{G(F(r)) - G(F(r_0))}{F(r)-F(r_0)} = 1-S_{0,r}(t\mid r_0^+).
\end{align*}
Therefore,
\begin{equation*}
    1-S_{0,r}(t\mid r_0^-) \leq f^*(r_0) = q \leq S_{0,r}(t\mid r_0^+).
\end{equation*}
By assumption~\ref{ass:continous_survival}, $S_{0,r}(t\mid r)$ is continuous at $r_0$, so $1-S_{0,r}(t\mid r_0^-) = 1-S_{0,r}(t\mid r_0^+)$ and hence $f^*(r_0) = q =  1-S_{0,r}(t\mid r_0)$. 

Finally,
\begin{equation*}
    \mathbb{E}[1-S_{0,r}(t\mid r)|f^*(r)=q] = \mathbb{E}[1-S_{0,r}(t\mid r)|r=r_0] = 1-S_{0,r}(t\mid r_0) = q.
\end{equation*}

\paragraph{Combining Cases}

Together, from Case I and II, we have shown that 
\begin{equation*}
    \mathbb{E}[1-S_{0,r}(t\mid r)|f^*(r)=q] = q,
\end{equation*}
for all $q$ that are in the range of $f^*$.

Since $f^*=1-S^*$, we can define $p=1-q$ to get
\begin{equation*}
    \mathbb{E}[S_{0,r}(t\mid r)|S^*(r)=p] = p.
\end{equation*}
Now, from the proof of Theorem~\ref{thm:consistency}, from Sec.~\ref{app:sec:l2p_convergence_proof} we have that the isotonic regression fit at any specific time $t$ converges to $S^*$ in the $L_2(P)$ norm. 

We can proceed exactly as in Sec.~\ref{app:sec:pointwise_convergence} to show that this convergence is also pointwise thanks to monotonicity of $S^*$ and assumptions~\ref{ass:continuous_density}, ~\ref{ass:continous_survival},

Finally from Sec.~\ref{app:sec:dykstras_convergence}, we see that the $\widehat{S}_\textrm{ISR}^\textrm{DR}$ asymptotically equals the $S^*$ estimates from time $t$. 

Combining, we have that the pointwise limit of the DR-ISR estimator $\widehat{S}^\infty_\mathrm{DR\text{-}ISR}(t,x)  = \lim\limits_{n\rightarrow\infty}\widehat{S}^{\mathrm{DR}}_{\mathrm{ISR}}(t,x)$ is simply 
\begin{equation*}
    \widehat{S}^\infty_\mathrm{DR\text{-}ISR}(t,x) = S^*_t(r(X)),
\end{equation*}
where,
\[S^*_t = \underset{g \in \mathcal{M}}{\argmin} \int (S_{0,r}(t \mid r) - g(r))^2 p(r)dr.\]
Therefore,
\begin{equation*}
    \mathbb{E}\left[S_{0,r}(t\mid r(X))\ \middle|\ \widehat{S}^\infty_\mathrm{DR\text{-}ISR}(t,X)=p\right] = p.
\end{equation*}
Recall that $\Sor = \mathbb{P}(T>t|r(X)=r) = \mathbb{E}[\mathbf{1}\{T>t\}|r(X)=r]$. 

Substituting in the equation above and applying tower rule, we have that at any time $t\in\mathcal{T}$,
\begin{equation*}
    \mathbb{E}\left[\mathbf{1}\{T>t\}\ \middle|\ \widehat{S}^\infty_\mathrm{DR\text{-}ISR}(t,X)=p\right] = p.
\end{equation*}
Thus, we have the desired result
\begin{equation*}
    \mathbb{P} \big( T>t \ \big| \ \widehat{S}^\infty_\mathrm{DR\text{-}ISR} (t,X) = p \big) = p, \qquad \text{for all } t \in \mathcal{T},
\end{equation*}
for all $p$ in the range of $\widehat{S}^\infty_\mathrm{DR\text{-}ISR}$.
\newpage

\section{Additional estimators}
\label{app:sec:new_estimators}

We now present the motivation behind the RW-ISR and DR-ISR estimators as developed in Sec~\ref{sec:estimator_construction} in full generality. This allows us to identify three new estimators RW$^+$-ISR, HT-ISR and HT$^+$-ISR.

We have considered two broad approaches to calibration under censoring. The first uses constrained regression on the uncensored binary survival indicators in the calibration data. In this case, censoring is handled by weighting the optimization objective to account for unequal probabilities of observing $\mathbf{1}\{T_i>t\}=Z_i(t)$. The second method uses a pseudo-outcome approach, which first constructs pointwise estimates of the conditional survival probability on the calibration set. These estimates are then projected onto a family of survival functions that are monotone in both risk and time.

Because these two approaches incorporate censoring in fundamentally different ways, we develop the resulting estimators separately.

\subsection{Weighted isotonic regression using survival indicators}
Under right-censoring, the survival indicators $Z_i(t)$ are only partially observed. To account for this, we use inverse probability of censoring weighting (IPCW), which reweights observations according to their probability of remaining uncensored.

Let $O_i = (X_i, Y_i, \delta_i)$ denote the observed data, and as before define the censoring analog of the survival function $G(t \mid x) = \mathbb{P}(C > t \mid X = x)$. We also define the IPCW weight $w(O) = \delta / G(Y \mid X)$. For a fixed time $t$, consider the weighted regression problem
\[ f_t = \argmin_{f \in \mathcal{F}} \mathbb{E} \left[ w(O) (f(r(X)) - \mathbf{1}[Y > t])^2 \right]. \]
Note that the population minimizer is the well-known Hajek estimator of the survival probability.
\[ f_t(r(x)) = \frac{ \mathbb{E}  \left[ w(O) \, \mathbf{1}[Y > t] \mid r(X) = r(x) \right]}{\mathbb{E}[w(O) \mid r(X) = r(x)]} = \mathbb{P}(T > t \mid r(X) = r(x)) = S(t \mid x). \]

There are two additional practical considerations. First, neither the risk score $r$ nor the censoring function $G$ is known, so both must be learned on the training data, for example using a Cox model. We then treat these estimates as fixed and solve the corresponding weighted optimization problem on the independent calibration set. Second, we impose a monotonicity constraint in the risk score so that the resulting estimator defines a valid survival curve. The calibration step solves,
\begin{equation} \label{app:eqn:hajek_optimization}
    \widehat{S}_\textrm{ISR}^\textrm{RW}(t \mid x) = f_t(\widehat{r}(x)), \quad \text{where } f_t = \argmin_{f \in \mathcal{F}_{\leq}} \sum_{O_i \in \mathcal{D}_\text{cal}} \frac{\delta_i (f(\widehat{r}(X_i)) - \mathbf{1}[Y_i > t])^2}{\widehat{G}(Y_i \mid X_i)}.
\end{equation}

The RW-ISR estimator does not make full use of the available information, since the survival indicator is known not only when $\delta = 1$, but also for censored individuals with $Y > t$. We can modify the IPCW weight to account for all samples for which $Z_i(t)$ is observed. For a fixed time $t$, consider the weighted regression problem,
\[ f^+_t = \argmin_{f \in \mathcal{F}} \mathbb{E} \left[ w^+(O) (f(r(X)) - \mathbf{1}[Y > t])^2 \right], \quad \text{where } w^+(O) = \frac{\mathbf{1}[Y > t]}{G(t \mid X)} + \frac{\delta \mathbf{1}[Y \leq t]}{G(Y \mid X)}. \]
The survival probability remains the population minimizer.
\[ f^+_t(r(x)) = \frac{ \mathbb{E}  \left[ w^+(O) \, \mathbf{1}[Y > t] \mid r(X) = r(x) \right]}{\mathbb{E}[w^+(O) \mid r(X) = r(x)]} = \mathbb{P}(T > t \mid r(X) = r(x)) = S(t \mid x). \]
Since this estimator improves upon the RW-ISR estimator, we refer to it as RW$^+$-ISR. In practice, we use the same calibration procedure as before, but with the modified weight. Specifically,
\begin{equation}  \label{app:eqn:hajek+_optimization}
    \widehat{S}_\textrm{ISR}^\textrm{RW+}(t \mid x) = f^+_t(\widehat{r}(x)), \quad \text{where } f^+_t = \argmin_{f \in \mathcal{F}_{\leq}} \sum_{O_i \in \mathcal{D}_\text{cal}} w^+(O_i) (f(\widehat{r}(X_i)) - \mathbf{1}[Y_i > t])^2.
\end{equation}
Intuitively, both RW-ISR and RW$^+$-ISR use the censoring model to determine how to weight the calibration data in order to correct for missing survival indicators induced by censoring. 

\subsection{Isotonic projection of pseudo-outcomes}

We now describe our second calibration strategy as used by DR-ISR. Rather than incorporating censoring directly into the calibration objective, this approach first constructs pointwise pseudo-outcomes for the conditional survival probability on the calibration set. 

For each $O_i \in \mathcal{D}_{\mathrm{cal}}$ and $t \in \mathcal{T}$, let $\tilde{S}(t \mid O_i; \eta)$ denote a pseudo-outcome, where $\eta$ denotes the nuisance parameters. Because these pseudo-outcomes are constructed pointwise, they may not satisfy the structural properties of a valid survival function. We therefore project them onto a class of functions that is monotone in both risk and time. Consider the optimization problem
\[ f = \argmin_{f \in \mathcal{F}^{(2)}} \sum_{t \in \mathcal{T}} \mathbb{E} \left[ \left( f(t, r(X)) - \tilde{S}(t \mid O; \eta) \right)^2 \right]. \]
For conditionally unbiased pseudo-outcomes in the sense that $\mathbb{E}[\tilde{S}(t \mid O; \eta) \mid X = x] = S(t \mid x)$, the population minimizer agrees with the true survival function on the grid $\mathcal{T}$. 

In practice, the risk scores and nuisance quantities are first estimated on the training data. In the calibration step, we use the plug-in nuisance estimate $\widehat{\eta}$ and replace the population objective by its empirical analogue on the calibration set. The calibration step solves
\begin{equation} \label{app:eqn:pseudo_outcome_optimization}
    \widehat{S}(t \mid x) = f(t, \widehat{r}(x)), \quad \text{where } f = \argmin_{f \in \mathcal{F}^{(2)}_\leq} \sum_{t \in \mathcal{T}} \: \sum_{X_i \in \mathcal{D}_\text{cal}} \left[ \left( f(t, \widehat{r}(X_i)) - \tilde{S}(t \mid O_i; \widehat{\eta}) \right)^2 \right].
\end{equation}

\textit{Remark.} Survival models such as the Cox model already produce structurally valid survival curves, but their survival probabilities are often not well calibrated in practice. The pseudo-outcomes provide pointwise estimates that may be statistically more efficient than the initial survival model, while the projection step restores the structural constraints required of a valid survival function. 

We next describe two ways to construct conditionally unbiased pseudo-outcomes using IPCW and Augmented IPCW (AIPCW).

\paragraph{Horvitz--Thompson.} The first method uses IPCW in the same spirit as the RW-ISR estimator. For each $t \in \mathcal{T}$ and $O_i \in \mathcal{D}_{\mathrm{cal}}$, we define the pointwise pseudo-outcomes
\[ \tilde{S}_\textrm{ISR}^\textrm{HT}(t \mid O_i; \eta) = \frac{\delta_i \mathbf{1}[Y_i > t]}{G(Y_i \mid X_i)}, \qquad \tilde{S}^\text{HT+}_\text{ISR}(t \mid O_i; \eta) = \frac{\mathbf{1}[Y_i > t]}{G(t \mid X_i)}, \qquad \text{where } \eta = G. \]
The corresponding distributional estimators in Equation~\ref{app:eqn:pseudo_outcome_optimization} are denoted $\widehat{S}_\text{HT}$ and $\widehat{S}_\text{HT+}$. Because the pseudo-outcomes take the well-known Horvitz--Thompson (HT) form, we refer to them as the HT-ISR and HT$^+$-ISR distributional estimators, respectively. As in the RW-ISR setting, HT$^+$-ISR improves on the initial construction by incorporating censored observations for which the survival indicator is still known. If multiple calibration points share the same covariate value (e.g. if the covariates are discrete), we define a separate pseudo-outcome for each observation.

\paragraph{Doubly robust.} The RW and Horvitz--Thompson constructions above rely only on inverse probability of censoring weighting, and therefore use only the censoring model to correct for missing survival indicators. A different strategy is to combine the censoring model with an outcome model for the survival function itself. This leads to the augmented IPCW (AIPCW) construction.
\begin{equation*}
    \tilde{S}_\textrm{ISR}^\textrm{DR}(t \mid O_i; \eta) = S(t \mid X_i) - \int_0^t \frac{S(t \mid X_i)}{S(u \mid X_i) G(u \mid X_i)} \,dM_{T \mid X} (u \mid X_i), \qquad \text{where } \eta = (S, G).
\end{equation*}
Here, $dM_{T \mid X} (u \mid x) = dN_T(u) - \mathbf{1}[Y \geq u] \, d\Lambda_{T \mid X} (u \mid x)$ denotes the event martingale difference process, where $dN_T(u) = \mathbf{1}[Y \in du \:\wedge\: \delta = 1]$ is the event increment counting process and $\Lambda_{T \mid X} (u \mid x) = -\log S(u \mid x)$ is the cumulative hazard function as described in Sec.~\ref{sec:estimator_construction}

The AIPCW construction has an attractive theoretical advantage over IPCW: it is conditionally unbiased if either the survival model or the censoring model is correctly specified.

\newpage

\section{Algorithmic Implementation}

\subsection{DR-ISR Algorithm}
For completeness, we present an expanded version of algorithm~\ref{alg:dr} to fit and make predictions from the DR-ISR estimator. Both algorithm~\ref{alg:dr} and algorithm~\ref{app:alg:dr} contain the exact same operations but algorithm~\ref{app:alg:dr} improves legibility by spelling out the operations more clearly. In practice, we also clip the probabilities returned by $\widehat{S},\widehat{G}$ at some small $\epsilon$ to ensure stability. Finally, to construct the time grid, we use 10000 evenly spaced points between $t=0$ and the maximum observed time in the calibration dataset. We additionally augment this time grid with all the observed times in the training and calibration datasets.
\begin{algorithm}
    \caption{Construction of DR-ISR calibrated survival curve estimates }\label{app:alg:dr}
    \begin{algorithmic}[1]
        \Require Survival $\widehat{S}$, censoring $\widehat{G}$, and risk $r_\theta$ models from an initial Deep Cox model
\Require Calibration dataset $\mathcal{D}_{\text{cal}} = \{(X_i, Y_i, \delta_i)\}_{i=1}^{n_{\text{cal}}}$ sorted by increasing risk
\Require Time grid $\mathcal{T} = \{t_1, \ldots, t_K\}$

\Statex \Comment{Doubly robust point estimation}
\For{$j \in [n_\text{cal}]$}
    \State $\widehat{S}(t_0 \mid X_j) \gets 1$
    \For{$i \in [K]$}
        \State $d\Lambda_{T \mid X} (t_i \mid X_j) \gets 1 - \widehat{S}(t_i \mid X_j) / \widehat{S}(t_{i-1} \mid X_j)$
        \State $dM_{T \mid X} (t_i \mid X_j) \gets \mathbf{1}[Y_j = t_i, \delta_j = 1] - \mathbf{1}[Y_j > t_i] \, d\Lambda_{T \mid X} (t_i \mid X_j)$
        \State $\tilde{S}^\mathrm{DR}[i,j] \gets \widehat{S}(t_i \mid X_j) - \sum_{k=1}^i \frac{\widehat{S}(t_i \mid X_j)}{\widehat{S}(t_k \mid X_j) \widehat{G}(t_k \mid X_j)} dM_{T \mid X} (t_k \mid X_j)$ \Comment{Eq.~\ref{eq:dr_pseudo_label}}
    \EndFor
\EndFor

\Statex \Comment{Dykstra's algorithm initialization}
\State $S \gets \tilde{S}_{\text{DR}}$
\State $\epsilon_{\text{risk}} \gets \mathbf{0}^{n_{\text{cal}} \times |\mathcal{T}|}$
\State $\epsilon_{\text{time}} \gets \mathbf{0}^{n_{\text{cal}} \times |\mathcal{T}|}$

\While{not converged}
    \Statex \hspace{1em} \Comment{Monotonicity in risk}
    \State $U \gets S + \epsilon_{\text{risk}}$
    \For{$j \in [|\mathcal{T}|]$}
        \State $S[:, j] \gets \texttt{PAVA}(U[:, j], \text{ decreasing}=\text{True})$
    \EndFor
    \State $\epsilon_{\text{risk}} \gets U - S$

    \Statex \hspace{1em} \Comment{Monotonicity in time}
    \State $V \gets S + \epsilon_{\text{time}}$
    \For{$i \in [n_{\text{cal}}]$}
        \State $S[i, :] \gets \texttt{PAVA}(V[i, :], \text{ decreasing}=\text{True})$
    \EndFor
    \State $\epsilon_{\text{time}} \gets V - S$
\EndWhile

\Statex \Comment{Prediction}
\State $\hat{f} \gets \text{Interpolate}(S)$ \Comment{Linearly interpolate to form 2D function}
\State $\widehat{S}_{\mathrm{ISR}}^\mathrm{DR}(t \mid X_{\text{test}}) \gets \hat{f}(t, r_\theta(X_{\text{test}}))$
    \end{algorithmic}
\end{algorithm}
\newpage
\subsection{RW-ISR Estimator}
We present the pseudocode for construction of the RW-ISR estimator below in algorithm~\ref{app:alg:hajek}. As in the DR-ISR case, we also clip the probabilities returned by $\widehat{G}$ at some small $\epsilon$ to ensure stability. To construct the time grid, we use 10000 evenly spaced points between $t=0$ and the maximum observed time in the calibration dataset. We additionally augment this time grid with all the observed times in the training and calibration datasets.
\begin{algorithm}
    \caption{Construction of RW-ISR calibrated survival curve estimates }\label{app:alg:hajek}
    \begin{algorithmic}[1]
        \Require Survival $\widehat{S}$, censoring $\widehat{G}$, and risk $r_\theta$ models from an initial Deep Cox model
\Require Calibration dataset $\mathcal{D}_{\text{cal}} = \{(X_i, Y_i, \delta_i)\}_{i=1}^{n_{\text{cal}}}$ sorted by increasing risk
\Require Time grid $\mathcal{T} = \{t_1, \ldots, t_K\}$

\Statex \Comment{Weight estimation}
\For{$j \in [n_\text{cal}]$}
    \If{$\delta_j=1$}
        \State $W[j]\gets\frac{1}{\widehat{G}(Y_i|X_i)}$
    \Else
        \State $W[j]\gets 0$ \Comment{Assign zero weight to censored patients}
    \EndIf
\EndFor

\Statex \Comment{Perform Isotonic Regression at each time}
\For{$i\in[K]$}
    \For{$j\in[n_\text{cal}]$} \Comment{Construct Binary Outcomes}
        \State $Z[j]\gets\mathbf{1}\{Y_i>t_i\}$
    \EndFor
    \State $S[i; :] \gets\texttt{PAVA}(Z,\text{ weights=}W, \text{ decreasing=True})$
\EndFor

\Statex \Comment{Prediction}
\State $\hat{f} \gets \text{Interpolate}(S)$ \Comment{Linearly interpolate to form 2D function}
\State $\widehat{S}_{\mathrm{ISR}}^\mathrm{RW}(t \mid X_{\text{test}}) \gets \hat{f}(t, r_\theta(X_{\text{test}}))$
    \end{algorithmic}
\end{algorithm}

\clearpage
\newpage

\section{Experiment details}
\label{app:sec:experiment_details}

\subsection{Quantile loss details} \label{app:quantil_loss}
\label{app:sec:metrics}
We first derive the form of the quantile loss under censoring. Recall the standard pinball loss for a quantile prediction $q$ at level $\tau$ in the absence of censoring is
\[\rho_\tau(T, q)=(1-\tau)(q-T)\mathbf{1}\{T\le q\}+\tau(T-q)\mathbf{1}\{T>q\}.\]
Since survival times are nonnegative, each term can be written as an integral over threshold indicators.
\[\rho_\tau(T, q) = (1-\tau)\int_0^q \mathbf{1}[T\le t]\,dt+\tau\int_q^{\infty}\mathbf{1}[T>t]\,dt.\]
Under conditional independent censoring, we can estimate the threshold indicators using IPCW.
\[\mathbf{1}[T\le t]\leadsto \frac{\delta\mathbf{1}[Y\le t]}{\widehat G(Y\mid X)},\qquad \mathbf{1}[T>t]\leadsto \frac{\mathbf{1}[Y>t]}{\widehat G(t\mid X)}.\]
In practice, we also truncate the second integral at the maximum observed time $t_\mathrm{max}$. Substituting these terms into the integral representation and using the predicted quantile $\hat q_\tau(X_i)$ gives the censored quantile score
\[\mathrm{QS}(\tau) = \frac{1}{|\mathcal{I}_{\mathrm{test}}|}\sum_{i\in\mathcal{I}_{\mathrm{test}}}\left[(1-\tau)\int_0^{\hat q_\tau(X_i)}\frac{\delta_i\mathbf{1}[Y_i\le t]}{\widehat G(Y_i\mid X_i)}\,dt+\tau\int_{\hat q_\tau(X_i)}^{t_\mathrm{max}}\frac{\mathbf{1}[Y_i>t]}{\widehat G(t\mid X_i)}\,dt\right].\]
Note that if $\widehat{S}(t\mid X_i)$ does not reach the $1-\tau$ level by $t_{\max}$, then $\widehat{q}_\tau(X_i)$ is ill-defined and hence the $\mathrm{QS}(\tau)$ cannot be calculated.

This stems from a common problem in survival analysis where the survival curves may not reach 0. Intuitively, for a variety of time-to-event outcomes, the event may never happen and hence even the true survival curve may never reach 0. Additionally, when estimating the survival curve, we may have no information regarding how the survival probabilities behave after a maximum time $t_{\max}$. For example, for the Kaplan-Meier estimator, if the largest observed times correspond to censored instances, the estimated survival curve will never reach 0. Cox models, when combined with the Breslow estimator, also suffer from this issue.

When absolutely necessary to obtain survival curve estimates that go to zero (for example to calculate a mean predicted survival time), a variety of extrapolation methods can be applied such as fitting a parametric tail beyond the maximum time or using external data. Unfortunately, however, there is no principled and well-accepted way to extrapolate. See ~\citep{Latimer2013Extrapolation} which presents an official report made to the government of the United Kingdom reviewing suitability of different extrapolation techniques when making decisions regarding efficacy of medical treatments. Ultimately, the appropriate extrapolation method depends on the problem context and downstream analysis. 

For our estimators too, predicted survival curves may not reach 0 and are constant beyond the maximum observed time in the calibration dataset. Thus, $\widehat{q}_\tau(X_i)$ can often be ill-defined at higher quantiles. When evaluating the quantile score, we simply exclude such instances from evaluation. In particular, if the quantile score is ill-defined for a test point for even one of the methods under comparison, we exclude that instance from evaluation for all methods to allow for fair comparison. Figure~\ref{app:fig:quantile_losses_pathology_and_rna} reports the proportion of patients that the quantile losses were evaluated over for the real data pathology and RNA models.

\subsection{Integrated Brier Score details} \label{app:ibs}

We discuss the properties of the IBS and then derive its connection to the integrated quantile loss. We will first find it useful to relate the IBS to the continuous ranked probability score (CRPS), which measures the calibration and sharpness of a probabilistic forecast in the uncensored setting~\citep{hersbach2000decomposition}. For a predictive CDF $F$ of a generic response $Y$, the CRPS is
\[\mathrm{CRPS}(F,Y)=\int_0^\infty \left(F(z)-\mathbf{1}[Y\le z]\right)^2\,dz.\]
The CRPS is a proper scoring rule, in the sense that its expectation is minimized by the true CDF. In the context of survival analysis, the estimated CDF is $\widehat{F}(t \mid X) = 1 - \widehat{S}(t \mid X)$. Thus, we can express the scoring rule as
\[\mathrm{CRPS}(S,T)=\int_0^\infty \left( \mathbf{1}[T\le t] \widehat S(t\mid X)^2 + \mathbf{1}[T>t]\big(1-\widehat S(t\mid X)\big)^2 \right) \,dt.\]
Thus, in the absence of censoring, the CRPS is the integrated Brier score over time. Under right censoring, the threshold indicators can be estimated using IPCW.
\[\mathbf{1}[T\le t]\leadsto \frac{\delta\mathbf{1}[Y\le t]}{\widehat G(Y\mid X)}, \qquad \mathbf{1}[T>t]\leadsto \frac{\mathbf{1}[Y>t]}{\widehat G(t\mid X)}. \]
Substituting these IPCW terms gives,
\[ \int_0^\infty \bigg[ \frac{\delta_i \mathbf{1}[Y_i \leq t] \widehat S(t \mid X_i)^2}{\widehat G(Y_i \mid X_i)} + \frac{\mathbf{1}[Y_i > t] (1 - \widehat S(t \mid X_i))^2}{\widehat G(t \mid X_i)} \bigg] \, dt . \]
Truncating the integral at the maximum observed time $t_\text{max}$ and normalizing recovers the IBS.
\[ \mathrm{IBS} = \frac{1}{|\mathcal{I}_{\mathrm{test}}|} \sum_{i \in \mathcal{I}_{\mathrm{test}}} \frac{1}{t_{\max}}\int_0^{t_{\max}} \bigg[ \frac{\delta_i \mathbf{1}[Y_i \leq t] \widehat S(t \mid X_i)^2}{\widehat G(Y_i \mid X_i)} + \frac{\mathbf{1}[Y_i > t] (1 - \widehat S(t \mid X_i))^2}{\widehat G(t \mid X_i)} \bigg] \, dt . \]
The connection between the CRPS and IBS highlights several shared properties. In particular, it makes clear that IBS assesses both calibration and sharpness of the predicted survival distribution, as noted in the literature~\citep{degroot1983comparison,Qi2024ConformalizedDistributions}. 

In the uncensored setting, the CRPS is also known to admit an equivalent representation as an integral of quantile loss over all quantile levels~\citep{gneiting2011comparing}. We next show that an analogous identity holds for the IBS in the right-censored setting if $\hat q_\tau(X_i)\le t_{\max}$ for all $\tau\in[0,1]$ and all $i\in\mathcal{I}_{\mathrm{test}}$. Integrating over $\tau$ in the quantile loss and exchanging the order of integration gives
\[\int_0^1\mathrm{QS}(\tau)\,d\tau=\frac{1}{|\mathcal{I}_{\mathrm{test}}|}\sum_{i\in\mathcal{I}_{\mathrm{test}}}\int_0^{t_{\max}}\left[\frac{\delta_i\mathbf{1}[Y_i\le t]}{\widehat G(Y_i\mid X_i)}A_i(t)+\frac{\mathbf{1}[Y_i>t]}{\widehat G(t\mid X_i)}B_i(t)\right]dt,\]
where
\[A_i(t)=\int_0^1(1-\tau)\mathbf{1}[t\le \hat q_\tau(X_i)]\,d\tau,\qquad B_i(t)=\int_0^1\tau\mathbf{1}[\hat q_\tau(X_i)\le t]\,d\tau.\]
Since $t\le \hat q_\tau(X_i)$ is equivalent to $\tau\ge \widehat F(t\mid X_i)$, we have
\[A_i(t)=\int_{\widehat F(t\mid X_i)}^1(1-\tau)\,d\tau=\frac{1}{2}\left(1-\widehat F(t\mid X_i)\right)^2=\frac{1}{2}\widehat S(t\mid X_i)^2.\]
Similarly, since $\hat q_\tau(X_i)\le t$ is equivalent to $\tau\le \widehat F(t\mid X_i)$,
\[B_i(t)=\int_0^{\widehat F(t\mid X_i)}\tau\,d\tau=\frac{1}{2}\widehat F(t\mid X_i)^2=\frac{1}{2}\left(1-\widehat S(t\mid X_i)\right)^2.\]
Substituting these two identities yields
{\small \[\int_0^1\mathrm{QS}(\tau)\,d\tau=\frac{1}{2|\mathcal{I}_{\mathrm{test}}|}\sum_{i\in\mathcal{I}_{\mathrm{test}}}\int_0^{t_{\max}}\left[\frac{\delta_i\mathbf{1}[Y_i\le t]\widehat S(t\mid X_i)^2}{\widehat G(Y_i\mid X_i)}+\frac{\mathbf{1}[Y_i>t]\big(1-\widehat S(t\mid X_i)\big)^2}{\widehat G(t\mid X_i)}\right]dt.\]
}Comparing this expression with the definition of IBS gives
\[\mathrm{IBS}=\frac{2}{t_{\max}}\int_0^1\mathrm{QS}(\tau)\,d\tau.\]
Thus, IBS is the integrated quantile score over all quantile levels, up to scaling.

\subsection{Compute resources}
\label{app:sec:compute_resources}
All experiments were conducted using on an Ubuntu 22.04.5 Linux cluster. 10 Intel(R) Xeon(R) E5-2670 v2 @ 2.50GHz CPUs were available along with two NVIDIA Tesla V100 GPUs. 200TB of long term storage was available and 300GB of RAM.

Our simulation experiments need only about 2GB of RAM and testing all methods for a single seed for a single setting takes about 2 minutes on a single CPU.

For the real data experiments, training the Deep Cox models for single seed for a single experiment takes under 2 minutes on a single GPU with no more than 30GB of combined GPU memory and CPU RAM needed. Running all calibration procedures for a single seed takes no more than 2 minutes on a single CPU. 


\subsection{Synthetic Data Details}
\label{app:sec:simulations}
For completeness, we reproduce details from~\citep{Gui2024AdaptiveCutoffs} on the data generating process for the synthetic data experiments in Sec.~\ref{sec:simulations}.

The datapoints $(X_i,Y_i,\delta_i)$ are i.i.d. for all 6 settings. We use $2500$ points for each of the training, calibration datasets and $5000$ points for the test datasets. For each setting, $P_X \sim \text{Unif}([0, 4]^p)$, $P_{T|X} \sim \text{LogNormal}(\mu(x), \sigma^2(x))$, where $p$ is the covariate dimension.

Settings 1--2 are univariate with independent censoring, while Settings 3--4 involve covariate-dependent censoring. Settings 5-6 have high-dimensional covariates ($p = 10$) and covariate-dependent censoring. Table~\ref{app:tab:synthetic_data_generating_process} summarizes the parameters for each setup.

\begin{table}[h!]
\caption{Data generating mechanism for synthetic data experiments. Reproduced from~\citep{Gui2024AdaptiveCutoffs}.}
\label{app:tab:synthetic_data_generating_process}
\centering
\begin{tabular}{ccccc}
\toprule
Setting & $p$ & $\mu(x)$ & $\sigma(x)$ & $P_{C|X}$ \\
\midrule
1 & 1  & $0.632x$                                      & 2                   & $\text{Exp}(0.1)$                                          \\
2 & 1  & $3 \cdot 1\{x > 2\} + x \cdot 1\{x \le 2\}$   & 0.5                 & $\text{Exp}(0.1)$                                          \\
3 & 1  & $2 \cdot 1\{x > 2\} + x \cdot 1\{x \le 2\}$   & 0.5                 & $\text{Exp}\left(0.25 + \frac{6+x}{100}\right)$            \\
4 & 1  & $3 \cdot 1\{x > 2\} + 1.5x \cdot 1\{x \le 2\}$& 0.5                 & $\text{LogNormal}\left(2 + \frac{2-x}{50},\, 0.5\right)$   \\
5 & 10 & $0.126(x_1 + \sqrt{x_3x_5}) + 1$              & 1                   & $\text{Exp}\left(\frac{x_{10}}{10} + \frac{1}{20}\right)$  \\
6 & 10 & $0.126(x_1 + \sqrt{x_3x_5}) + 1$              & $\frac{x_2 + 2}{4}$ & $\text{Exp}\left(\frac{x_{10}}{10} + \frac{1}{20}\right)$  \\
\bottomrule
\end{tabular}
\end{table}

For each setting, we train a linear Cox model to estimate the risk score using the \texttt{lifelines} package. We use Breslow's estimate to obtain the probabilities needed for the survival and censoring functions $\widehat{S}$ and $\widehat{G}$ for our calibration algorithms. We clip the probabilities returned from the Cox model if they are below $10^{-4}$ for numerical stability. We repeat the training and calibration procedure for each setting $100$ times using different random seeds for data generation.

\subsection{Real-data foundation models}
\label{app:sec:real_data_details}
\subsubsection{Pathology Reports Model}
\paragraph{Data:} We use the TCGA dataset which consists of cancer patients with various forms of data and clinical outcomes. For this experiment, we use text-based pathology reports as provided by~\citep{Kefelli2024TCGAreports} to predict the progression free survival (PFS). The PFS times we use were calculated and provided by~\citep{Liu2024TCGAOutcomes}. After selecting those patients which have both PFS times and pathology reports available, we are left with 9517 patients. For these selected patients, $59.55\%$ of the PFS outcomes were censored. We randomly split the data into $20\%$ for testing, $20\%$ for both calibration and validation, and $60\%$ for Deep Cox model training.

\paragraph{Model Training:} We make use of the Bio + Clinical BERT Model from~\citep{alsentzer2019BioBERT} which was trained on all notes from MIMIC III~\citep{Johnson2016MIMIC3}, a database containing electronic health records from ICU patients at the Beth Israel Hospital in Boston, MA. We extract 768 dimensional features by extracting the final token embedding from the last layer of the Bio + Clinical BERT Model. We then train a three layer MLP on top of these embeddings with 128 and 64 hidden units. We use ReLU activation and dropout at a rate of 0.1 between layers. We train the model to maximize the Cox partial likelihood as in~\citep{Katzman2018DeepSurv}. We use Adam~\citep{kingma2015Adam} as an optimizer with a learning rate of 0.001 and betas=(0.9, 0.999) with a batch size of 256. We train for up to 50 epochs. We use early stopping by tracking the loss on the validation set and stopping training if it does not improve for 5 consecutive epochs. We use the model parameters from the epoch with the lowest validation loss. We follow exactly this same procedure to train a model to predict the censoring time too. Over 100 seeds controlling the dataset splits and stochasticity in model training, we obtain a mean C-index of $0.643\pm0.002$ for the time to event Deep Cox model and $0.672\pm0.002$ for the censoring Deep Cox model ($\pm$ 2 standard error reported). We use Breslow's estimator~\citep{Breslow1972BreslowEstimator} on top of these Deep Cox models to obtain the probabilities needed for $\widehat{S},\widehat{G}$ for our calibration algorithms. We clip the probabilities returned from the Deep Cox $\widehat{S},\widehat{G}$ if they are below $0.01$ for numeric stability.

\subsubsection{RNA-seq Model}
\paragraph{Data:} We again use the TCGA dataset. For this experiment, we use bulk-RNA samples as provided by~\citep{Vivian2017Toil} to predict the PFS. The PFS times we use were calculated and provided by~\citep{Liu2024TCGAOutcomes}. After selecting those patients which have both PFS times and RNA-seq samples available, we are left with 10272 patients. For these selected patients, $56.90\%$ of the PFS outcomes were censored. We randomly split the data into $20\%$ for testing, $20\%$ for calibration and validation both, and $60\%$ for Deep Cox model training.

\paragraph{Model Training:} We make use of the BulkFormer 37M Model from~\citep{Kang2025BulkFormer}. BulkFormer was pretrained on over 500,000 human bulk transcriptomic profiles. It incorporates a hybrid encoder architecture, combining a graph neural network to capture explicit gene-gene interactions. We extract 131 dimensional features by taking the mean of the final gene embeddings for all the genes that can be handled by BulkFormer and are present in the compiled data from~\citep{Vivian2017Toil} as suggested by~\citep{Kang2025BulkFormer}. We then train a three layer MLP on top of these embeddings with 128 and 64 hidden units. We use ReLU activation and dropout at a rate of 0.1 between layers. We train the model to maximize the Cox partial likelihood as in~\citep{Katzman2018DeepSurv}. We use Adam~\citep{kingma2015Adam} as an optimizer with a learning rate of 0.001 and betas=(0.9, 0.999) with a batch size of 256. We train for up to 50 epochs. We use early stopping by tracking the loss on the validation set and stopping training if it does not improve for 5 consecutive epochs. We use the model parameters from the epoch with the lowest validation loss. We follow exactly this same procedure to train a model to predict the censoring time too. Over 100 seeds controlling the dataset splits and stochasticity in model training, we obtain a mean C-index of $0.686\pm0.002$ for the time to event Deep Cox model and $0.565\pm0.002$ for the censoring Deep Cox model ($\pm$ 2 standard error reported). We use Breslow's estimator~\citep{Breslow1972BreslowEstimator} on top of these Deep Cox models to obtain the probabilities needed for $\widehat{S},\widehat{G}$ for our calibration algorithms. We clip the probabilities returned from the Deep Cox $\widehat{S},\widehat{G}$ if they are below $0.01$ for numeric stability.

\newpage

\section{Additional experiments}

\subsection{Full simulations results}
We present the AUPIT and IBS metrics for the expanded set of estimators across the six simulation settings in Figure~\ref{app:fig:simulation_aupit_ibs_all_estimators}. In Table~\ref{tab:simulation_quantile_results_all_estimators}, we further examine quantile loss across levels $10\%$ through $90\%$. The fraction of patients included in the computation of the quantile loss is shown in Figure~\ref{app:fig:simulation_quantile_included}. 

The DR-ISR method continues to perform best empirically on the AUPIT and IBS metrics, followed by RW$^+$-ISR and HT-ISR. Our proposed estimators also improve quantile loss across all quantile levels in Settings 2--4. In the remaining settings, the differences across methods are small, and no estimators offers a significant advantage over the uncalibrated Cox baseline.

\begin{figure}[h]
    \centering
    \includegraphics[width=1\linewidth]{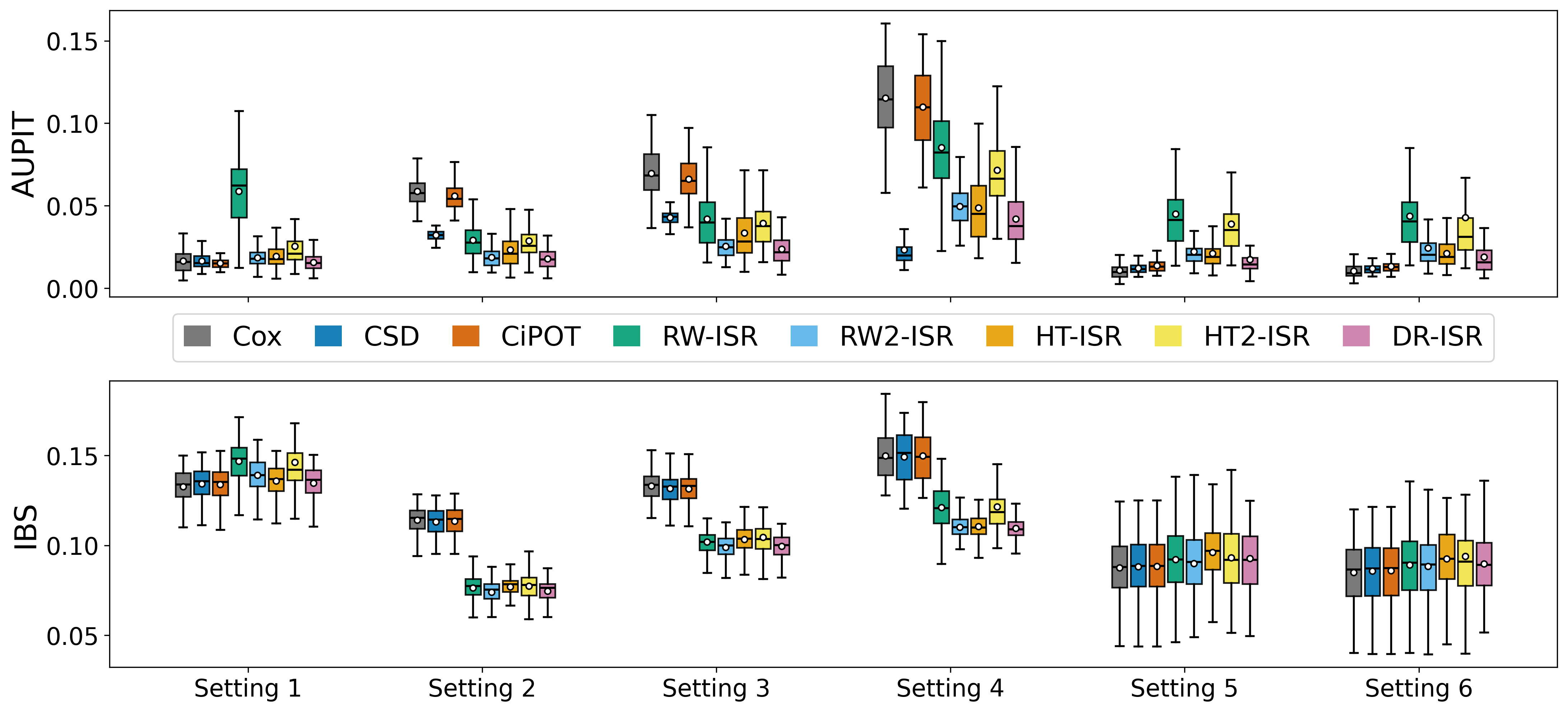}
    \caption{AUPIT and IBS across the six simulation settings for all estimators. Each boxplot summarizes variation over simulation seeds, where the center line indicates the median and the white dot indicates the mean.}
    \label{app:fig:simulation_aupit_ibs_all_estimators}
\end{figure}






\begin{table}[t]
  \caption{Quantile loss from $10\%$ to $90\%$ on the test set across the six simulation settings. Entries report the mean over 100 seeds $\phantom{}\pm2$ standard errors. Bold values indicate the lowest quantile loss for a given setting and quantile when strictly lower than Cox after rounding. For brevity, we remove the ISR suffix on our methods.}
  \vspace{2pt}
  \label{tab:simulation_quantile_results_all_estimators}
  \centering
  { \tiny
  \setlength{\tabcolsep}{2pt}
  \begin{tabular}{llccccccccc}
    \toprule
    & Method & $10\%$ & $20\%$ & $30\%$ & $40\%$ & $50\%$ & $60\%$ & $70\%$ & $80\%$ & $90\%$ \\
    \midrule
    \multirow{8}{*}{\rotatebox[origin=c]{90}{Setting 1}} & \smash{Cox} & \multicolumn{1}{c|}{$3.39 \pm 0.08$} & \multicolumn{1}{c|}{$6.73 \pm 0.16$} & \multicolumn{1}{c|}{$9.99 \pm 0.24$} & \multicolumn{1}{c|}{$13.1 \pm 0.32$} & \multicolumn{1}{c|}{$15.9 \pm 0.42$} & \multicolumn{1}{c|}{$17.8 \pm 0.58$} & \multicolumn{1}{c|}{$16.4 \pm 0.78$} & \multicolumn{1}{c|}{$10.7 \pm 0.87$} & $4.27 \pm 0.53$ \\
    & \smash{CSD} & \multicolumn{1}{c|}{$3.39 \pm 0.08$} & \multicolumn{1}{c|}{$6.73 \pm 0.16$} & \multicolumn{1}{c|}{$9.99 \pm 0.24$} & \multicolumn{1}{c|}{$13.1 \pm 0.32$} & \multicolumn{1}{c|}{$15.9 \pm 0.42$} & \multicolumn{1}{c|}{$17.8 \pm 0.58$} & \multicolumn{1}{c|}{$\textbf{16.3} \pm 0.78$} & \multicolumn{1}{c|}{$10.7 \pm 0.86$} & $4.28 \pm 0.53$ \\
    & \smash{CiPOT} & \multicolumn{1}{c|}{$3.39 \pm 0.08$} & \multicolumn{1}{c|}{$6.73 \pm 0.16$} & \multicolumn{1}{c|}{$9.99 \pm 0.24$} & \multicolumn{1}{c|}{$13.1 \pm 0.32$} & \multicolumn{1}{c|}{$15.9 \pm 0.42$} & \multicolumn{1}{c|}{$17.8 \pm 0.58$} & \multicolumn{1}{c|}{$\textbf{16.3} \pm 0.78$} & \multicolumn{1}{c|}{$10.7 \pm 0.86$} & $4.27 \pm 0.53$ \\
    & \smash{RW} & \multicolumn{1}{c|}{$3.40 \pm 0.08$} & \multicolumn{1}{c|}{$6.75 \pm 0.16$} & \multicolumn{1}{c|}{$10.1 \pm 0.25$} & \multicolumn{1}{c|}{$13.2 \pm 0.32$} & \multicolumn{1}{c|}{$16.1 \pm 0.42$} & \multicolumn{1}{c|}{$18.0 \pm 0.58$} & \multicolumn{1}{c|}{$16.6 \pm 0.79$} & \multicolumn{1}{c|}{$10.9 \pm 0.88$} & $4.42 \pm 0.55$ \\
    & \smash{RW$^+$} & \multicolumn{1}{c|}{$3.39 \pm 0.08$} & \multicolumn{1}{c|}{$6.74 \pm 0.16$} & \multicolumn{1}{c|}{$10.1 \pm 0.24$} & \multicolumn{1}{c|}{$13.2 \pm 0.33$} & \multicolumn{1}{c|}{$16.1 \pm 0.44$} & \multicolumn{1}{c|}{$17.9 \pm 0.59$} & \multicolumn{1}{c|}{$16.6 \pm 0.83$} & \multicolumn{1}{c|}{$11.0 \pm 0.90$} & $4.32 \pm 0.54$ \\
    & \smash{HT} & \multicolumn{1}{c|}{$3.39 \pm 0.08$} & \multicolumn{1}{c|}{$6.73 \pm 0.16$} & \multicolumn{1}{c|}{$10.00 \pm 0.24$} & \multicolumn{1}{c|}{$13.1 \pm 0.32$} & \multicolumn{1}{c|}{$15.9 \pm 0.42$} & \multicolumn{1}{c|}{$17.8 \pm 0.58$} & \multicolumn{1}{c|}{$16.4 \pm 0.79$} & \multicolumn{1}{c|}{$10.7 \pm 0.87$} & $4.31 \pm 0.53$ \\
    & \smash{HT$^+$} & \multicolumn{1}{c|}{$3.79 \pm 0.26$} & \multicolumn{1}{c|}{$7.08 \pm 0.27$} & \multicolumn{1}{c|}{$10.4 \pm 0.31$} & \multicolumn{1}{c|}{$13.5 \pm 0.36$} & \multicolumn{1}{c|}{$16.3 \pm 0.45$} & \multicolumn{1}{c|}{$18.2 \pm 0.73$} & \multicolumn{1}{c|}{$16.8 \pm 0.85$} & \multicolumn{1}{c|}{$11.0 \pm 0.90$} & $4.33 \pm 0.54$ \\
    & \smash{DR} & \multicolumn{1}{c|}{$3.39 \pm 0.08$} & \multicolumn{1}{c|}{$6.73 \pm 0.16$} & \multicolumn{1}{c|}{$10.0 \pm 0.24$} & \multicolumn{1}{c|}{$13.1 \pm 0.32$} & \multicolumn{1}{c|}{$15.9 \pm 0.42$} & \multicolumn{1}{c|}{$17.8 \pm 0.58$} & \multicolumn{1}{c|}{$16.4 \pm 0.78$} & \multicolumn{1}{c|}{$10.7 \pm 0.87$} & $4.29 \pm 0.54$ \\
    \midrule
    \multirow{8}{*}{\rotatebox[origin=c]{90}{Setting 2}} & \smash{Cox} & \multicolumn{1}{c|}{$1.28 \pm 0.02$} & \multicolumn{1}{c|}{$2.41 \pm 0.06$} & \multicolumn{1}{c|}{$2.81 \pm 0.09$} & \multicolumn{1}{c|}{$2.88 \pm 0.11$} & \multicolumn{1}{c|}{$2.78 \pm 0.12$} & \multicolumn{1}{c|}{$2.54 \pm 0.11$} & \multicolumn{1}{c|}{$2.18 \pm 0.09$} & \multicolumn{1}{c|}{$1.64 \pm 0.08$} & $0.96 \pm 0.07$ \\
    & \smash{CSD} & \multicolumn{1}{c|}{$1.23 \pm 0.01$} & \multicolumn{1}{c|}{$2.39 \pm 0.05$} & \multicolumn{1}{c|}{$2.80 \pm 0.09$} & \multicolumn{1}{c|}{$2.87 \pm 0.11$} & \multicolumn{1}{c|}{$2.77 \pm 0.12$} & \multicolumn{1}{c|}{$2.54 \pm 0.11$} & \multicolumn{1}{c|}{$2.18 \pm 0.10$} & \multicolumn{1}{c|}{$1.64 \pm 0.08$} & $1.02 \pm 0.06$ \\
    & \smash{CiPOT} & \multicolumn{1}{c|}{$1.35 \pm 0.03$} & \multicolumn{1}{c|}{$2.42 \pm 0.05$} & \multicolumn{1}{c|}{$2.76 \pm 0.09$} & \multicolumn{1}{c|}{$2.83 \pm 0.11$} & \multicolumn{1}{c|}{$2.74 \pm 0.12$} & \multicolumn{1}{c|}{$2.52 \pm 0.11$} & \multicolumn{1}{c|}{$2.18 \pm 0.10$} & \multicolumn{1}{c|}{$1.65 \pm 0.08$} & $0.96 \pm 0.07$ \\
    & \smash{RW} & \multicolumn{1}{c|}{$\textbf{0.84} \pm 0.01$} & \multicolumn{1}{c|}{$1.46 \pm 0.01$} & \multicolumn{1}{c|}{$1.84 \pm 0.03$} & \multicolumn{1}{c|}{$2.05 \pm 0.04$} & \multicolumn{1}{c|}{$2.14 \pm 0.06$} & \multicolumn{1}{c|}{$2.08 \pm 0.07$} & \multicolumn{1}{c|}{$1.86 \pm 0.07$} & \multicolumn{1}{c|}{$1.44 \pm 0.07$} & $0.84 \pm 0.06$ \\
    & \smash{RW$^+$} & \multicolumn{1}{c|}{$\textbf{0.84} \pm 0.01$} & \multicolumn{1}{c|}{$\textbf{1.44} \pm 0.01$} & \multicolumn{1}{c|}{$\textbf{1.82} \pm 0.03$} & \multicolumn{1}{c|}{$\textbf{2.03} \pm 0.04$} & \multicolumn{1}{c|}{$\textbf{2.09} \pm 0.06$} & \multicolumn{1}{c|}{$2.03 \pm 0.07$} & \multicolumn{1}{c|}{$1.82 \pm 0.07$} & \multicolumn{1}{c|}{$1.38 \pm 0.07$} & $0.78 \pm 0.05$ \\
    & \smash{HT} & \multicolumn{1}{c|}{$\textbf{0.84} \pm 0.01$} & \multicolumn{1}{c|}{$1.45 \pm 0.01$} & \multicolumn{1}{c|}{$\textbf{1.82} \pm 0.03$} & \multicolumn{1}{c|}{$\textbf{2.03} \pm 0.04$} & \multicolumn{1}{c|}{$2.10 \pm 0.06$} & \multicolumn{1}{c|}{$2.04 \pm 0.07$} & \multicolumn{1}{c|}{$1.83 \pm 0.07$} & \multicolumn{1}{c|}{$1.40 \pm 0.07$} & $0.82 \pm 0.06$ \\
    & \smash{HT$^+$} & \multicolumn{1}{c|}{$0.93 \pm 0.04$} & \multicolumn{1}{c|}{$1.51 \pm 0.03$} & \multicolumn{1}{c|}{$1.87 \pm 0.03$} & \multicolumn{1}{c|}{$2.09 \pm 0.06$} & \multicolumn{1}{c|}{$2.15 \pm 0.06$} & \multicolumn{1}{c|}{$2.07 \pm 0.07$} & \multicolumn{1}{c|}{$1.86 \pm 0.07$} & \multicolumn{1}{c|}{$1.40 \pm 0.07$} & $0.79 \pm 0.06$ \\
    & \smash{DR} & \multicolumn{1}{c|}{$\textbf{0.84} \pm 0.01$} & \multicolumn{1}{c|}{$1.45 \pm 0.01$} & \multicolumn{1}{c|}{$\textbf{1.82} \pm 0.03$} & \multicolumn{1}{c|}{$\textbf{2.03} \pm 0.04$} & \multicolumn{1}{c|}{$2.10 \pm 0.06$} & \multicolumn{1}{c|}{$\textbf{2.02} \pm 0.07$} & \multicolumn{1}{c|}{$\textbf{1.80} \pm 0.07$} & \multicolumn{1}{c|}{$\textbf{1.37} \pm 0.07$} & $\textbf{0.77} \pm 0.05$ \\
    \midrule
    \multirow{8}{*}{\rotatebox[origin=c]{90}{Setting 3}} & \smash{Cox} & \multicolumn{1}{c|}{$0.52 \pm 0.01$} & \multicolumn{1}{c|}{$0.97 \pm 0.02$} & \multicolumn{1}{c|}{$1.17 \pm 0.03$} & \multicolumn{1}{c|}{$1.20 \pm 0.04$} & \multicolumn{1}{c|}{$1.14 \pm 0.04$} & \multicolumn{1}{c|}{$1.01 \pm 0.04$} & \multicolumn{1}{c|}{$0.85 \pm 0.04$} & \multicolumn{1}{c|}{$0.60 \pm 0.03$} & $0.32 \pm 0.02$ \\
    & \smash{CSD} & \multicolumn{1}{c|}{$0.51 \pm 0.01$} & \multicolumn{1}{c|}{$0.96 \pm 0.01$} & \multicolumn{1}{c|}{$1.18 \pm 0.03$} & \multicolumn{1}{c|}{$1.20 \pm 0.04$} & \multicolumn{1}{c|}{$1.14 \pm 0.04$} & \multicolumn{1}{c|}{$1.02 \pm 0.04$} & \multicolumn{1}{c|}{$0.85 \pm 0.04$} & \multicolumn{1}{c|}{$0.59 \pm 0.03$} & $0.33 \pm 0.02$ \\
    & \smash{CiPOT} & \multicolumn{1}{c|}{$0.53 \pm 0.01$} & \multicolumn{1}{c|}{$0.96 \pm 0.02$} & \multicolumn{1}{c|}{$1.15 \pm 0.03$} & \multicolumn{1}{c|}{$1.18 \pm 0.04$} & \multicolumn{1}{c|}{$1.12 \pm 0.04$} & \multicolumn{1}{c|}{$1.00 \pm 0.04$} & \multicolumn{1}{c|}{$0.85 \pm 0.04$} & \multicolumn{1}{c|}{$0.60 \pm 0.03$} & $0.32 \pm 0.02$ \\
    & \smash{RW} & \multicolumn{1}{c|}{$\textbf{0.38} \pm 0.01$} & \multicolumn{1}{c|}{$0.68 \pm 0.01$} & \multicolumn{1}{c|}{$0.86 \pm 0.02$} & \multicolumn{1}{c|}{$0.95 \pm 0.02$} & \multicolumn{1}{c|}{$0.97 \pm 0.02$} & \multicolumn{1}{c|}{$0.93 \pm 0.03$} & \multicolumn{1}{c|}{$0.82 \pm 0.03$} & \multicolumn{1}{c|}{$0.59 \pm 0.03$} & $0.32 \pm 0.02$ \\
    & \smash{RW$^+$} & \multicolumn{1}{c|}{$\textbf{0.38} \pm 0.01$} & \multicolumn{1}{c|}{$\textbf{0.66} \pm 0.01$} & \multicolumn{1}{c|}{$\textbf{0.84} \pm 0.01$} & \multicolumn{1}{c|}{$0.94 \pm 0.02$} & \multicolumn{1}{c|}{$0.96 \pm 0.02$} & \multicolumn{1}{c|}{$\textbf{0.91} \pm 0.03$} & \multicolumn{1}{c|}{$\textbf{0.80} \pm 0.03$} & \multicolumn{1}{c|}{$\textbf{0.57} \pm 0.03$} & $\textbf{0.30} \pm 0.02$ \\
    & \smash{HT} & \multicolumn{1}{c|}{$\textbf{0.38} \pm 0.01$} & \multicolumn{1}{c|}{$\textbf{0.66} \pm 0.01$} & \multicolumn{1}{c|}{$0.85 \pm 0.01$} & \multicolumn{1}{c|}{$0.94 \pm 0.02$} & \multicolumn{1}{c|}{$0.96 \pm 0.03$} & \multicolumn{1}{c|}{$0.92 \pm 0.03$} & \multicolumn{1}{c|}{$0.81 \pm 0.03$} & \multicolumn{1}{c|}{$0.59 \pm 0.03$} & $0.32 \pm 0.02$ \\
    & \smash{HT$^+$} & \multicolumn{1}{c|}{$0.44 \pm 0.04$} & \multicolumn{1}{c|}{$0.71 \pm 0.03$} & \multicolumn{1}{c|}{$0.88 \pm 0.02$} & \multicolumn{1}{c|}{$0.98 \pm 0.03$} & \multicolumn{1}{c|}{$0.99 \pm 0.03$} & \multicolumn{1}{c|}{$0.93 \pm 0.03$} & \multicolumn{1}{c|}{$0.81 \pm 0.03$} & \multicolumn{1}{c|}{$0.59 \pm 0.04$} & $\textbf{0.30} \pm 0.02$ \\
    & \smash{DR} & \multicolumn{1}{c|}{$\textbf{0.38} \pm 0.01$} & \multicolumn{1}{c|}{$\textbf{0.66} \pm 0.01$} & \multicolumn{1}{c|}{$\textbf{0.84} \pm 0.01$} & \multicolumn{1}{c|}{$\textbf{0.93} \pm 0.02$} & \multicolumn{1}{c|}{$\textbf{0.95} \pm 0.02$} & \multicolumn{1}{c|}{$\textbf{0.91} \pm 0.03$} & \multicolumn{1}{c|}{$\textbf{0.80} \pm 0.03$} & \multicolumn{1}{c|}{$\textbf{0.57} \pm 0.03$} & $\textbf{0.30} \pm 0.02$ \\
    \midrule
    \multirow{8}{*}{\rotatebox[origin=c]{90}{Setting 4}} & \smash{Cox} & \multicolumn{1}{c|}{$1.30 \pm 0.09$} & \multicolumn{1}{c|}{$1.59 \pm 0.10$} & \multicolumn{1}{c|}{$1.73 \pm 0.10$} & \multicolumn{1}{c|}{$1.73 \pm 0.10$} & \multicolumn{1}{c|}{$1.66 \pm 0.10$} & \multicolumn{1}{c|}{$1.41 \pm 0.10$} & \multicolumn{1}{c|}{$1.17 \pm 0.09$} & \multicolumn{1}{c|}{$0.89 \pm 0.09$} & $0.52 \pm 0.07$ \\
    & \smash{CSD} & \multicolumn{1}{c|}{$1.13 \pm 0.02$} & \multicolumn{1}{c|}{$1.86 \pm 0.04$} & \multicolumn{1}{c|}{$2.18 \pm 0.05$} & \multicolumn{1}{c|}{$1.98 \pm 0.08$} & \multicolumn{1}{c|}{$1.77 \pm 0.09$} & \multicolumn{1}{c|}{$1.47 \pm 0.10$} & \multicolumn{1}{c|}{$1.21 \pm 0.09$} & \multicolumn{1}{c|}{$0.90 \pm 0.09$} & $0.52 \pm 0.06$ \\
    & \smash{CiPOT} & \multicolumn{1}{c|}{$1.33 \pm 0.09$} & \multicolumn{1}{c|}{$1.59 \pm 0.10$} & \multicolumn{1}{c|}{$1.71 \pm 0.10$} & \multicolumn{1}{c|}{$1.72 \pm 0.10$} & \multicolumn{1}{c|}{$1.66 \pm 0.10$} & \multicolumn{1}{c|}{$1.41 \pm 0.10$} & \multicolumn{1}{c|}{$1.18 \pm 0.09$} & \multicolumn{1}{c|}{$0.90 \pm 0.09$} & $0.53 \pm 0.07$ \\
    & \smash{RW} & \multicolumn{1}{c|}{$0.78 \pm 0.02$} & \multicolumn{1}{c|}{$1.24 \pm 0.05$} & \multicolumn{1}{c|}{$1.52 \pm 0.07$} & \multicolumn{1}{c|}{$1.65 \pm 0.08$} & \multicolumn{1}{c|}{$1.66 \pm 0.09$} & \multicolumn{1}{c|}{$1.46 \pm 0.10$} & \multicolumn{1}{c|}{$1.24 \pm 0.10$} & \multicolumn{1}{c|}{$0.96 \pm 0.10$} & $0.58 \pm 0.08$ \\
    & \smash{RW$^+$} & \multicolumn{1}{c|}{$\textbf{0.77} \pm 0.02$} & \multicolumn{1}{c|}{$\textbf{1.20} \pm 0.04$} & \multicolumn{1}{c|}{$1.47 \pm 0.06$} & \multicolumn{1}{c|}{$1.59 \pm 0.08$} & \multicolumn{1}{c|}{$1.59 \pm 0.09$} & \multicolumn{1}{c|}{$1.37 \pm 0.09$} & \multicolumn{1}{c|}{$1.15 \pm 0.09$} & \multicolumn{1}{c|}{$0.87 \pm 0.09$} & $0.52 \pm 0.07$ \\
    & \smash{HT} & \multicolumn{1}{c|}{$\textbf{0.77} \pm 0.02$} & \multicolumn{1}{c|}{$\textbf{1.20} \pm 0.04$} & \multicolumn{1}{c|}{$\textbf{1.46} \pm 0.06$} & \multicolumn{1}{c|}{$\textbf{1.58} \pm 0.08$} & \multicolumn{1}{c|}{$1.59 \pm 0.09$} & \multicolumn{1}{c|}{$1.38 \pm 0.10$} & \multicolumn{1}{c|}{$1.15 \pm 0.09$} & \multicolumn{1}{c|}{$0.87 \pm 0.09$} & $0.50 \pm 0.06$ \\
    & \smash{HT$^+$} & \multicolumn{1}{c|}{$0.91 \pm 0.07$} & \multicolumn{1}{c|}{$1.30 \pm 0.07$} & \multicolumn{1}{c|}{$1.55 \pm 0.07$} & \multicolumn{1}{c|}{$1.67 \pm 0.08$} & \multicolumn{1}{c|}{$1.63 \pm 0.09$} & \multicolumn{1}{c|}{$1.41 \pm 0.10$} & \multicolumn{1}{c|}{$1.16 \pm 0.09$} & \multicolumn{1}{c|}{$0.88 \pm 0.09$} & $0.52 \pm 0.07$ \\
    & \smash{DR} & \multicolumn{1}{c|}{$\textbf{0.77} \pm 0.02$} & \multicolumn{1}{c|}{$\textbf{1.20} \pm 0.04$} & \multicolumn{1}{c|}{$1.47 \pm 0.06$} & \multicolumn{1}{c|}{$\textbf{1.58} \pm 0.07$} & \multicolumn{1}{c|}{$\textbf{1.58} \pm 0.09$} & \multicolumn{1}{c|}{$\textbf{1.36} \pm 0.10$} & \multicolumn{1}{c|}{$\textbf{1.14} \pm 0.09$} & \multicolumn{1}{c|}{$\textbf{0.85} \pm 0.08$} & $\textbf{0.49} \pm 0.06$ \\
    \midrule
    \multirow{8}{*}{\rotatebox[origin=c]{90}{Setting 5}} & \smash{Cox} & \multicolumn{1}{c|}{$0.65 \pm 0.01$} & \multicolumn{1}{c|}{$1.22 \pm 0.01$} & \multicolumn{1}{c|}{$1.72 \pm 0.01$} & \multicolumn{1}{c|}{$2.14 \pm 0.02$} & \multicolumn{1}{c|}{$2.47 \pm 0.02$} & \multicolumn{1}{c|}{$2.67 \pm 0.03$} & \multicolumn{1}{c|}{$2.67 \pm 0.05$} & \multicolumn{1}{c|}{$2.34 \pm 0.09$} & $1.41 \pm 0.13$ \\
    & \smash{CSD} & \multicolumn{1}{c|}{$0.65 \pm 0.01$} & \multicolumn{1}{c|}{$1.22 \pm 0.01$} & \multicolumn{1}{c|}{$1.72 \pm 0.01$} & \multicolumn{1}{c|}{$2.14 \pm 0.02$} & \multicolumn{1}{c|}{$2.47 \pm 0.02$} & \multicolumn{1}{c|}{$2.67 \pm 0.03$} & \multicolumn{1}{c|}{$2.67 \pm 0.05$} & \multicolumn{1}{c|}{$2.34 \pm 0.09$} & $\textbf{1.40} \pm 0.13$ \\
    & \smash{CiPOT} & \multicolumn{1}{c|}{$0.65 \pm 0.01$} & \multicolumn{1}{c|}{$1.22 \pm 0.01$} & \multicolumn{1}{c|}{$1.72 \pm 0.01$} & \multicolumn{1}{c|}{$2.14 \pm 0.02$} & \multicolumn{1}{c|}{$2.47 \pm 0.02$} & \multicolumn{1}{c|}{$2.67 \pm 0.03$} & \multicolumn{1}{c|}{$2.67 \pm 0.05$} & \multicolumn{1}{c|}{$2.34 \pm 0.09$} & $\textbf{1.40} \pm 0.13$ \\
    & \smash{RW} & \multicolumn{1}{c|}{$0.66 \pm 0.02$} & \multicolumn{1}{c|}{$1.26 \pm 0.04$} & \multicolumn{1}{c|}{$1.77 \pm 0.03$} & \multicolumn{1}{c|}{$2.24 \pm 0.08$} & \multicolumn{1}{c|}{$2.59 \pm 0.07$} & \multicolumn{1}{c|}{$2.77 \pm 0.05$} & \multicolumn{1}{c|}{$2.76 \pm 0.06$} & \multicolumn{1}{c|}{$2.42 \pm 0.10$} & $1.50 \pm 0.13$ \\
    & \smash{RW$^+$} & \multicolumn{1}{c|}{$0.65 \pm 0.01$} & \multicolumn{1}{c|}{$1.22 \pm 0.01$} & \multicolumn{1}{c|}{$1.73 \pm 0.02$} & \multicolumn{1}{c|}{$2.16 \pm 0.02$} & \multicolumn{1}{c|}{$2.52 \pm 0.03$} & \multicolumn{1}{c|}{$2.72 \pm 0.04$} & \multicolumn{1}{c|}{$2.75 \pm 0.06$} & \multicolumn{1}{c|}{$2.39 \pm 0.10$} & $1.45 \pm 0.13$ \\
    & \smash{HT} & \multicolumn{1}{c|}{$0.65 \pm 0.01$} & \multicolumn{1}{c|}{$1.22 \pm 0.01$} & \multicolumn{1}{c|}{$1.73 \pm 0.01$} & \multicolumn{1}{c|}{$2.15 \pm 0.02$} & \multicolumn{1}{c|}{$2.49 \pm 0.02$} & \multicolumn{1}{c|}{$2.70 \pm 0.03$} & \multicolumn{1}{c|}{$2.73 \pm 0.06$} & \multicolumn{1}{c|}{$2.47 \pm 0.11$} & $1.48 \pm 0.13$ \\
    & \smash{HT$^+$} & \multicolumn{1}{c|}{$0.69 \pm 0.03$} & \multicolumn{1}{c|}{$1.29 \pm 0.04$} & \multicolumn{1}{c|}{$1.83 \pm 0.05$} & \multicolumn{1}{c|}{$2.26 \pm 0.06$} & \multicolumn{1}{c|}{$2.58 \pm 0.05$} & \multicolumn{1}{c|}{$2.79 \pm 0.05$} & \multicolumn{1}{c|}{$2.81 \pm 0.07$} & \multicolumn{1}{c|}{$2.40 \pm 0.10$} & $1.46 \pm 0.13$ \\
    & \smash{DR} & \multicolumn{1}{c|}{$0.65 \pm 0.01$} & \multicolumn{1}{c|}{$1.22 \pm 0.01$} & \multicolumn{1}{c|}{$1.73 \pm 0.01$} & \multicolumn{1}{c|}{$2.15 \pm 0.02$} & \multicolumn{1}{c|}{$2.53 \pm 0.06$} & \multicolumn{1}{c|}{$2.71 \pm 0.05$} & \multicolumn{1}{c|}{$2.72 \pm 0.06$} & \multicolumn{1}{c|}{$2.42 \pm 0.10$} & $1.44 \pm 0.13$ \\
    \midrule
    \multirow{8}{*}{\rotatebox[origin=c]{90}{Setting 6}} & \smash{Cox} & \multicolumn{1}{c|}{$0.72 \pm 0.01$} & \multicolumn{1}{c|}{$1.35 \pm 0.01$} & \multicolumn{1}{c|}{$1.90 \pm 0.02$} & \multicolumn{1}{c|}{$2.38 \pm 0.02$} & \multicolumn{1}{c|}{$2.78 \pm 0.03$} & \multicolumn{1}{c|}{$3.05 \pm 0.04$} & \multicolumn{1}{c|}{$3.15 \pm 0.05$} & \multicolumn{1}{c|}{$2.90 \pm 0.10$} & $1.85 \pm 0.17$ \\
    & \smash{CSD} & \multicolumn{1}{c|}{$0.72 \pm 0.01$} & \multicolumn{1}{c|}{$1.35 \pm 0.01$} & \multicolumn{1}{c|}{$1.90 \pm 0.02$} & \multicolumn{1}{c|}{$2.38 \pm 0.02$} & \multicolumn{1}{c|}{$2.78 \pm 0.03$} & \multicolumn{1}{c|}{$3.05 \pm 0.04$} & \multicolumn{1}{c|}{$3.15 \pm 0.05$} & \multicolumn{1}{c|}{$2.90 \pm 0.10$} & $\textbf{1.84} \pm 0.17$ \\
    & \smash{CiPOT} & \multicolumn{1}{c|}{$0.72 \pm 0.01$} & \multicolumn{1}{c|}{$1.35 \pm 0.01$} & \multicolumn{1}{c|}{$1.90 \pm 0.02$} & \multicolumn{1}{c|}{$2.38 \pm 0.02$} & \multicolumn{1}{c|}{$2.78 \pm 0.03$} & \multicolumn{1}{c|}{$3.05 \pm 0.04$} & \multicolumn{1}{c|}{$3.15 \pm 0.05$} & \multicolumn{1}{c|}{$2.90 \pm 0.10$} & $1.85 \pm 0.17$ \\
    & \smash{RW} & \multicolumn{1}{c|}{$0.73 \pm 0.01$} & \multicolumn{1}{c|}{$1.39 \pm 0.04$} & \multicolumn{1}{c|}{$1.96 \pm 0.04$} & \multicolumn{1}{c|}{$2.45 \pm 0.04$} & \multicolumn{1}{c|}{$2.85 \pm 0.04$} & \multicolumn{1}{c|}{$3.12 \pm 0.04$} & \multicolumn{1}{c|}{$3.24 \pm 0.07$} & \multicolumn{1}{c|}{$2.98 \pm 0.11$} & $1.95 \pm 0.18$ \\
    & \smash{RW$^+$} & \multicolumn{1}{c|}{$0.72 \pm 0.01$} & \multicolumn{1}{c|}{$1.35 \pm 0.01$} & \multicolumn{1}{c|}{$1.91 \pm 0.02$} & \multicolumn{1}{c|}{$2.42 \pm 0.04$} & \multicolumn{1}{c|}{$2.86 \pm 0.09$} & \multicolumn{1}{c|}{$3.19 \pm 0.09$} & \multicolumn{1}{c|}{$3.28 \pm 0.08$} & \multicolumn{1}{c|}{$2.96 \pm 0.11$} & $1.88 \pm 0.17$ \\
    & \smash{HT} & \multicolumn{1}{c|}{$0.72 \pm 0.01$} & \multicolumn{1}{c|}{$1.35 \pm 0.01$} & \multicolumn{1}{c|}{$1.91 \pm 0.02$} & \multicolumn{1}{c|}{$2.39 \pm 0.02$} & \multicolumn{1}{c|}{$2.79 \pm 0.03$} & \multicolumn{1}{c|}{$3.07 \pm 0.04$} & \multicolumn{1}{c|}{$3.19 \pm 0.05$} & \multicolumn{1}{c|}{$2.98 \pm 0.11$} & $1.94 \pm 0.18$ \\
    & \smash{HT$^+$} & \multicolumn{1}{c|}{$0.82 \pm 0.10$} & \multicolumn{1}{c|}{$1.45 \pm 0.09$} & \multicolumn{1}{c|}{$2.10 \pm 0.15$} & \multicolumn{1}{c|}{$2.61 \pm 0.15$} & \multicolumn{1}{c|}{$3.02 \pm 0.13$} & \multicolumn{1}{c|}{$3.27 \pm 0.10$} & \multicolumn{1}{c|}{$3.30 \pm 0.08$} & \multicolumn{1}{c|}{$2.97 \pm 0.11$} & $1.89 \pm 0.17$ \\
    & \smash{DR} & \multicolumn{1}{c|}{$0.72 \pm 0.01$} & \multicolumn{1}{c|}{$1.35 \pm 0.01$} & \multicolumn{1}{c|}{$1.91 \pm 0.02$} & \multicolumn{1}{c|}{$2.40 \pm 0.03$} & \multicolumn{1}{c|}{$2.81 \pm 0.05$} & \multicolumn{1}{c|}{$3.08 \pm 0.05$} & \multicolumn{1}{c|}{$3.25 \pm 0.09$} & \multicolumn{1}{c|}{$3.02 \pm 0.12$} & $1.88 \pm 0.17$ \\
    \bottomrule
  \end{tabular}}
\end{table}

\begin{figure}[h]
    \centering
    \includegraphics[width=0.8\linewidth]{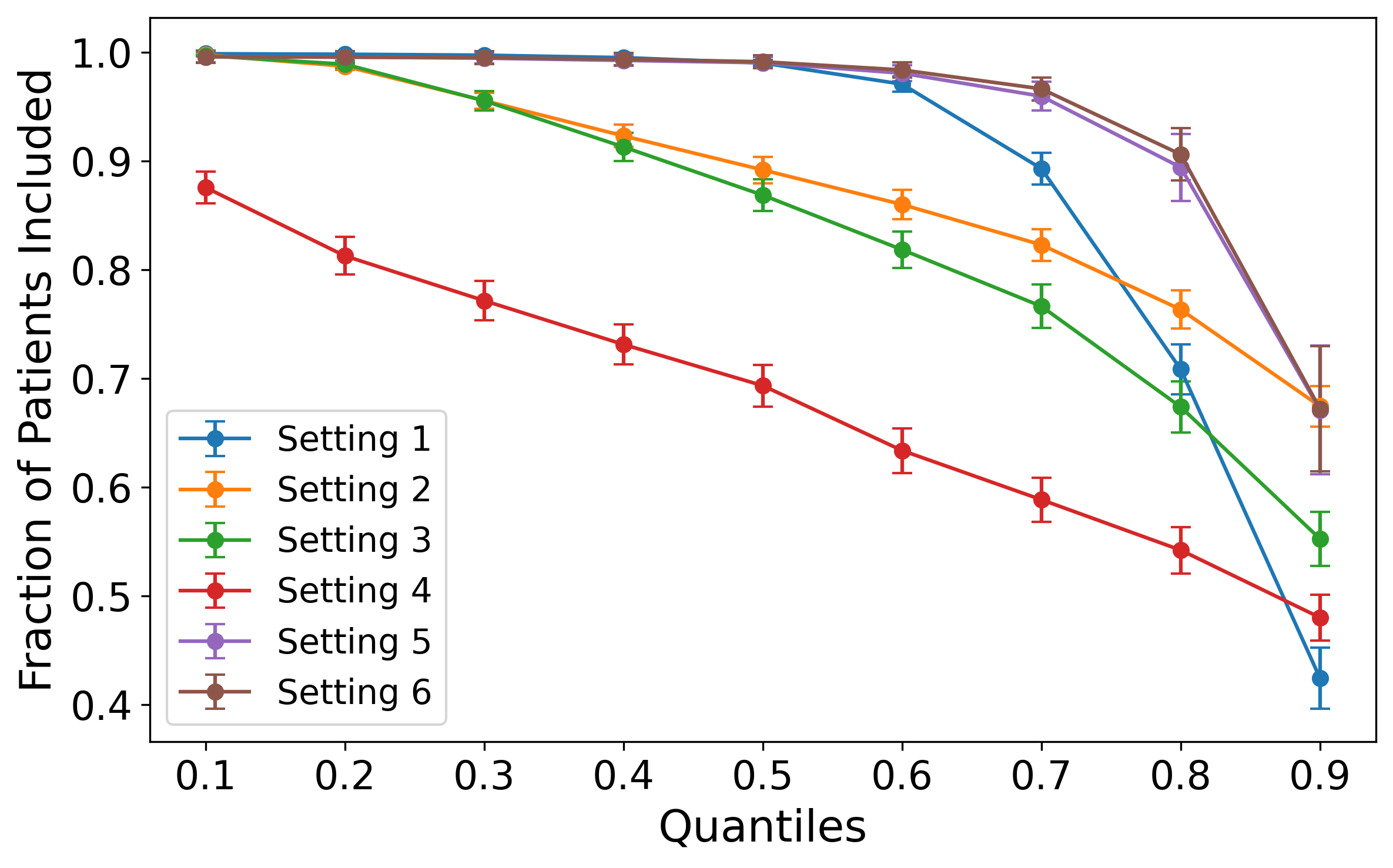}
    \caption{Fraction of total samples included in the computation of the quantile loss across nine different levels. Values shown are the mean over 100 seeds $\phantom{}\pm2$ standard error bars.}
    \label{app:fig:simulation_quantile_included}
\end{figure}

\subsection{Full real-data results}

For completeness, we now present complete results on the real data for both naïve and IPCW evaluation of the metrics as described in Sec.~\ref{sec:metrics}. We also include the RW$^+$-ISR and HT-ISR methods in our results. Table~\ref{app:tab:real_data_results_complete} lists the IPCW and Naïve estimates of AUPIT and IBS. Figure~\ref{app:fig:quantile_losses_pathology_and_rna} visualizes the quantile losses for the pathology reports and RNA sequence models. As described in Sec.~\ref{app:sec:metrics}, when evaluating quantile loss, if the predicted survival curve does not reach the target quantile by the maximum valid prediction time (maximum observed time for the ISR methods) for any one method, we exclude that patient from evaluation for all methods. 

As discussed in Sec.~\ref{sec:real_data}, DR-ISR exhibits the strongest performance for the IPCW estimates, even improving upon the baselines CSD and CiPOT which explicitly target PIT calibration on the AUPIT metric. RW$^+$-ISR also exhibits strong performance, which is expected by improving upon RW-ISR in terms of data efficiency, as discussed in Sec.~\ref{app:sec:new_estimators}. On lower quantiles performance of all methods is quite similar but DR-ISR and RW$^+$-ISR once again improve at higher quantiles.

For the naïve evaluation of the metrics, we see that RW-ISR and CSD exhibit strongest performance across all metrics. Recall, however, that by only evaluating over the uncensored instances, we introduce a distribution shift. In particular, a patient is uncensored if the event time is lower than the censoring time. Intuitively then, the uncensored instances are those patients for whom the event happened sooner than may be expected from their true survival curve. Therefore, methods that underestimate survival probabilities are likely to perform better on naïve evaluation of the metrics. We suspect that both RW-ISR and CSD underestimate survival probabilities given the big gap in their performance on naïve versus IPCW metric evaluations.

{\small \begin{table}[h]
  \caption{IPCW and Naïve estimates of AUPIT, IBS values over the test set for the pathology and RNA-seq model. Values are mean over 100 seeds $\phantom{}\pm2$ standard error.}
  \vspace{2pt}
  \label{app:tab:real_data_results_complete}
  \centering
  \begin{tabular}{lccccc}
    \toprule
    & Method    &  IPCW AUPIT      & IPCW IBS & Naïve AUPIT & Naïve IBS \\
    \midrule
    \multirow{7}{*}{\rotatebox[origin=c]{90}{Pathology}} 
& Deep Cox & $0.0665 \pm 0.0033$ & $0.184 \pm 0.0068$ & $0.213 \pm 0.0013$ & $0.0547 \pm 0.0014$ \\
& CSD & $0.0720 \pm 0.0056$ & $0.185 \pm 0.0069$ & $\textbf{0.163} \pm 0.0020$ & $0.0514 \pm 0.0013$ \\
& CiPOT & $0.0627 \pm 0.0034$ & $0.184 \pm 0.0069$ & $0.205 \pm 0.0013$ & $0.0537 \pm 0.0014$ \\
& RW-ISR & $0.0761 \pm 0.0054$ & $0.189 \pm 0.0082$ & $0.166 \pm 0.0047$ & $\textbf{0.0300} \pm 0.0009$ \\
& RW$^+$-ISR & $0.0615 \pm 0.0036$ & $\textbf{0.168} \pm 0.0064$ & $0.226 \pm 0.0019$ & $0.0501 \pm 0.0014$ \\
& HT-ISR & $0.0657 \pm 0.0042$ & $0.185 \pm 0.0078$ & $0.191 \pm 0.0025$ & $0.0404 \pm 0.0015$ \\
& DR-ISR & $\textbf{0.0538} \pm 0.0035$ & $0.182 \pm 0.0070$ & $0.210 \pm 0.0021$ & $0.0547 \pm 0.0021$ \\
    \midrule
    \multirow{7}{*}{\rotatebox[origin=c]{90}{RNA-seq}} 
& Deep Cox & $0.0371 \pm 0.0020$ & $0.154 \pm 0.0053$ & $0.199 \pm 0.0011$ & $0.0433 \pm 0.0011$ \\
& CSD & $0.0478 \pm 0.0038$ & $0.155 \pm 0.0056$ & $0.158 \pm 0.0020$ & $0.0393 \pm 0.0010$ \\
& CiPOT & $0.0358 \pm 0.0020$ & $0.154 \pm 0.0054$ & $0.198 \pm 0.0017$ & $0.0423 \pm 0.0011$ \\
& RW-ISR & $0.0698 \pm 0.0055$ & $0.156 \pm 0.0071$ & $\textbf{0.152} \pm 0.0050$ & $\textbf{0.0278} \pm 0.0009$ \\
& RW$^+$-ISR & $0.0361 \pm 0.0024$ & $\textbf{0.147} \pm 0.0062$ & $0.198 \pm 0.0021$ & $0.0382 \pm 0.0013$ \\
& HT-ISR & $0.0371 \pm 0.0025$ & $0.158 \pm 0.0055$ & $0.198 \pm 0.0022$ & $0.0427 \pm 0.0012$ \\
& DR-ISR & $\textbf{0.0345} \pm 0.0024$ & $0.152 \pm 0.0055$ & $0.198 \pm 0.0020$ & $0.0417 \pm 0.0011$ \\
    \bottomrule
  \end{tabular}
\end{table}}

\begin{figure}[h!]
    \centering
    \includegraphics[width=0.99\linewidth]{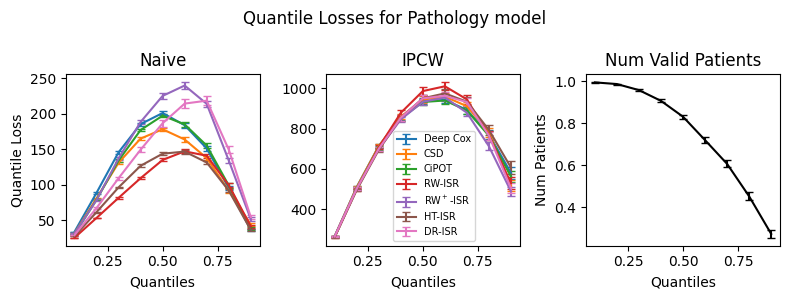}
    \includegraphics[width=0.99\linewidth]{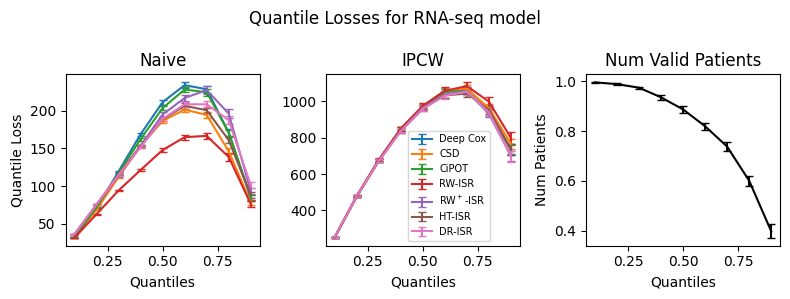}
    \caption{Quantile losses for the pathology reports (top) and RNA (bottom) model. Right panel visualizes proportion of patients for which the survival curve reached the desired quantile across all methods and hence were included in evaluation. See Sec.~\ref{app:sec:metrics} for details. Values shown are mean over 100 seeds $\phantom{}\pm2$ standard error bars.}
    \label{app:fig:quantile_losses_pathology_and_rna}
\end{figure}

\clearpage

\section{Experimental Assets}
\label{app:sec:assets}

The TCGA patient outcomes were obtained from~\citep{Liu2024TCGAOutcomes} which is publicly available under the CC BY-NC-ND license. The pathology reports were obtained from~\citep{Kefelli2024TCGAreports} available under the CC-BY license. The BioClinical BERT model from ~\citep{alsentzer2019BioBERT} is available under the MIT license. The RNA-seq samples were obtained from~\citep{Vivian2017Toil} available under the Apache 2.0 license. Lastly, the BulkFormer model from~\citep{Kang2025BulkFormer} is available under the MIT license. 

The code for the baseline methods was sourced from the official GitHub repositories, available under the MIT license. For training the linear Cox models, we use the official \texttt{lifelines} package, available under the MIT license.




\end{document}